\documentclass{article}

\usepackage{microtype}
\usepackage{graphicx}
\usepackage{subcaption}
\usepackage{booktabs} 
\usepackage{multicol}
\usepackage{multirow}
\usepackage{adjustbox}
\usepackage{xcolor}
\usepackage{float}
\usepackage{dblfloatfix} 
\usepackage{enumitem}
\usepackage{etoc}
\usepackage{titletoc}

\usepackage{hyperref}

\newcommand{\ttree}{\texttt{TreeSearch}}
\newcommand{\tsearch}{\texttt{HillSearch}}
\newcommand{\tmaxsat}{\texttt{MaxSatSearch}}

\usepackage[preprint]{icml2026}

\usepackage{amsmath}
\usepackage{amssymb}
\usepackage{mathtools}
\usepackage{amsthm}
\usepackage[skins]{tcolorbox}

\usepackage[capitalize,noabbrev]{cleveref}

\theoremstyle{plain}
\newtheorem{theorem}{Theorem}[section]

\theoremstyle{definition}

\theoremstyle{remark}

\usepackage[textsize=tiny]{todonotes}

\icmltitlerunning{Failing to Explore: Language Models on Interactive Tasks}

\begin{document}
\etocdepthtag.toc{mtmain}
\twocolumn[
  \icmltitle{Failing to Explore: Language Models on Interactive Tasks}



  \icmlsetsymbol{equal}{*}

  \begin{icmlauthorlist}
    \icmlauthor{Mahdi JafariRaviz}{equal}
    \icmlauthor{Keivan Rezaei}{equal}
    \icmlauthor{Arshia Soltani Moakhar}{equal}
    \\ \vspace{1.5pt}
    \icmlauthor{Zahra Sodagar}{}
    \icmlauthor{Yize Cheng}{}
    \icmlauthor{Soheil Feizi}{}
    \\ \vspace{3pt}
    {\small Department of Computer Science, University of Maryland}
    \\ \vspace{3pt}
    {\small 
    \texttt{\{mahdij, krezaei, asoltan3, zsodagar, yzcheng, sfeizi\}@umd.edu}}
  \end{icmlauthorlist}

  

  \icmlcorrespondingauthor{Mahdi JafariRaviz}{mahdij@umd.edu}


  \vskip 0.3in
]



\printAffiliationsAndNotice{\icmlEqualContribution}

\begin{abstract}
We evaluate language models on their ability to \textit{explore} interactive environments under a limited interaction budget.
We introduce three \textit{parametric} tasks with controllable exploration difficulty,
spanning continuous and discrete environments.
Across state-of-the-art models,
we find systematic under-exploration and suboptimal solutions,
with performance often significantly worse than simple explore–exploit heuristic baselines and scaling weakly as the budget increases.
Finally, we study two interventions: splitting a fixed budget into $p$ parallel executions, which surprisingly improves performance despite a no-gain theoretical result for our tasks,
and periodically summarizing the interaction history, which preserves key discoveries and further improves exploration \footnote{Code is at \href{https://github.com/mahdi-jfri/explore-exploit-bench}{github.com/mahdi-jfri/explore-exploit-bench}.}.
\end{abstract}

\section{Introduction}
\label{sec:intro}
\begin{figure*}[t]
    \centering
    \includegraphics[width=\textwidth]{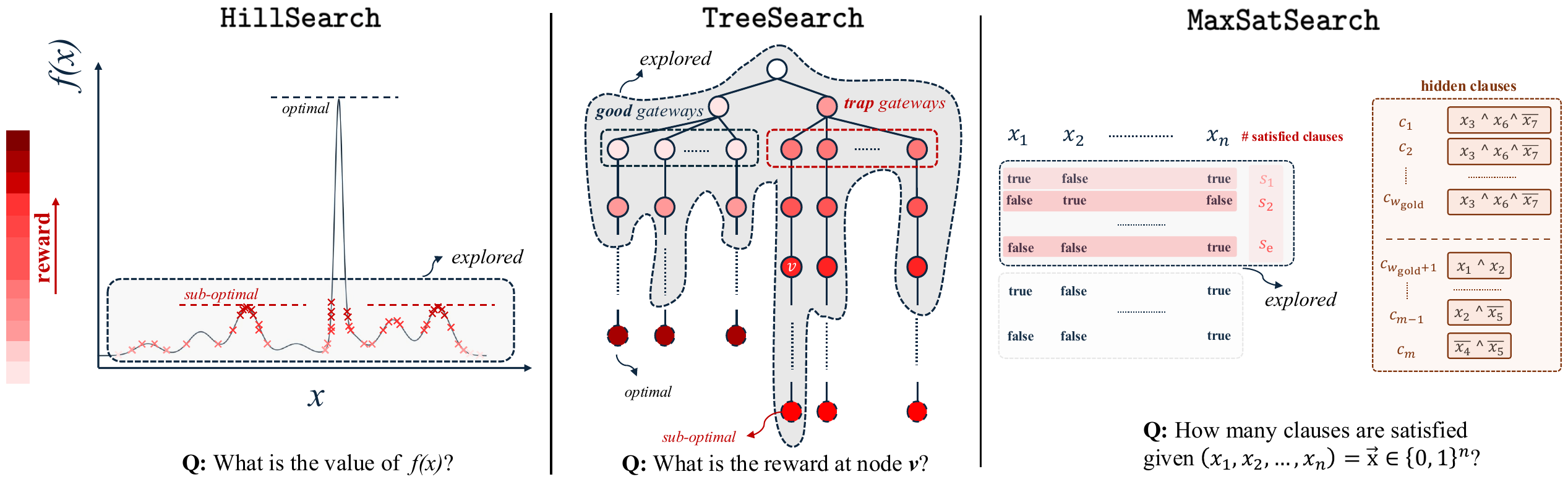}
    \caption{
    Overview of our environments.
    Across all tasks, the model \textbf{interacts with a partially observed environment} under a \textbf{limited budget} and must balance \textbf{exploration with identifying optimal solutions} while avoiding sub-optimal decisions.
    \textbf{Left (\tsearch{})}: there is a hidden function $f(x)$.
    At each interaction, the model observes the value of the function at a chosen point.
    The model should avoid identifying suboptimal solutions and explore sufficiently to find optimal points.
    \textbf{Middle (\ttree{})}: there is a tree in which each node has a hidden reward.
    At each interaction, the model can query the reward of a node that has already been explored or is adjacent to explored nodes.
    The model should not get trapped by early high rewards from trap gateways and must explore enough to eventually find optimal leaves of good gateways.
    \textbf{Right (\tmaxsat{})}: there are $m$ hidden clauses, among which one important clause exists and is repeated $w_{\text{gold}}$ times.
    At each iteration, the model queries the number of satisfied clauses for a given assignment of the variables.
    While the model may increase the number of satisfied clauses by modifying variable assignments,         achieving a high reward requires satisfying the important clause, necessitating sufficient exploration of assignments for variables involved.
    }
    \label{fig:teaser}
    \vspace{-5pt}
\end{figure*}

\textit{Agentic} AI is an increasingly prominent direction, where language models (LMs) act as agents that go beyond a single static response by iteratively taking actions—e.g., calling tools and APIs \cite{shen2024llm},
invoking other LMs \cite{tran2025multiagentcollaborationmechanismssurvey}, processing feedback, and choosing follow-up steps.
This paradigm is increasingly used in practical settings such as tool-using LMs \citep{schick2023toolformer, yao2023react}, web-based agents in realistic web environments \citep{yao2022webshop,zhou2024webarena,ning2025surveywebagentsnextgenerationai}, software engineering agents \citep{jimenez2024swebench,liu2025largelanguagemodelbasedagents}, and grounded robotic decision making \citep{ahn2022saycan,li2025largelanguagemodelsmultirobot}.
In these settings, agents operate in \textit{environments} with partial information and costly interactions (e.g., a physical robot action, an experiment, or a paid API call).
Achieving strong performance therefore requires effective use of a limited \textit{interaction budget}, which raises a critical need to evaluate whether LMs can act well in such environments under resource constraints.

Most current evaluations of LMs focus on static performance \cite{rein2023gpqagraduatelevelgoogleproofqa,romanou2024includeevaluatingmultilinguallanguage,salazar2025kaleidoscopeinlanguageexamsmassively,lin2025zebralogicscalinglimitsllms}.
In this work, we focus on \textit{discovery}: the model interacts with a black-box environment that contains hidden \textit{solutions},
and must use a limited budget of queries to an \textit{oracle} to iteratively learn about the environment and identify high-quality solutions.
More specifically, we study how well LMs can \textit{explore} such environments---i.e., how effectively they probe to discover better solutions, rather than committing early to sub-optimal ones.

We provide a suite of three \textit{parametric} tasks, \tsearch{}, \ttree{}, and \tmaxsat{}, each specifying a~family of problem instances that can be instantiated by choosing interpretable parameters; importantly, these parameters explicitly control the exploration required to discover high-quality solutions, making the explorative difficulty tunable.
These include discovering the maximum of a hidden continuous function (\tsearch{}),
discovering high-reward nodes in a tree with hidden node values (\ttree{}) and
discovering a high-satisfaction variable assignment under a hidden clause structure (\tmaxsat{})
(See Figure~\ref{fig:teaser} for an~illustration).
To require meaningful exploration, we explicitly incorporate \textit{traps}:
sub-optimal solutions that are easy to find early in the environment,
but can distract the model from continued exploration and lead to premature commitment.
All tasks are easy to describe in short natural language, resembling familiar problems in LMs' training data.
The model is given a total budget of $N$ interactions (i.e., $N$ rounds of querying the oracle) to gather information and discover solutions.

We evaluate a suite of recent LMs (including reasoning models) on instances of our task suite,
and compare their performance to simple explore--exploit heuristic baselines.
These baselines are intentionally lightweight (a few lines of code) and easy to describe,
and they serve as a strong sanity check that additional interaction budget indeed translates into improved discovery.
Our evaluation reveals consistent underperformance of LMs compared to these simple baselines.
This gap signals systematic \textit{under-exploration}:
models often commit early to trap solutions and fail to use additional interaction budget effectively.
In particular, we observe that performance scales weakly with $N$,
despite the fact that the environments contain better solutions that can be discovered with further exploration.

To improve performance in budgeted interactive tasks, we propose two interventions.
(1) {\textit{Parallel} budget allocation:} we split the total budget into $p$ independent threads, run the model separately in each, and select the best solution found across threads.
Although we prove that, on our tasks, parallelizing a fixed budget into $p$ threads cannot improve upon an optimal single-thread strategy, we nonetheless observe consistent gains for LMs, suggesting their single-thread behavior is far from optimal.
We further provide theory clarifying when such gains can arise.
(2) {Periodic \textit{summarization}:} we periodically summarize the interaction history every fixed number of interactions,
remove the full context, and continue from the summary.
This guides the model to retain the key takeaways while reducing context-related failures, improving discovery and increasing performance.

\paragraph{Contributions.}
Our contributions are as follows:
\begin{itemize}
    \item We go beyond static evaluation by introducing a suite of parametric, controllable tasks that make LM exploration \textit{measurable} and \textit{difficulty-tunable} across continuous and discrete/combinatorial environments.
    \item We find that LMs—including frontier GPT-5 models—underperform simple explore–exploit methods, committing prematurely to sub-optimal solutions and scaling weakly with increased interaction budget.
    \item We study two lightweight interventions—parallelizing a fixed budget into independent threads and periodically summarizing the interaction history—that consistently improve explorative performance.
\end{itemize}
\section{Related Work}
\label{sec:related_work}


\paragraph{LMs as combinatorial optimizers and algorithmic simulators}
Recent studies demonstrate that LMs can achieve strong performance at solving combinatorial problems \cite{sun2025co, da2025large, jiang2025large, yang2025heuragenix}, performing in-context algorithmic simulation \cite{zhou2022teachingalgorithmicreasoningincontext, lyu2024large}, and executing complex optimization tasks \cite{jiang2025largelanguagemodelsendtoend}.
Collectively, these capabilities suggest that LMs are well-equipped for the requirements of our study, providing a strong foundation for their application in our setting.

\paragraph{Summarization and parallelization for LM exploration}
Both parallelization and summarization have been widely studied for LMs.
Parallel sampling is commonly used to encourage exploration (e.g., pass@$k$).
However, prior work on parallel inference typically increases the total compute or budget rather than splitting a fixed budget across parallel branches~\cite{wang2023selfconsistencyimproveschainthought, chow2024inference}.
In contrast, our work allocates a fixed budget across branches. 
Other work trains models specifically for parallel reasoning~\cite{wen2025parathinker, zheng2025parallel}, which is outside the scope of this paper.
Summarization is widely used to compress history to fit within the context window~\cite{fei2024extending, wang2024context, wu2025resum}, but its impact on exploration behavior is rarely measured.

\paragraph{LMs on interactive benchmarks} Closest to our setup are single-player interactive environments and navigation-style tasks.
Single-player games and puzzles often require little exploration beyond deterministic reasoning or planning \cite{gong2024mindagent, long2025puzzleplex}.
Maze and navigation benchmarks study sequential decision-making under partial observability \cite{abdulhai2023lmrl, chevalier2018babyai, bianchi2024well, wu2023smartplay, xi2025agentgym, einarsson2025mazeevalbenchmarktestingsequential}, but typically emphasize goal reaching, memory, or control rather than explore–exploit tradeoffs.
Some other benchmarks (e.g., multi-turn puzzles, feedback-based learning, graph navigation, bandits, or black-box optimization) \cite{badola2025multiturnpuzzlesevaluatinginteractive, cheng2023llfbenchbenchmarkinteractivelearning, wu2023smartplay} are closer to our setup. However, they are primarily designed to evaluate multi-turn conversational behavior, in-context learning, or agentic capabilities, rather than exploration itself. Moreover, they do not provide a controlled, parameterized family of tasks that enables isolating and interpreting how LLMs explore.
\paragraph{Premature commitment and limited self-correction}
LMs often latch onto an initial idea and struggle to revise it. \citet{wen2025parathinker} show that seeding a rollout with the first few tokens of an incorrect trajectory can significantly reduce accuracy. Relatedly, \citet{huang2023large} describe this as limited self-correction,
and \citet{zhang2023languagemodelhallucinationssnowball} find that models often preserve consistency with earlier errors over factual truth.
\section{Exploring Interactive Environments}

\label{sec:problem_formulation}

In this work, we consider a model $M$ that interacts with an~environment induced by an underlying \textit{problem instance},
containing a set of solutions that are initially unrevealed (\textit{hidden})
and only become partially revealed through interaction.
The model $M$ begins with prior knowledge of the environment’s structure and the prescribed interaction protocol.
The environment is inherently explorative,
requiring the model to probe the problem instance to find high-quality solutions.
To simulate real-world interactive constraints, we impose a fixed interaction budget $N$.

Formally, the model $M$ interacts with an oracle $O$ over $N$ discrete rounds,
generating a sequence of queries $q_1, q_2, \dots, q_N$.
At each round $i$, the oracle returns feedback determined by the current query and the \textit{static} underlying problem instance.
Although the environment is static---in the sense that the oracle’s feedback is a deterministic function of the query and a fixed state---
the interaction is \emph{sequentially constrained}:
the set of admissible queries at round $i$ may depend on the previously issued queries $q_{<i}$.
The overall performance is measured by a reward function that quantifies the \textit{quality of the best solution discovered by the end of $N$ rounds},
reflecting the model's ability to explore effectively and avoid converging on suboptimal solutions.

As shown in Figure~\ref{fig:teaser}, we introduce a suite of three tasks:
\tsearch{}, which features a continuous environment;
\ttree{}, which features a discrete, graph-structured environment; and \tmaxsat{},
which features a discrete combinatorial environment.
We evaluate two aspects of a~model's behavior in these settings:
(i)~their absolute performance compared to mostly simple baseline algorithms that implement variants of explore--exploit strategies; and
(ii)~how increasing the search budget $N$ helps models achieve better solutions, and how this improvement compares to the growth observed for baseline algorithms.
Acceptable performance is characterized by satisfying {(i)}~through achieving comparable performance,
and {(ii)}~by exhibiting a similar growth rate as budget increases.


\section{Tasks}
\label{section:tasks}

In this section, we formally define our task suite.
For each task, we specify the environment induced by the underlying problem instance; the model's interaction protocol with the oracle;
the queries and the feedback revealed at each round;
and the final reward under a fixed interaction budget $N$.
We also describe the solution space and the parameterization of each problem instance, including how instances are generated.
Note that the task description is provided to the model before the interaction begins.
We also \textit{explicitly instruct the model to maximize the task reward }(as defined for each environment) under the interaction budget $N$.
For each task, we also provide a \textit{simple} explore--exploit algorithm as a baseline for interacting with the environment.

\subsection{\tsearch}
In this task, the problem instance is a hidden smooth function $f : [0,10] \to \mathbb{R}$ constructed as a sum of Gaussian \textit{hills}.
At each interaction round, the model queries a point $x \in [0,10]$, and the oracle reveals the value $f(x)$.
The final reward is the maximum value observed over $N$ rounds.

We construct $f$ to contain many moderate peaks (\textit{decoys}) and a single very high but narrow peak (\textit{needle}).
See Figure~\ref{fig:teaser} (Left) for an illustration of an interaction on this instance.
This setup can cause a model to focus its budget around an early local maximum, rather than exploring the domain sufficiently to locate the needle.
The task becomes harder as the needle becomes narrower (e.g., decreasing its Gaussian width / standard deviation),
since hitting it requires more effective exploration.
We refer to Appendix~\ref{app:subsec:task} for more details on the task setup.

\paragraph{A simple explore--exploit baseline}
We first query $\alpha N$ points drawn using stratified random sampling.
Let $\hat{x}$ denote the best point found so far.
We then use the remaining budget to query points drawn uniformly from a small window (of size $\beta$) around $\hat{x}$ and whenever a better point is found, $\hat{x}$ is updated accordingly
(i.e., we locally refine around the current best point). We use $\alpha = 0.8$ and $\beta = 0.5$.


\subsection{\ttree}
In this task, the problem instance is a rooted tree where each node has a hidden reward, and the oracle reveals rewards upon query.
At each interaction round, the model queries the reward of a node (the root's reward is $0$ and is revealed at the beginning),
and gradually increases its knowledge of the tree under the constraint that explored nodes must always form a connected component.
The final reward is the best (maximum) reward observed over the interaction episode.

We construct tree instances by assembling a set of disjoint chains.
Some chains, which we call \emph{traps}, yield relatively high reward early on
---providing short-term positive momentum by increasing for only a small number of nodes---
but then improve only sparsely and do not grow substantially along the remainder of the chain.
In contrast, \emph{good} chains start with low reward but increase steadily, reaching high (optimal) rewards near the leaves.
Throughout, we design rewards to vary smoothly along edges: the reward difference between any two adjacent nodes is at most $4$.

More specifically, the root has two types of children: \emph{trap gateways} and \emph{good gateways}.
From each gateway node, we attach \texttt{fanout} disjoint simple paths (i.e., independent chains).
Trap gateways expand into chains that are intentionally longer than those of good gateways (which are shorter).
As a result, once a model enters a trap chain, it tends to spend its budget traversing these nodes,
leaving less budget to explore good branches and reach higher-reward nodes.
Conversely, good gateways start low but improve along their chains, so reaching high-reward leaves requires sustained exploration.
See Figure~\ref{fig:teaser} (Middle) for an illustration.
Instances are harder when traps are more common, as this increases the search cost to locate a good chain.
Difficulty also increases when traps are more deceptive---for example, offering higher initial rewards or sustaining increases for more steps---as this delays the evidence that their long-term growth is limited.
See Appendix~\ref{app:subsec:task} for further details.

\textbf{A simple explore--exploit baseline}
As a simple heuristic, we randomly select a node to explore, preferring nodes whose parent achieved a higher reward.
We implement this preference with a softmax-style sampling rule: higher parent rewards make a node more likely to be chosen.
Node probabilities are defined via a softmax over parent rewards with temperature $\tau$. $\tau$ controls how strongly we favor high-reward parents.
We use $\tau=4$ in all experiments.

\subsection{\tmaxsat}

In this task, the problem instance comes from the Boolean Satisfiability (\textsc{Sat}) family of problems over $n$ variables with $m$ hidden clauses.
At each interaction round, the model proposes a full assignment
$\mathbf{x} = (x_1, x_2, \dots, x_n)$ where $x_i \in \{\texttt{true}, \texttt{false}\}$,
and the oracle reveals the number of satisfied clauses (out of $m$) under $\mathbf{x}$.
The final reward is the maximum number of satisfied clauses over the $N$ interactions (i.e., over the $N$ proposed assignments).

The hidden formula includes $m$ clauses, where each clause is an \textsc{And} of literals (variables or their negations).
We generate clauses so that achieving a high score requires exploration.
More specifically, we include a \emph{gold clause} containing $k_{\text{gold}}$ literals, and repeat this gold clause $w_{\text{gold}}$ times.
The remaining $m - w_{\text{gold}}$ clauses each contain $k_{\text{other}}$ variables and only have variables that do not appear in the gold clause.
We guarantee that there exists an assignment $\mathbf{x}^*$ that satisfies all $m$ clauses.
Crucially, obtaining a high score requires satisfying the gold clause, which amounts to finding the correct assignment to its $k_{\text{gold}}$ variables; since these variables do not appear in any other clause, the feedback from non-gold clauses provides no information about them, and the model must effectively \emph{explore} by trying assignments until it hits one that satisfies the gold clause.
See Figure~\ref{fig:teaser} (Right) for an illustration of the task and Appendix~\ref{app:subsec:task} for more details on it.
Difficulty increases with larger $k_{\text{gold}}$: a uniformly random assignment satisfies the gold clause with probability $2^{-k_{\text{gold}}}$,
making accidental discovery increasingly rare as $k_{\text{gold}}$ grows.
Difficulty also typically increases with larger $k_{\text{other}}$,
which introduces more hidden constraints that must be satisfied simultaneously.

\textbf{A simple explore--exploit baseline}
We first query $\alpha N$ random assignments, and then in the remaining rounds, we take the best assignment found so far, flip the value of one randomly chosen variable, and query the resulting assignment.
This gradually evolves the assignment toward better ones. We use $\alpha =0.5$ in our experiments.

\textbf{Details on baselines} We refer to Appendix~\ref{app:baseline_details} for baseline pseudo-code and hyperparameter choices.
\newcommand{\updelta}[1]{\textcolor{blue}{\tiny($\uparrow$#1\%)}}
\newcommand{\downdelta}[1]{\textcolor{red}{\tiny($\downarrow$#1\%)}}

\newcommand{\relbaseline}[1]{\textcolor{purple}{\tiny(#1\%)}}

\begin{table*}[t]
\centering
\scriptsize
\caption{
Reward of language models on our tasks: \textbf{they achieve sub-optimal reward across all tasks}, compared to simple {explore-exploit} baselines.
Results are reported on an instance of each task with various interaction budgets.
We report the model reward and compare it to the baseline.
Relative performance to the baseline is shown in {\tiny \textcolor{purple}{$(x\%)$}}.
The last three models are evaluated in reasoning mode.
}
\label{tab:absolute_p1}
\begin{adjustbox}{width=0.9\linewidth}
\begin{tabular}{l|cc||ccc||cc}
\toprule
 & \multicolumn{2}{c||}{\tsearch} & \multicolumn{3}{c||}{\ttree} & \multicolumn{2}{c}{\tmaxsat} \\
Model & $N=36$ & $N=48$ & $N=36$ & $N=48$ & $N=60$ & $N=36$ & $N=48$ \\
\midrule
\midrule
{\tiny \texttt{Qwen2.5-7B-Instruct}} & 0.30 \relbaseline{32} & 0.33 \relbaseline{34} & 0.71 \relbaseline{75} & 0.71 \relbaseline{74} & 0.72 \relbaseline{74} & 0.38 \relbaseline{50} & 0.45 \relbaseline{54} \\
{\tiny \texttt{Qwen3-4B-Instruct}} & 0.20 \relbaseline{21} & 0.27 \relbaseline{28} & 0.75 \relbaseline{80} & 0.91 \relbaseline{95} & 0.95 \relbaseline{98} & 0.41 \relbaseline{53} & 0.39 \relbaseline{47} \\
{\tiny \texttt{Qwen3-8B}} & 0.71 \relbaseline{75} & 0.59 \relbaseline{61} & 0.64 \relbaseline{68} & 0.68 \relbaseline{71} & 0.68 \relbaseline{70} & 0.52 \relbaseline{68} & 0.48 \relbaseline{57} \\
{\tiny \texttt{gemini-2.5-flash-lite}} & 0.36 \relbaseline{38} & 0.41 \relbaseline{43} & 0.64 \relbaseline{68} & 0.63 \relbaseline{66} & 0.76 \relbaseline{78} & 0.54 \relbaseline{70} & 0.62 \relbaseline{75} \\
{\tiny \texttt{gpt-5-nano}} & 0.41 \relbaseline{43} & 0.31 \relbaseline{32} & 0.69 \relbaseline{74} & 0.69 \relbaseline{72} & 0.79 \relbaseline{82} & 0.56 \relbaseline{73} & 0.62 \relbaseline{74} \\
{\tiny \texttt{gpt-5-mini}} & 0.85 \relbaseline{90} & 0.77 \relbaseline{79} & 0.67 \relbaseline{71} & 0.64 \relbaseline{67} & 0.73 \relbaseline{75} & 0.61 \relbaseline{79} & 0.67 \relbaseline{80} \\
{\tiny \texttt{gpt-5}} & 0.85 \relbaseline{90} & 0.72 \relbaseline{75} & 0.58 \relbaseline{62} & 0.63 \relbaseline{65} & 0.77 \relbaseline{79} & 0.68 \relbaseline{88} & 0.78 \relbaseline{93} \\
\midrule
{\tiny \texttt{Qwen3-8B-medium}} & 0.30 \relbaseline{32} & 0.29 \relbaseline{30} & 0.72 \relbaseline{77} & 0.70 \relbaseline{73} & 0.85 \relbaseline{88} & 0.48 \relbaseline{62} & 0.54 \relbaseline{64} \\
{\tiny \texttt{gpt-5-nano-medium}} & 0.48 \relbaseline{51} & 0.48 \relbaseline{49} & 0.65 \relbaseline{69} & 0.68 \relbaseline{70} & 0.82 \relbaseline{84} & 0.54 \relbaseline{70} & 0.54 \relbaseline{64} \\
{\tiny \texttt{gpt-5-mini-medium}} & 0.61 \relbaseline{64} & 0.58 \relbaseline{59} & 0.53 \relbaseline{56} & 0.54 \relbaseline{56} & 0.68 \relbaseline{70} & 0.59 \relbaseline{77} & 0.66 \relbaseline{79} \\
\midrule
{\tiny \texttt{explore-exploit baseline}} & 0.94 & 0.97 & 0.94 & 0.96 & 0.97 & 0.77 & 0.84 \\
\bottomrule
\end{tabular}
\end{adjustbox}
\end{table*}

\begin{figure*}[b]
  \centering
  \begin{subfigure}[t]{0.32\textwidth}
    \centering
    \includegraphics[width=\linewidth]{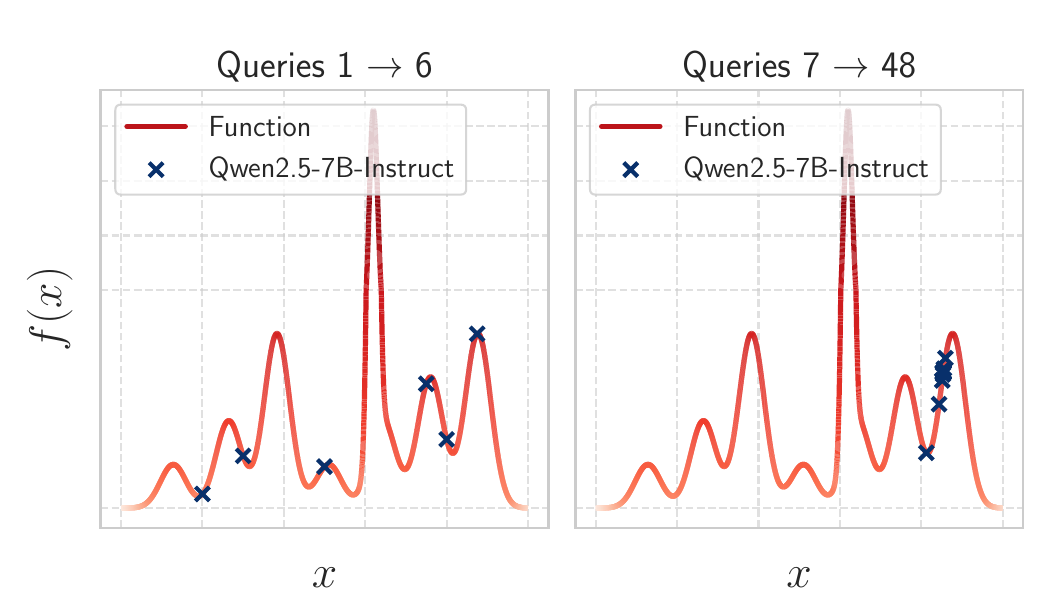}
    \caption{\tsearch}
  \end{subfigure}\hfill
  \begin{subfigure}[t]{0.32\textwidth}
    \centering
    \includegraphics[width=\linewidth]{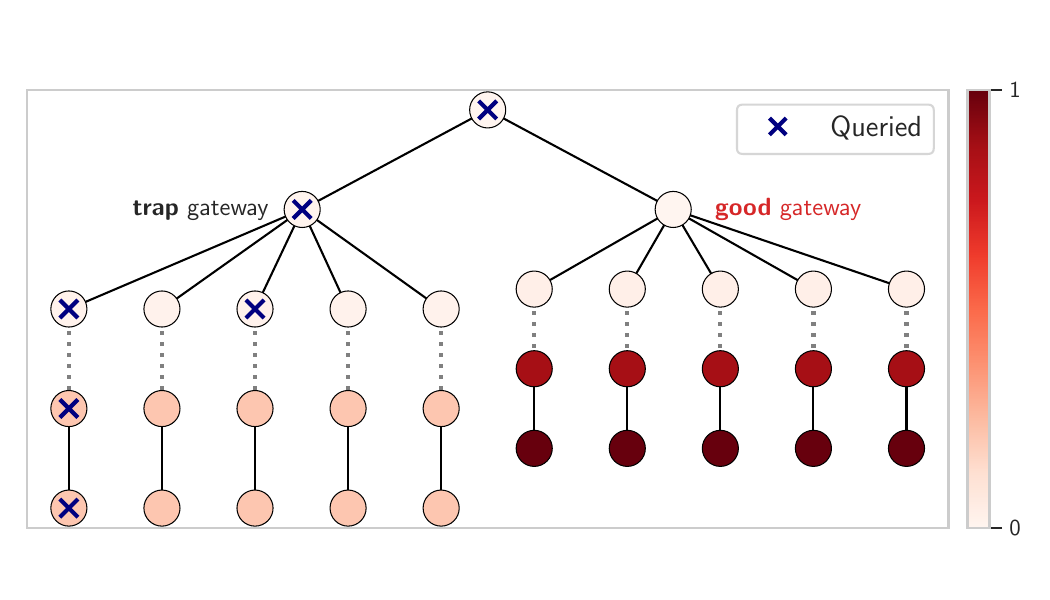}
    \caption{\ttree}
  \end{subfigure}\hfill
  \begin{subfigure}[t]{0.32\textwidth}
    \centering
    \includegraphics[width=\linewidth]{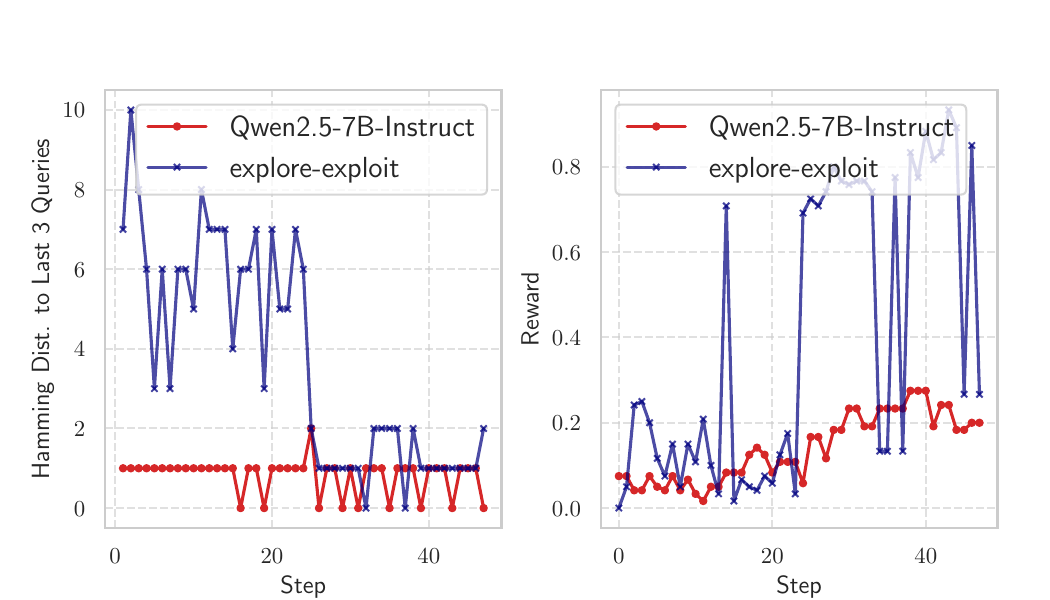}
    \caption{\tmaxsat}
  \end{subfigure}\hfill
  
  \caption{
        Visualization of \texttt{Qwen2.5-7B-Instruct} interactions during a failed episode in our environments ($N=48$).
        \textbf{Left (\tsearch):} Queries (crosses) on the hidden function. Early steps ($1$--$6$) explore the space, while later steps ($7$--$48$) cluster near local maxima.
        \textbf{Middle (\ttree):} Darker nodes denote higher reward. The model descends a trap gateway branch. 
        \textbf{Right (\tmaxsat):} Minimum Hamming distance to the last three queries and reward per step.
        The model makes only small variations with limited gains, whereas the baseline shifts from broad exploration to local refinement. See Appendix~\ref{app:interpretability} for additional visualizations.
  }
  \label{fig:interp}
\end{figure*}

\section{Experiments}
\label{sec:experiments}

In this section, we instantiate tasks from our suite and evaluate both LLMs and LRMs under a fixed interaction budget $N$.
We compare model performance against the simple explore--exploit baselines (discussed in \S\ref{section:tasks}) and study how reward scales with $N$ (\S\ref{subsec:eval}).
We then evaluate two lightweight interventions---parallel threads (\S\ref{subsec:parllel}) and periodic summarization (\S\ref{sec:summary})---and finally test robustness to task difficulty variations (\S\ref{subsec:difficulty}).
We defer full experimental details (task instances, models list, and interaction protocol) to the end of this section (\S\ref{subsec:experimental_setup}).

\subsection{Evaluating Models on Tasks}
\label{subsec:eval}

For each task, we run the models and the baselines with different interaction budgets ($N$).
We normalize rewards to the $[0,1]$ range by dividing the achieved rewards by the maximum possible reward of the instance.
Table~\ref{tab:absolute_p1} shows evaluation results on an instance of each task (see \S\ref{subsec:experimental_setup} for details).
As seen there, models are consistently and significantly outperformed by simple baselines that implement simple explore--exploit strategies,
across all tasks and budgets.
These results indicate that even when capable of reasoning, the models often fail to explore effectively and consequently converge to sub-optimal solutions within the environment.
We refer to Appendix~\ref{app:experiment_full:add-instances} and Appendix~\ref{app:fifty_instance} for experiments on $50$ instances of each task, which confirm the same overall trends.
Next,
we interpret a model's behavior (\texttt{Qwen2.5-7B-Instruct})
in our tasks by analyzing episodes where it achieves sub-optimal solutions.

\textbf{\tsearch{}: Models spend their budget around local maxima}
In \tsearch{}, models often commit early to a local maximum and then spend most of their remaining budget querying nearby points.
Figure~\ref{fig:interp}(a) visualizes the points queried by the model in an episode where the global maximum is not found.



\begin{figure*}[t]
  \centering
  \begin{subfigure}[t]{0.32\textwidth}
    \centering
    \includegraphics[width=\linewidth]{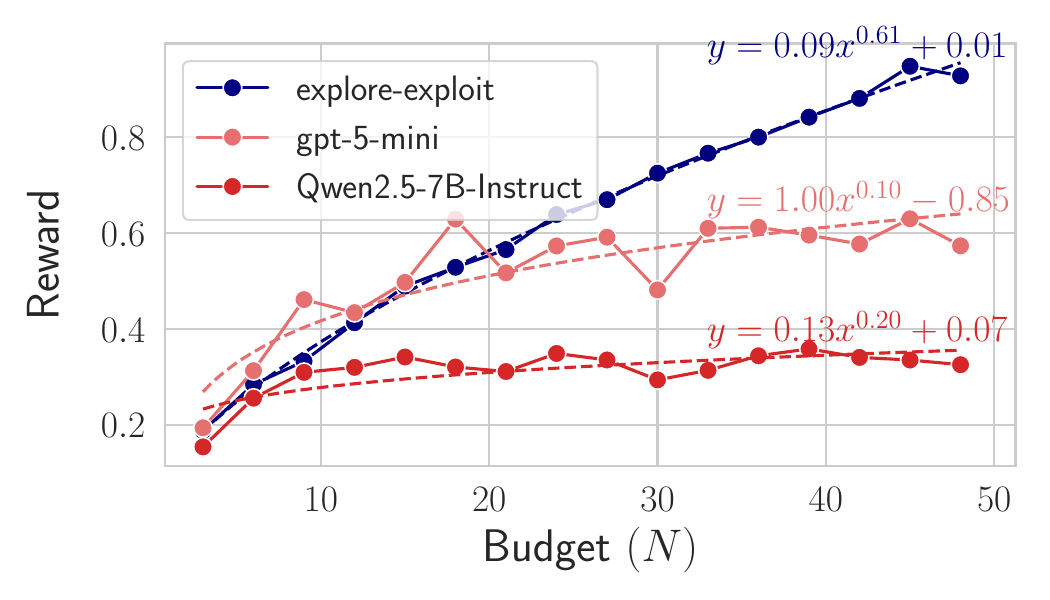}
    \caption{\tsearch}
  \end{subfigure}\hfill
  \begin{subfigure}[t]{0.32\textwidth}
    \centering
    \includegraphics[width=\linewidth]{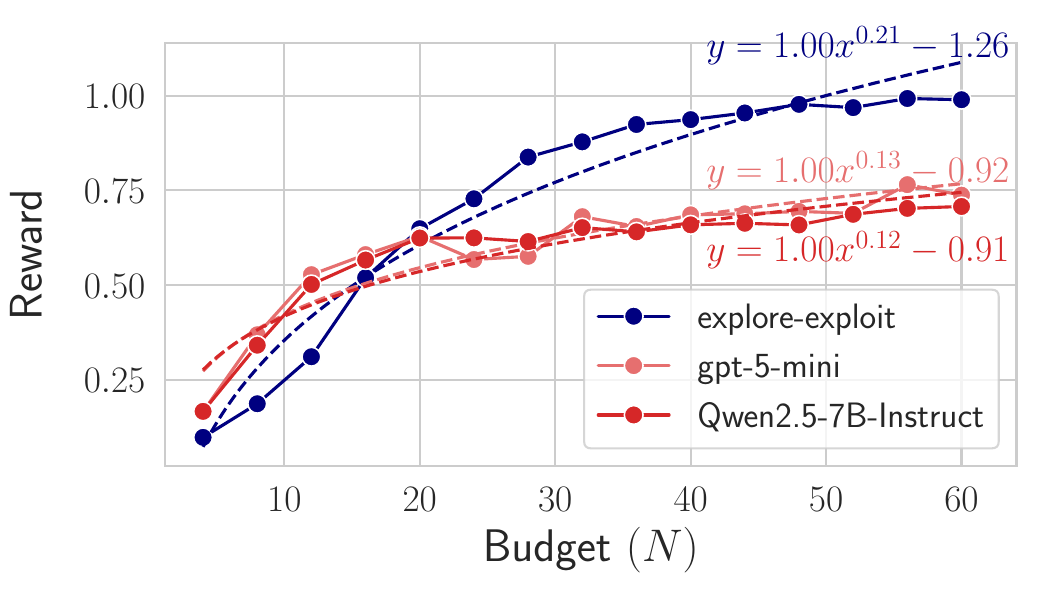}
    \caption{\ttree}
  \end{subfigure}
  \begin{subfigure}[t]{0.32\textwidth}
    \centering
    \includegraphics[width=\linewidth]{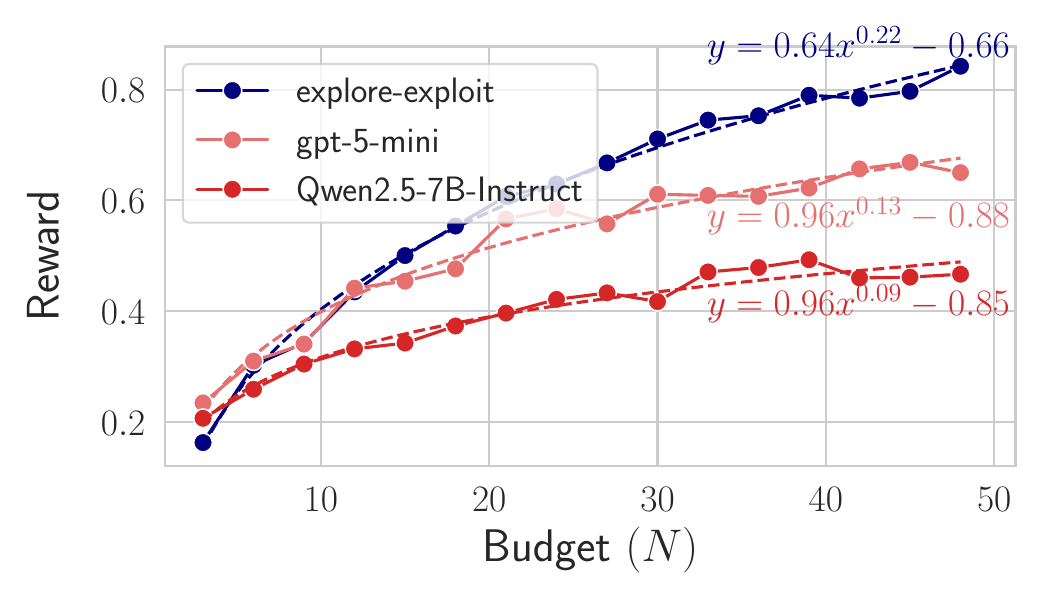}
    \caption{\tmaxsat}
  \end{subfigure}\hfill
  
  \caption{
  Scaling behavior of models and explore-exploit baseline rewards as a function of the interaction budget.
  Despite their simplicity, the baselines exhibit stronger reward growth as the budget increases.
  \textbf{In contrast, the models show limited improvement with additional budget, indicating inefficient use of interactions and a tendency to prematurely exploit sub-optimal solutions}.
  This gap suggests persistent under-exploration and poor budget utilization by the models.
  }
  \label{fig:scaling_law_curves}
  \vspace{-5pt}
\end{figure*}

\paragraph{\ttree{}: Models commit to depth-first traversal}
In \ttree{}, models often explore multiple branches but quickly commit to one once chosen.
Figure~\ref{fig:interp}(b) shows an episode where, after entering a branch, the model continues descending to a leaf regardless of branch quality.
As a result, if it enters a trap gateway, it spends most of its remaining budget on that path and ends at a low-reward leaf.

\vspace{-8pt}

\paragraph{\tmaxsat{}: Models explore locally around their initial guess}

For \tmaxsat{}, we observe that the model attempts to increase the number of satisfied clauses by modifying only a small number of variables at each step.
However, because our setup repeats a specific gold clause, failing to satisfy it prevents the model from achieving high rewards.
While a non-local change (e.g., random re-initialization) may satisfy the gold clause, local changes provide limited signal on how to do so.
Figure~\ref{fig:interp}(c) shows that the Hamming distance (defined as the number of variables with differing assignments) between consecutive interactions remains low throughout an episode.
This confirms that the model is restricted to local neighborhoods, yielding only incremental gains and missing the global optimum.

\paragraph{How do models leverage the interaction budget?}

To systematically understand how performance in our task setup grows with the interaction budget,
we evaluate \texttt{Qwen-2.5-7B-Instruct} and \texttt{gpt-5-mini} across tasks with different values of interaction budget ($N$).
Figure~\ref{fig:scaling_law_curves} shows the performance of the models and baselines as a function of $N$.
As seen there, model performance does not scale in the same way as the baselines; in particular, its growth rate is substantially smaller.
This suggests that additional budget is often not used effectively to achieve better results:
even with more interactions, the model struggles to improve its solutions, leading to an increasing gap from simple explore--exploit baselines as $N$ grows.

\subsection{Intervention: Parallel Threads}
\label{subsec:parllel}

As seen in Figure~\ref{fig:scaling_law_curves}, 
simply allowing the model to interact longer with the environment does not substantially improve performance.
We hypothesize that splitting the interaction budget into several independent threads,
and then merging their outcomes---i.e., taking the best solution found across threads
\footnote{This matches \texttt{best-of-$N$} selection in the RLHF literature: sample $N$ candidates and return the highest-scoring one under a proxy reward model \citep{stiennon2020summarize,liu2024statistical}.}
---may yield improved performance.
More specifically, we consider $p$ parallel threads with no interaction between them,
each equipped with an interaction budget of $N/p$ on the same problem instance.

In what follows, we prove that splitting the budget $N$ into $p$ independent threads and selecting the best outcome among them is a valid interaction strategy (\textit{parallel} method),
and that it provides no theoretical advantage over a single interaction trace of length $N$.
Therefore, this method can be compared fairly against other methods, including the standard single-thread interaction with budget $N$.

\vspace{-3pt}
\paragraph{\tsearch{}: Queries of parallel threads can be asked in a single-thread interaction}
Consider $p$ parallel threads, each querying $N/p$ points.
Let $[p] := \{1,2,\dots,p\}$ and $\mathcal{X} := [0,10]$ denote the query domain.
For each thread $i \in [p]$, let $X_i \subseteq \mathcal{X}$ denote the set of queried points, with $|X_i| \le N/p$.
Thread $i$ achieves final reward $\max_{x \in X_i} f(x)$.
We merge the solutions by taking the best one, achieving
$
f_{\text{parallel}}^* \;=\; \max_{i \in [p]} \; \max_{x \in X_i} f(x).
$

Now consider a single-thread interaction with budget $N$ that queries the union of points asked by the parallel threads,
\[
\mathcal{P} \;=\; \bigcup_{i \in [p]} X_i,
\qquad \text{so that } |\mathcal{P}| \leq N.
\]
This single-thread episode achieves reward at least $\max_{x \in \mathcal{P}} f(x)$, and
$
\max_{x \in \mathcal{P}} f(x) \;\geq\; f_{\text{parallel}}^* .
$
Thus, selecting the best outcome among $p$ independent threads with total budget $N$ is upper-bounded by what a single-thread interaction with budget $N$ could achieve.
This finishes the proof.
See Appendix~\ref{app:proofs} for the same proof on other tasks.

\begin{table*}[t]
\caption{Comparing parallel execution with various values of $p$ under the same total budget ($N=48$) across all tasks.
Results are reported on a single instance of each task.
\textbf{Although we theoretically show that parallelism should not yield gains over single-thread execution $(p=1)$, we observe improved performance across all models and tasks.}
We use \updelta{x} to denote the relative performance of parallel execution compared to single-thread execution when positive, and \downdelta{x} when negative.
}
\label{tab:all_tasks_n_48}
\begin{adjustbox}{width=\linewidth}
\begin{tabular}{l|rrrr||rrrr||rrrr}
\toprule
 & \multicolumn{4}{c||}{\tsearch} & \multicolumn{4}{c||}{\ttree} & \multicolumn{4}{c}{\tmaxsat} \\
Model & $p=1$ & $p=2$ & $p=3$ & $p=4$ & $p=1$ & $p=2$ & $p=3$ & $p=4$ & $p=1$ & $p=2$ & $p=3$ & $p=4$ \\
\midrule
\midrule
{\small\texttt{Qwen2.5-7B-Instruct}} & 0.33 & 0.52 \updelta{59} & \textbf{0.74 \updelta{125}} & 0.66 \updelta{102} & 0.71 & 0.83 \updelta{16} & 0.93 \updelta{31} & \textbf{0.96 \updelta{35}} & 0.45 & 0.53 \updelta{18} & 0.58 \updelta{28} & \textbf{0.63 \updelta{39}} \\
{\small\texttt{Qwen3-4B-Instruct}} & 0.27 & \textbf{0.52 \updelta{95}} & 0.45 \updelta{70} & 0.30 \updelta{12} & 0.91 & 0.85 \downdelta{7} & 0.84 \downdelta{7} & \textbf{0.91 \updelta{0}} & 0.39 & 0.58 \updelta{48} & \textbf{0.66 \updelta{69}} & 0.57 \updelta{47} \\
{\small\texttt{Qwen3-8B}} & 0.59 & 0.71 \updelta{21} & 0.57 \downdelta{3} & \textbf{0.72 \updelta{23}} & 0.68 & 0.76 \updelta{13} & 0.88 \updelta{30} & \textbf{0.93 \updelta{36}} & 0.48 & 0.70 \updelta{45} & 0.69 \updelta{45} & \textbf{0.72 \updelta{50}} \\
{\small\texttt{gemini-2.5-flash-lite}} & 0.41 & \textbf{0.58 \updelta{40}} & 0.57 \updelta{38} & 0.55 \updelta{34} & 0.63 & 0.75 \updelta{19} & 0.80 \updelta{26} & \textbf{0.85 \updelta{35}} & 0.62 & \textbf{0.71 \updelta{14}} & 0.70 \updelta{13} & 0.68 \updelta{9} \\
{\small\texttt{gpt-5-nano}} & 0.31 & 0.47 \updelta{51} & 0.55 \updelta{77} & \textbf{0.74 \updelta{138}} & 0.69 & 0.83 \updelta{20} & 0.91 \updelta{31} & \textbf{0.96 \updelta{39}} & 0.62 & 0.66 \updelta{6} & 0.60 \downdelta{3} & \textbf{0.72 \updelta{16}} \\
{\small\texttt{gpt-5-mini}} & 0.77 & 0.82 \updelta{7} & 0.87 \updelta{13} & \textbf{0.91 \updelta{18}} & 0.64 & 0.84 \updelta{31} & 0.90 \updelta{40} & \textbf{0.91 \updelta{41}} & 0.67 & 0.66 \downdelta{1} & \textbf{0.82 \updelta{23}} & 0.67 \updelta{1} \\
{\small\texttt{gpt-5}} & 0.72 & \textbf{0.83 \updelta{15}} & 0.77 \updelta{7} & 0.75 \updelta{4} & 0.63 & 0.74 \updelta{18} & 0.78 \updelta{25} & \textbf{0.94 \updelta{50}} & 0.78 & \textbf{0.82 \updelta{5}} & 0.74 \downdelta{6} & 0.76 \downdelta{3} \\
\midrule
{\small\texttt{Qwen3-8B-medium}} & 0.29 & 0.35 \updelta{20} & \textbf{0.38 \updelta{32}} & 0.30 \updelta{3} & 0.70 & 0.85 \updelta{22} & 0.94 \updelta{35} & \textbf{0.94 \updelta{35}} & 0.54 & \textbf{0.70 \updelta{30}} & 0.69 \updelta{30} & 0.64 \updelta{20} \\
{\small\texttt{gpt-5-nano-medium}} & 0.48 & 0.75 \updelta{56} & 0.88 \updelta{85} & \textbf{0.96 \updelta{101}} & 0.68 & 0.80 \updelta{18} & 0.89 \updelta{31} & \textbf{0.95 \updelta{40}} & 0.54 & 0.59 \updelta{9} & 0.61 \updelta{15} & \textbf{0.65 \updelta{22}} \\
{\small\texttt{gpt-5-mini-medium}} & 0.58 & 0.84 \updelta{45} & \textbf{0.86 \updelta{49}} & 0.79 \updelta{38} & 0.54 & 0.58 \updelta{8} & \textbf{0.67 \updelta{24}} & 0.66 \updelta{22} & 0.66 & 0.65 \downdelta{2} & \textbf{0.73 \updelta{9}} & 0.71 \updelta{8} \\
\midrule
{\small\texttt{explore-exploit}} & \multicolumn{4}{c||}{0.97} & \multicolumn{4}{c||}{0.96} & \multicolumn{4}{c}{0.84} \\
\bottomrule
\end{tabular}
\end{adjustbox}
\end{table*}

\paragraph{Evaluating parallel threads}
We consider parallelization with $p \in \{2,3,4\}$ threads on the same task instances evaluated in \S\ref{subsec:eval}, and report results and interpretations across our task suite. As shown in Table~\ref{tab:all_tasks_n_48} ($N=48$),
across all models, parallelization improves over single-thread interaction ($p=1$) on all tasks:
for \tsearch{}, multiple threads increase the chance that at least one samples near the global maximum and uses the remaining budget to refine;
for \ttree{}, parallel runs make it more likely that a thread enters a good gateway and commits to exploring it, reaching high-reward nodes;
and for \tmaxsat{}, multiple independent initial assignments increase the chance that one thread satisfies the gold clause by chance, after which local refinement yields higher reward.
We refer to Appendix~\ref{app:experiment_full:diff-budgets} for evaluation on the same problem instances with $N=36$, provide complementary experiments on additional instances of each task in Appendix~\ref{app:experiment_full:add-instances}, and discuss standard errors in Appendix~\ref{app:subsec:standard-errors}.

\paragraph{Theoretical analysis}
To understand why parallelization is beneficial, we analyze \textit{success probability} rather than reward.
Let $q(x)$ denote the probability of success with budget $x\in[0,1]$ (treated as continuous).
Success can be defined as achieving reward above a threshold (e.g., a fraction of maximum possible reward).
Parallelizing budget $x$ into $p>1$ threads is beneficial if
\begin{equation}\label{eq:p-success-condition}
1-\bigl(1-q(x/p)\bigr)^p > q(x).
\end{equation}

We model success with a sublinear (concave) power law,
\begin{equation}
q(x)=c x^\alpha,\qquad 0<c\le 1,\ \ 0<\alpha<1.
\end{equation}
Empirically, we often observe $q(x)\ll 1$, indicating a low-success regime (e.g., small effective $x$ and/or small $c$).

\newcommand{\thmcxalpha}{
Let $q(x)=c x^\alpha$ with $0<c\le 1$ and $0<\alpha<1$. For any integer $p>1$, there exists $v_p\in(0,1]$ such that \eqref{eq:p-success-condition} holds for all $x<v_p$ and fails for all $x\ge v_p$.
}
\begin{theorem}[Parallelization under a sublinear power law]\label{thm:cxalpha}
\thmcxalpha
\end{theorem}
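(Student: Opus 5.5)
The plan is to reduce \eqref{eq:p-success-condition} to a one-variable inequality in the per-thread success probability, and then use concavity to show it has a single sign change.

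\textbf{Substitution.} Set $t := q(x/p) = c\,(x/p)^\alpha$, so that $q(x) = c x^\alpha = p^\alpha t$. The map $x\mapsto t$ is a continuous, strictly increasing bijection from $[0,1]$ onto $[0, c p^{-\alpha}]$, and $c p^{-\alpha} < 1$. Condition \eqref{eq:p-success-condition} then becomes $h(t) > 0$, where
\[
h(t) := 1-(1-t)^p - p^\alpha t, \qquad t\in[0,1].
\]

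\textbf{Properties of $h$.} I would establish four facts.
\begin{itemize}
\item $h(0)=0$.
\item $h'(0) = p - p^\alpha > 0$, because $p>1$ and $\alpha<1$.
\item $h$ is strictly concave on $[0,1]$: $(1-t)^p$ is strictly convex for $p\ge 2$, and subtracting a linear term preserves concavity.
\item $h(1) = 1 - p^\alpha < 0$.
\end{itemize}
A concave function with $h(0)=0$ and $h'(0)>0$ is positive just to the right of $0$. Concavity means it can cross zero at most once more on $(0,1]$, and $h(1)<0$ forces such a crossing. Hence there is a unique $t^*\in(0,1)$ with $h>0$ on $(0,t^*)$ and $h<0$ on $(t^*,1]$.

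\textbf{Pulling back to $x$.} Monotonicity of $t(x)$ gives the threshold $x^* = p\,(t^*/c)^{1/\alpha}$. Condition \eqref{eq:p-success-condition} holds exactly for $0<x<x^*$ and fails for $x\ge x^*$. Taking $v_p := \min(x^*,1)$ gives $v_p\in(0,1]$, since $t^*>0$ implies $x^*>0$.

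\textbf{Main obstacle: the boundary case $x^*>1$.} The delicate step is reconciling this with the claim that the condition fails for all $x\ge v_p$ with $v_p\le 1$. That requires $x^*\le 1$, equivalently $c\,p^{-\alpha}\ge t^*$, equivalently $h(c p^{-\alpha})\le 0$, i.e.
\[
1-(1-cp^{-\alpha})^p \le c .
\]
This is not automatic. For small $c$ the left side is about $c\,p^{1-\alpha} > c$. For example, $c=0.01$, $\alpha=1/2$, $p=2$ gives roughly $0.0141 > 0.01$. In that low-success regime the condition holds on the whole of $(0,1]$, and parallelization helps everywhere.

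To close this step I would first try to verify $h(cp^{-\alpha})\le 0$ under the theorem's hypotheses. If that fails, as the example indicates, I would state the result with threshold $x^*$ itself. Concretely: the condition holds for all $x<\min(x^*,1)$, and fails for all $x\in[x^*,1]$ when $x^*\le 1$; the failure clause is vacuous when $x^*>1$. This is the honest form of the single-crossing statement.
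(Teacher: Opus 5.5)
Your argument is correct and is essentially the paper's: the same substitution $y=q(x/p)$, the same $h(y)=1-(1-y)^p-p^\alpha y$ with $h(0)=0$, $h'(0)=p-p^\alpha>0$ and strict concavity, and the same pullback $v_p=p\,(y_p^*/c)^{1/\alpha}$. Your observation $h(1)=1-p^\alpha<0$ closes a step the paper only asserts, since saying $h$ ``rises to a peak and then decreases'' does not by itself force a zero in $(0,1)$.

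Your boundary concern is also genuine, and the paper's proof does not address it. It never checks that $p\,(y_p^*/c)^{1/\alpha}\le 1$, and its own closed form $v_2=2\bigl((2-2^\alpha)/c\bigr)^{1/\alpha}$ exceeds $1$ exactly when $c<2^\alpha(2-2^\alpha)$. For $\alpha=1/2$ this covers every $c<2\sqrt2-2\approx 0.83$, and your example gives $v_2\approx 6.9\times 10^{3}$. In that case \eqref{eq:p-success-condition} holds on all of $(0,1]$, so no $v_p\in(0,1]$ with the stated property exists.

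So the statement as written is false on part of its parameter range. Your reformulation is precisely what the argument establishes: on $(0,1]$, \eqref{eq:p-success-condition} holds exactly when $x<x^*$, and the failure set $[x^*,1]$ is empty when $x^*>1$. You are also right to exclude $x=0$, where both sides vanish.
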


Theorem~\ref{thm:cxalpha} shows that parallelization helps in the low-success regime $x<v_p$, matching our empirical results where $q(x)$ is small.
We refer to Appendix~\ref{app:proofs} for the proof and further discussion for $p=2$.

\begin{figure*}[t]
  \centering
  \begin{subfigure}[t]{0.32\textwidth}
    \centering
    \includegraphics[width=\linewidth]{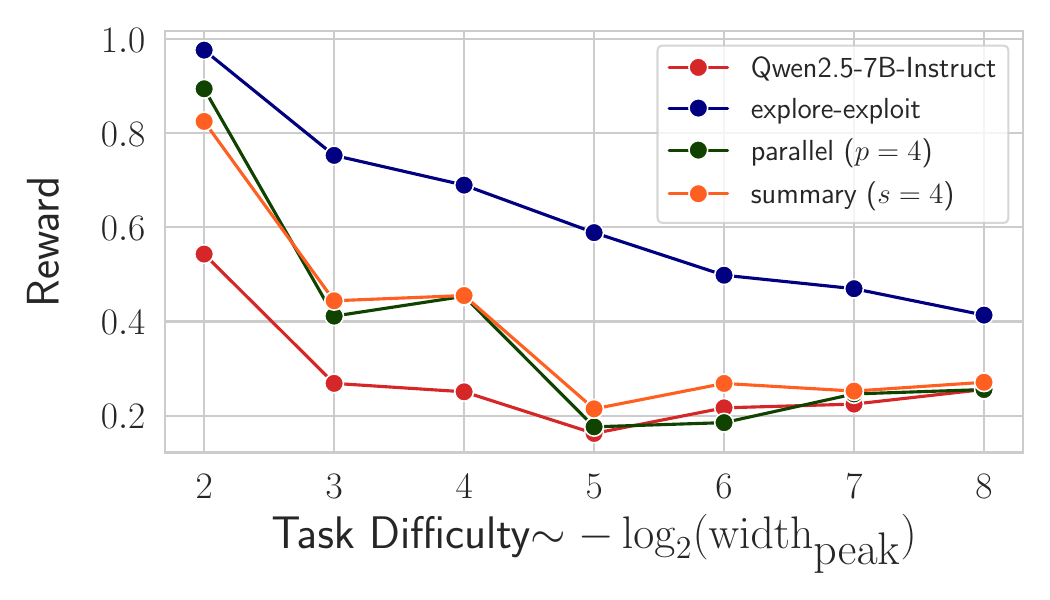}
    \caption{\tsearch}
  \end{subfigure}\hfill
  \begin{subfigure}[t]{0.32\textwidth}
    \centering
    \includegraphics[width=\linewidth]{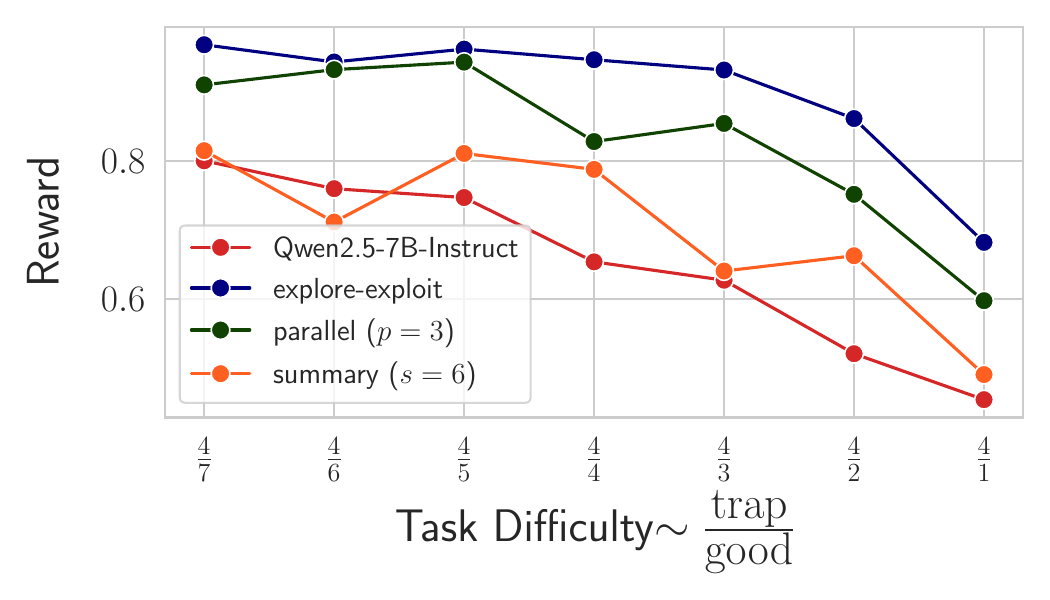}
    \caption{\ttree} 
  \end{subfigure}\hfill
  \begin{subfigure}[t]{0.32\textwidth}
    \centering
    \includegraphics[width=\linewidth]{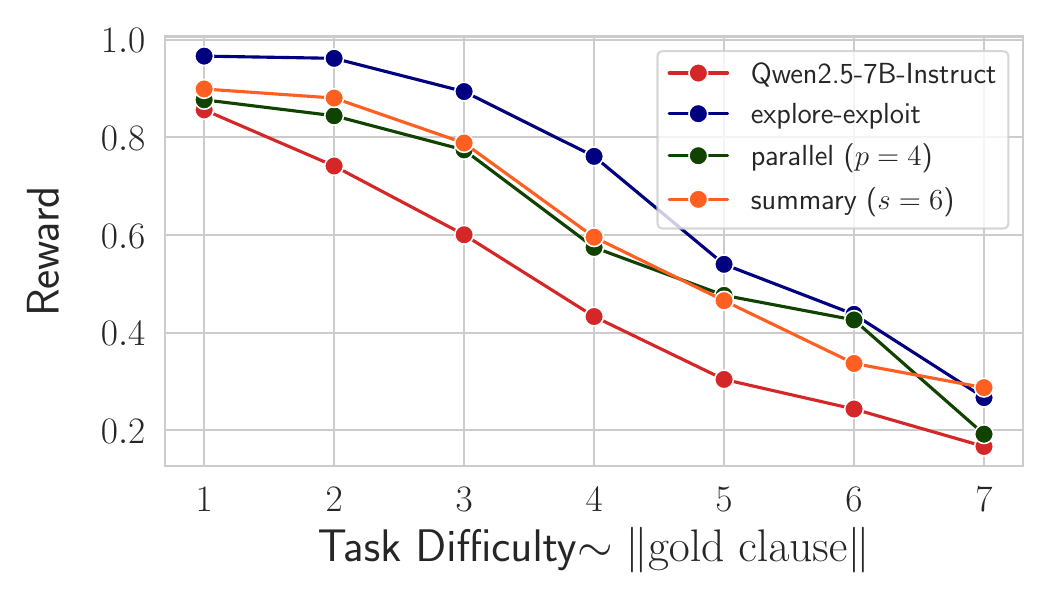}
    \caption{\tmaxsat}
  \end{subfigure}\hfill
  
  \caption{
Task difficulty variations for \tsearch{}, \ttree{}, and \tmaxsat{}.
We generate \textbf{multiple task instances by varying key difficulty-controlling parameters}---the peak width for \tsearch{}, the ratio of good to trap gateways for \ttree{}, and the size of the gold clause for \tmaxsat{}---with harder settings yielding lower baseline performance.
Across all tasks and difficulty levels, \textbf{the parallel and the summary methods consistently improves} over single-thread execution of \texttt{Qwen2.5-7B-Instruct}, 
narrowing the gap to explore--exploit baselines.
$N=36$ is the budget for these episodes.
}
\label{fig:task_difficulty}
\vspace{-8pt}
\end{figure*}


\subsection{Intervention: Summary}
\label{sec:summary}
We explore another single-thread approach (in contrast to the parallel method) to improve over the standard execution.
We hypothesize that the long context accumulated during interaction may mislead the model into committing to low-quality solutions and under-exploring the solution space.
To test this, we introduce a \textit{summary} method: over an episode with budget $N$, we provide the model with $s$ summaries.
After every $N/s$ interactions, we remove the earlier interaction content and continue the episode by conditioning the model on summary rather than the full history.

More specifically, the summary is generated from the interaction history (the model's past queries and observed feedback) and the task description.
It includes reflective questions (e.g., whether it has explored all parts of the search space)
and highlights key takeaways from the interactions so far (e.g., the current best solutions).
The summary guides future interactions by tracking progress and blind spots, but adds no new information—it only organizes what was revealed.
Next, we describe how we generate task-specific summaries and evaluate \texttt{Qwen-2.5-7B-Instruct} equipped with summaries.

\begin{table}[t]
\caption{
Summary method results ($N=48$) .
We report reward when using summary method with parameter $s$.
\texttt{Qwen2.5-7B-Instruct} corresponds to {no summarization} (standard interaction).
Parentheses report relative change compared to standard interaction.
We use \updelta{x} to denote the relative improvement.
\textbf{Summarization helps over no summary.}
}
\label{tab:all_tasks_n_48_summary}
\begin{adjustbox}{width=\linewidth}
\begin{tabular}{rc|c||c||c}
\toprule
\multicolumn{2}{c|}{Method} & \tsearch & \ttree & \tmaxsat \\
\midrule
\multicolumn{2}{c|}{\texttt{Qwen2.5-7B-Instruct}} & 0.33 & 0.71 & 0.45 \\
\midrule
\multirow{4}{*}{summary} & $s=2$ & 0.45 \updelta{36} & 0.78 \updelta{9} & 0.55 \updelta{23} \\
 & $s=3$ & 0.43 \updelta{29} & 0.82 \updelta{16} & 0.57 \updelta{26} \\
 & $s=4$ & 0.52 \updelta{57} & \textbf{0.82 \updelta{16}} & \textbf{0.66 \updelta{46}} \\
 & $s=6$ & \textbf{0.62 \updelta{87}} & 0.80 \updelta{13} & 0.60 \updelta{34} \\
\midrule
\multicolumn{2}{c|}{\texttt{explore-exploit baseline}} & 0.97 & 0.96 & 0.84 \\
\bottomrule
\end{tabular}
\end{adjustbox}
\vspace{-5pt}
\end{table}

\paragraph{Summarizing interaction history}
We summarize episodes via a short ``mission hand-off’’ that replaces the full interaction transcript every $N/s$ steps.
For \tsearch{}, the hand-off lists all queried points $(x,f(x))$ sorted by $x$, reports remaining budget, explicitly highlights unexplored intervals (gaps) in the domain,
and prompts a brief reflection on whether the trajectory is stuck near a local maximum.
For \ttree{}, it summarizes the explored connected subgraph by including the full query history (chronological), the current known node with the highest reward,
and a frontier of actionable next nodes (unknown nodes adjacent to explored ones); we group frontier nodes by height (distance from the root). 
For \tmaxsat{}, we summarize progress across assignment queries.
We provide an ordered history of past (score, assignment) pairs and explicitly restate the highest-scoring query.
Additionally, we append a coverage summary that quantifies how often each variable is set to 0 or 1, highlighting those with the most imbalanced distributions.
We refer to Appendix~\ref{app:summary} for full details.




\paragraph{Providing summaries boosts performance}
Table~\ref{tab:all_tasks_n_48_summary} reports results of the summary method for $s \in \{2,3,4,6\}$,
on the same instances evaluated in \S\ref{subsec:eval}.
As seen there, summaries can improve performance compared to standard protocol by encouraging broader exploration and reducing premature commitment to suboptimal solutions.
We present standard errors for these results in Appendix~\ref{app:subsec:standard-errors}.

\subsection{Robustness to Task Difficulty Variations}
\label{subsec:difficulty}
We consider multiple task variants by generating instances from our controllable task suite, as discussed in \S\ref{section:tasks}.
For all tasks, we construct a range of difficulty levels, calibrated by baseline performance on the corresponding instances (harder settings yield lower baseline rewards).
Difficulty is varied by tuning task-specific influential parameters: for \tsearch{}, we adjust the peak width; for \ttree{}, we vary the ratio of good branches to trap branches; and for \tmaxsat{}, we increase the number of variables in the gold clause.

As seen in Figure~\ref{fig:task_difficulty}, the parallel and summary methods consistently outperform standard execution across all tasks and difficulty levels,
highlighting their robustness and effectiveness across a wide range of environments.
See Appendix~\ref{app:task_scale_difficulty} for full results and detailed task configurations.

\subsection{Experimental Setup}
\label{subsec:experimental_setup}

In this section, we describe task instances used in \S\ref{subsec:eval}, \S\ref{subsec:parllel}, and \S\ref{sec:summary}; the evaluated models and interaction protocol.

\paragraph{Task Instances}
We consider three instances, one per task.
First, an instance of \tsearch{} with 8 hills: one needle (peak = 20) and 7 decoys (peak $\le 5$).
Second, an instance of \ttree{} with $772$ nodes, containing $3$ trap gateways and $3$ good gateways.
Third, an instance of \tmaxsat{} with $15$ variables and $120$ hidden clauses
with $w_\text{gold}=80, k_{\text{gold}} = 4$, and $k_{\text{other}} = 2$.
See Appendix~\ref{app:subsec:instances} for full parameters used in their instantiation.

\paragraph{Models}
We evaluate \texttt{Qwen2.5-7B-Instruct} \cite{yang2024qwen2}, \texttt{Qwen3-4B-Instruct}, and \texttt{Qwen3-8B} \cite{yang2025qwen3} in non-reasoning mode, as well as \texttt{Qwen3-8B} in reasoning mode with the reasoning length capped at 2048 tokens (denoted by \texttt{Qwen3-8B-medium}).
We also evaluate \texttt{gemini-2.5-flash-lite}~\cite{comanici2025gemini}.
Finally, we evaluate \texttt{gpt-5-nano}, \texttt{gpt-5-mini}, and \texttt{gpt-5} (with reasoning effort set to minimal), and consider medium-reasoning variants for the small models, denoted by \texttt{gpt-5-nano-medium} and \texttt{gpt-5-mini-medium}.
Overall, we cover 4–8B open models and proprietary models, with and without reasoning.
See Appendix~\ref{app:generation_params} for generation parameters and Appendix~\ref{app:evaluation_setup_details:reasoning} for reasoning specifications.


\paragraph{Interacting with the environment}
To evaluate a model under an interaction budget $N$, we provide the task description and budget, and then run an $N$-round episode.
At each round, the model outputs a query and the oracle returns the corresponding feedback.
The full interaction history is kept in the model context throughout the episode.
We refer to Appendix~\ref{app:prompts_interaction} for prompt and protocol details, and Appendix~\ref{app:evaluation_setup_details:output} for output formatting and error handling.
For robustness, we evaluate each model–instance pair with at least 40 runs (see Appendix~\ref{app:evaluation_setup_details:n_runs}) and report the mean reward across runs.

\vspace{-5pt}
\section*{Conclusion}
In this work, we propose an evaluation framework for LMs in interactive domains, focusing on their ability to interact with an environment, explore the space, and obtain good solutions within a limited budget.
We instantiate this framework with simple, controllable tasks, and observe sub-optimal LM performance, which we attribute to early commitment to sub-optimal solutions.
We study two lightweight paradigms that improve performance in practice: parallel runs and periodic summarization to track progress and blind spots.
Overall, improving agentic discovery may require more than a longer context, and instead benefit from mechanisms that encourage sustained exploration and revision.

\section*{Acknowledgment}
This project was supported in part by a grant from an NSF CAREER AWARD 1942230, the ONR PECASE grant N00014-25-1-2378, ARO’s Early Career Program Award 310902-00001, Army Grant No. W911NF2120076, the NSF award CCF2212458, NSF Award No. 2229885 (NSF Institute for Trustworthy AI in Law and Society, TRAILS), a MURI grant 14262683, DARPA AIQ grant HR00112590066  and an award from meta 314593-00001.

\section*{Impact Statement}
This paper introduces a suite of three controllable tasks that define interactive environments for measuring how well language models explore—given partial prior knowledge—and find high-quality solutions under a fixed interaction budget. By revealing systematic under-exploration, and showing that lightweight interventions such as parallel execution and periodic summarization can improve performance, our results help developers diagnose and reduce premature commitment in interactive agentic systems. We anticipate positive impacts on safer and more reliable deployment of agentic models, including better search behavior and improved detection of failure modes. To the best of our knowledge, this work does not have any negative societal impact.

\bibliography{ref}
\bibliographystyle{icml2026}
\appendix
\newpage
\onecolumn

\section*{Appendix Outline}
This appendix provides additional details needed to reproduce and contextualize our evaluation.
We first include an extended review of related work (Appendix~\ref{app:related_work}), situating our interactive benchmark relative to prior work on LLM-based optimization, delegated search, and interactive evaluation.
We then describe how task instances are procedurally generated for each task (Appendix~\ref{app:subsec:task}) and report the exact evaluated instances used in the main text (Appendix~\ref{app:subsec:instances}).
We then document how tasks are explained to models and how interactions are structured (Appendix~\ref{app:prompts_interaction}).
We next provide evaluation setup details, including decoding parameters, reasoning budgets, structured output enforcement, error handling, and the number of runs (Appendix~\ref{app:evaluation_setup_details}).
We then present complementary results, including interpretability analyses, budget sweeps, additional instances, evaluation on 50 randomly generated instances, and difficulty variation studies with standard errors (Appendix~\ref{app:experiment_full}).
We then provide the prompts used to generate summaries (Appendix~\ref{app:summary}).
We next give baseline details, including pseudocode and hyperparameter sensitivity analyses (Appendix~\ref{app:baseline_details}).
We finally present theoretical results regarding parallelization (Appendix~\ref{app:proofs}).


\section{Extended Related Work}
\label{app:related_work}

In this appendix, we provide an extended review of related work most relevant to our setting and contributions.

\subsection{LLMs as Combinatorial Optimizers and Algorithmic Simulators}
Recent literature has shown the surprising efficacy of LLMs in functioning as autonomous combinatorial optimizers and algorithmic simulators. For example, \citet{yang2024largelanguagemodelsoptimizers} introduced OPRO, demonstrating that LLMs can iteratively generate and refine solutions for NP-hard problems like the Traveling Salesperson Problem (TSP). Similarly, HeurAgenix~\cite{yang2025heuragenixleveragingllmssolving} demonstrates that LLMs can function as hyper-heuristics, selecting and evolving heuristic strategies for complex scheduling tasks without requiring gradient updates. \citet{jiang2025largelanguagemodelsendtoend} further proposed a framework where LLMs operate as end-to-end solvers, mapping natural language constraints directly to feasible solutions. On the algorithmic simulation front, LLMs have long been shown to be able to simulate algorithms via algorithmic prompting~\cite{zhou2022teachingalgorithmicreasoningincontext}, and some recent work on reasoning, such as Graph-of-thought~\cite{Besta_2024}, illustrates that modeling LLM reasoning as graph traversals allows for dynamic backtracking and aggregation of thought paths, mimicking complex search algorithms to solve elaborate problems like sorting and set operations. Similarly, \citet{jin2024graphchainofthoughtaugmentinglarge} developed Graph Chain-of-Thought (Graph-CoT), which prompts models to interact with the graph and perform iterative, step-by-step graph traversal. In our work, we leverage such capabilities of LLMs and design interactive task suites to evaluate the explorative searching capability of LLMs.

\subsection{Delegated Search and Constrained Interaction}
A key motivation for our interactive task suite is \emph{delegation},
where a principal relies on an agent to explore an unknown solution space and propose candidates under limited feedback and interaction constraints.
Prior work has formalized delegated search and shown that simple acceptance mechanisms can obtain meaningful approximation guarantees relative to the principal’s optimal choice~\cite{kleinberg2018delegated}.
More recent work studies richer delegation settings, including multi-agent delegation~\cite{shin2023delegating} and repeated delegation with online adaptation and regret guarantees~\cite{hajiaghayi2024regret}.
These results highlight delegation as a natural application domain where strategic exploration under a constrained interaction protocol is essential.

\subsection{Task-Oriented vs. Exploratory Interactive LLM Benchmarks}

While a growing body of work evaluates LLMs in \emph{interactive} settings, most existing benchmarks are "task-oriented", that is they primarily measure task completion, code generation quality, or strategic competition. They rarely isolate an agent's ability to \emph{systematically explore} an unknown environment under a strict query budget.

One category of benchmarks utilizes real-world task execution as the interactive environment, notably web navigation and tool-use settings such as shopping or web-based workflows~\cite{chang2024agentboard, xi2025agentgym, anupam2025browserarenaevaluatingllmagents, lù2024weblinxrealworldwebsitenavigation, zhou2024webarenarealisticwebenvironment}. While highly realistic, performance in these settings is typically dominated by planning, instruction-following, and UI manipulation skills rather than pure exploratory capability.

A second category focuses on interactive coding, where the agent receives execution feedback, unit-test results, or evaluator responses~\cite{pan2025benchmarkstalkreevaluatingcode, chi2025copilot, mozannar2024realhumaneval, wang2025codeifbenchevaluatinginstructionfollowingcapabilities,NEURIPS2023_4b175d84}. In these environments, the interaction serves largely as a correctness oracle for a well-specified objective; the challenge lies in synthesis and debugging rather than navigating an unknown solution space.

A third category spans game-based evaluations and dialogue protocols, including two-player and host--player formats~\cite{schlangen2025third, yu2025guessarena, guertler2025textarena, duan2024gtbench, costarelli2024gamebench, chang2024agentboard, hoover2020many, long2025puzzleplex, liu2023agentbench, de2025infoquest}. These settings effectively probe strategic reasoning and social dynamics, but the signal for exploration quality is often confounded by opponent behavior and game-theoretic considerations.

Closest to our setup are single-player interactive environments and navigation tasks. However, many puzzles in this domain require little exploration beyond deterministic reasoning~\cite{gong2024mindagent, long2025puzzleplex}. Similarly, while maze and navigation benchmarks involve sequential decision-making under partial observability~\cite{abdulhai2023lmrl, chevalier2018babyai, bianchi2024well, wu2023smartplay, xi2025agentgym, einarsson2025mazeevalbenchmarktestingsequential}, the objective is typically goal-state navigation. The design of these benchmarks tends to emphasize memory, control, or policy learning rather than isolating explore--exploit tradeoffs.

Finally, while several benchmarks incorporate partial observability and interactive learning—such as multi-turn puzzles or black-box optimization with natural language feedback~\cite{badola2025multiturnpuzzlesevaluatinginteractive, cheng2023llfbenchbenchmarkinteractivelearning, wu2023smartplay}—they often lack a controlled, parameterized family of environments specifically constructed to disentangle exploration strategies.

In contrast, our task suite is explicitly designed to measure \emph{exploration quality} under a limited budget across diverse problem geometries. By utilizing simple, parameterized environments with controllable difficulty, we are able to isolate specific failures—such as premature commitment, under-exploration, and inefficient budget utilization—that are often obscured in more complex, entangled task environments.

\section{Task Instances Details}
\label{app:tasks_instances}

In this section, we first describe how we generate task instances, as detailed in Appendix~\ref{app:subsec:task}.
We then provide details on the specific instances used in the main-text tables in Appendix~\ref{app:subsec:instances}.

\subsection{Details on Creating Instances}
\label{app:subsec:task}

In this section, we will describe in greater detail how we generate instances for each of our tasks. 
\paragraph{\tsearch:}
\newcommand{\dec}{\text{decoy}}
\newcommand{\need}{\text{needle}}
We construct functions for measuring exploration by defining $f:[0,10]\to\mathbb{R}$ as a sum of Gaussian hills. Each hill is set by a center $c$, width $w>0$, and height $h>0$, and is given by
\begin{equation*}
g(x) = h \cdot \exp\left(-\frac{(x-c)^2}{w}\right).
\end{equation*}

Fix an integer level $k$ and let $\Delta_k = 10/2^k$. Define the coarse grid points $x_m = 10m/2^k$ for $m\in\{0,1,\dots,2^k\}$. For each interior point $m\in\{1,2,\dots,2^k-1\}$, we place one $\dec$ hill with center
\begin{equation*}
c_m = x_m + \varepsilon_m, \qquad \varepsilon_m \sim \mathrm{Uniform}\left[-j_\dec \Delta_k,\; j_\dec \Delta_k\right].
\end{equation*}
All $\dec$ hills have width $w_m=\alpha_\dec \Delta_k$, and their heights are sampled uniformly from ${1,2,3,4,5}$.

We then place one $\need$ hill on a finer grid with level $k'\ge k$ and spacing $\Delta_{k'} = 10/2^{k'}$. We choose an odd index $m\in\{1,3,\dots,2^{k'}-1\}$ so that the base point $x^\star = 10m/2^{k'}$ is not on the coarse grid. The $\need$ center is
\begin{equation*}
c_\need = x^\star + \varepsilon_\need, \qquad \varepsilon_\need \sim \mathrm{Uniform}\left[-j_\need \Delta_{k'}, j_\need \Delta_{k'}\right].
\end{equation*}
The $\need$ has fixed height $h_\need=20$ and width $w_\need=\alpha_\need \Delta_{k'}$.

In total, we have $2^k$ hills. Let $(c_i,w_i,h_i)$ denote the center, width, and height of hill $i$. We define
\begin{equation*}
f(x) = \sum_{i=1}^{2^k} h_i \exp\left(-\frac{(x-c_i)^2}{w_i}\right), \qquad x\in[0,10].
\end{equation*}

Each instance is generated by the parameters $k$, $k'$, $j_\dec$, $j_\need$, $\alpha_\dec$, and $\alpha_\need$.

\paragraph{\ttree:}
\newcommand{\trap}{\text{trap}}
\newcommand{\good}{\text{good}}
As described earlier, we construct rooted trees where the root has $r_\trap$ children, called \emph{trap gateways}, and $r_\good$ children, called \emph{good gateways}. From each gateway node, we attach $b$ (called \texttt{fanout} in the main text) disjoint simple paths (independent chains). Each trap gateway has $b$ chains of length $d_\trap - 1$, and each good gateway has $b$ chains of length $d_\good - 1$. After constructing the tree, we randomly permute node IDs to remove any link between identifiers and structural roles.

We assign values by setting the root value to $0$ and adding edge increments along each root-to-node path.
\begin{itemize}
\item \textbf{Trap gateways.} Entering a trap gateway adds a one-time increment of $2$. For the next $6$ steps along any of its $b$ chains, values increase by $1$ per step, creating short-term positive momentum. After that, increases are sparse: most steps add $0$, with a $+1$ added once every $4$ steps.
\item \textbf{Good gateways.} Entering a good gateway adds a one-time increment of $1$. After that, each additional step along its chains adds $4$, giving steady growth.
\end{itemize}

Each instance is defined by the parameters $r_\trap$, $r_\good$, $b$, $d_\trap$, and $d_\good$.

\paragraph{\tmaxsat:}
\newcommand{\gold}{\text{gold}}
\newcommand{\other}{\text{other}}
\newcommand{\x}{\mathbf{x}}
To generate a formula with $n$ variables and $m$ clauses, we first sample an assignment $\x^*_1,\dots,\x^*_n$, where each $\x^*_i$ is drawn uniformly from \texttt{true} and \texttt{false}. We then form a gold clause as a conjunction over $k_\gold$ variables and include this same clause $w_\gold$ times. For the remaining $m-w_\gold$ clauses, we exclude the $k_\gold$ variables used in the gold clause. Each remaining clause is constructed by conjoining $k_\other$ variables selected uniformly at random. We then negate literals as needed so that the assignment $\x^*_1,\dots,\x^*_n$ satisfies every clause.

Each instance is generated by the parameters $n$, $m$, $k_\gold$, $k_\other$, and $w_\gold$.

\subsection{Details on Evaluated Instances}
\label{app:subsec:instances}

\paragraph{\tsearch}
The instance used in \S\ref{subsec:eval} has decoy hills centered at
$[1.33, 2.77, 4.01, 5.31, 6.45, 7.82, 8.95]$,
with widths
$[0.1, 0.2, 0.1, 0.1, 0.1, 0.2, 0.1]$,
and heights
$[1, 5, 2, 1, 2, 3, 4]$.
The needle hill is centered at $1.3$ with height $20$ and width $0.01$.
Figure~\ref{fig:teaser} (left) shows this function.

\paragraph{\ttree}
The instance used in \S\ref{subsec:eval} is generated as described in Appendix~\ref{app:subsec:task} with parameters $r_\trap = r_\good = 3$, $b = 5$, $d_\trap = 40$, and $d_\good = 12$. This produces a tree with 772 nodes and a maximum value of 45.

\paragraph{\tmaxsat}
The instance used in \S\ref{subsec:eval} is generated as described in Appendix~\ref{app:subsec:task} with parameters $n=15$, $m=120$, $k_\gold=4$, $k_\other=2$, and $w_\gold=80$.

\section{Explaining Tasks to Models}
\label{app:prompts_interaction}
Each conversation begins with a system prompt that defines the task and may include some instance details. A user prompt then provides all remaining instance details and asks the model for its first query. The conversation templates are shown in Figures~\ref{fig:prompt_dataset_hill}, \ref{fig:prompt_dataset_tree}, and \ref{fig:prompt_dataset_sat}.

For the \ttree{} task, we require that each queried node be connected to a previously queried node. We find that models often struggle to identify all nodes that satisfy this rule. Therefore, after answering each model query, we also return a randomly shuffled list of nodes that have just become available to query. If no new node becomes available, we instead return the full set of available nodes. In this case, we sort nodes by when they became available (most recent first) and break ties at random.

\begin{figure}[ht]
\begin{tcolorbox}[
  enhanced, 
  colframe=green!75!black, 
  colback=gray!10, 
  coltitle=white, 
  colbacktitle=green!75!black, 
  width=\linewidth, 
  arc=2mm, 
  auto outer arc, 
  boxrule=0.5pt, 
  left=10pt, 
  right=10pt, 
  drop shadow={black!50!white},
  top=10pt, 
  bottom=10pt, 
  title=\textbf{Prompt Template for \tsearch{}}, 
  fontupper=\small,      
  fonttitle=\footnotesize\bfseries, 
  title code={\node[rounded corners, fill=blue!75!black, draw=none, text=white] at (frame.title) {\textbf{xxx}};}, 
  attach boxed title to top center={yshift=-2mm}, 
  boxed title style={sharp corners, size=small}, 
]
  \textbf{System Prompt}

You are playing a game to find the global maximum value of a hidden continuous function f(x). Note that the function may have many local maximums which are not globally maximal. You have a limited number of queries.

The function is defined on the interval [0, 10].

Rules:

- In each turn, you can query the value of the function `f(x)` for any real number `x` in the interval `[0, 10]`.

- The number of remaining queries will be provided in each turn.

- Your goal is to find the global maximum value of `f(x)`.

How to play:

- Use the tool with a JSON payload like this: {"reason": "your strategy for querying x", "query": {"x": v}}, where 'v' is any real number between 0 and 10 that you want to query. The tool call MUST ONLY contain this JSON. No other text is allowed.

- After each query, you will be told the value of `f(x)` for the `x` you queried.

\textbf{User}

You have \textcolor{blue}{$q$} queries. Your goal is to find the maximum value.

\textbf{Assistant}

\{"reason": "...", "query": {"x": \textcolor{blue}{$x$}}\}

\textbf{User}

Value of f(\textcolor{blue}{$x$}) is \textcolor{blue}{$y$}. Remaining queries: \textcolor{blue}{\{$q-1$\}}.

\end{tcolorbox}
\caption{The prompt structure used for the \tsearch{} task. Blue text indicates dynamic variables specific to the problem instance or generated queries.}
\vspace{-5pt}
\label{fig:prompt_dataset_hill}
\end{figure}
\begin{figure}[ht]
\begin{tcolorbox}[
  enhanced, 
  colframe=green!75!black, 
  colback=gray!10, 
  coltitle=white, 
  colbacktitle=green!75!black, 
  width=\linewidth, 
  arc=2mm, 
  auto outer arc, 
  boxrule=0.5pt, 
  left=10pt, 
  right=10pt, 
  drop shadow={black!50!white},
  top=10pt, 
  bottom=10pt, 
  title=\textbf{Prompt Template for \ttree{}}, 
  fontupper=\small,      
  fonttitle=\footnotesize\bfseries, 
  title code={\node[rounded corners, fill=blue!75!black, draw=none, text=white] at (frame.title) {\textbf{xxx}};}, 
  attach boxed title to top center={yshift=-2mm}, 
  boxed title style={sharp corners, size=small}, 
]
  \textbf{System Prompt}

You are playing a graph game to find the maximum hidden node value. You have limited queries.

Rules:

- You are given the graph structure and the value of a starting node.

- The order of nodes in adjacency lists is randomized. The list of valid nodes to query is ordered such that neighbors of more recently queried nodes appear first, with ties broken randomly.

- Querying a node reveals its value.

- Do not query the same node twice.

- You can only query a node if it's a neighbor of an already known node.

- Values of connected nodes differ by at most 4.

How to play:

- Use the tool with a JSON payload like this: \{"reason": "your reasoning here", "query": \{"node": i\}\}, where 'i' is the index of the node you want to query. The tool call MUST ONLY contain this JSON. No other text is allowed.

- After each query, you will be told the value of the queried node.

\textbf{User}

n = \textcolor{blue}{$n$}

Graph (by adjacency list): The order of nodes in the following adjacency list and the order of neighbors for each node are both randomized.

\textcolor{blue}{$v_1$}: \textcolor{blue}{$v_2$}, \textcolor{blue}{$v_3$}, \textcolor{blue}{$\cdots$}

\textcolor{blue}{$v_4$}: \textcolor{blue}{$v_5$}, \textcolor{blue}{$\cdots$}

\textcolor{blue}{$\cdot$}

\textcolor{blue}{$\cdot$}

\textcolor{blue}{$\cdot$}

Start node: \textcolor{blue}{$v_6$}. Value: 0. You have \textcolor{blue}{$q$} queries remaining. Your goal is to find the maximum value.

Valid nodes to query next: \textcolor{blue}{$v_7$ $v_8$ $v_9$ $\cdots$}.

\textbf{Assistant}

\{"reason": "...", "query": \{"node": \textcolor{blue}{$v$}\}\}

\textbf{User}

 VALUE of \textcolor{blue}{$v$} is \textcolor{blue}{$x$}. Remaining queries: \textcolor{blue}{\{$q-1$\}}. This query has made new neighbors available to query: \textcolor{blue}{$v_{10}$ $v_{11}$ $v_{12}$ $\cdots$}. You can still choose to query from previously available neighbors. 

\end{tcolorbox}
\caption{The prompt structure used for the \ttree{} task. Blue text indicates dynamic variables specific to the problem instance or generated queries.}
\vspace{-5pt}
\label{fig:prompt_dataset_tree}
\end{figure}

\begin{figure}[ht]
\begin{tcolorbox}[
  enhanced, 
  colframe=green!75!black, 
  colback=gray!10, 
  coltitle=white, 
  colbacktitle=green!75!black, 
  width=\linewidth, 
  arc=2mm, 
  auto outer arc, 
  boxrule=0.5pt, 
  left=10pt, 
  right=10pt, 
  drop shadow={black!50!white},
  top=10pt, 
  bottom=10pt, 
  title=\textbf{Prompt Template for \tmaxsat{}}, 
  fontupper=\small,      
  fonttitle=\footnotesize\bfseries, 
  title code={\node[rounded corners, fill=blue!75!black, draw=none, text=white] at (frame.title) {\textbf{xxx}};}, 
  attach boxed title to top center={yshift=-2mm}, 
  boxed title style={sharp corners, size=small}, 
]
  \textbf{System Prompt}

  You are playing a game to find a Boolean assignment that maximizes the weighted sum of satisfied constraints of a hidden boolean formula.

  IMPORTANT: The formula is NOT in Conjunctive Normal Form (CNF).

  Rules:
  
  - There are \textcolor{blue}{$b$} Boolean variables x0..x\textcolor{blue}{\{$b-1$\}}.
  
  - There are \textcolor{blue}{$c$} weighted clauses.
  
  - Each clause is formed by 1 to \textcolor{blue}{$l$} literals (variables or their negations) combined with logical AND.
  
  - A clause is satisfied if and only if ALL its literals are True (unlike standard CNF which uses OR).
  
  - In each turn, you may query ANY full assignment (a list of 0/1 values of length \textcolor{blue}{$b$}).
  
  - You will receive the sum of weights of satisfied clauses for that assignment.
  
  - Your goal is to find the maximum weighted sum within the query budget.

  How to play:
  
  - Use the tool with a JSON payload like this: {"reason":"your strategy","query":{"x0":0, "x1":1, "x2":0, ...}}.
  
  - The tool call MUST ONLY contain this JSON. No other text is allowed.

\textbf{User}

You have \textcolor{blue}{$q$} queries. Maximize the number of satisfied clauses (m=\textcolor{blue}{$c$}).

\textbf{Assistant}

\{"reason":"...","query":{"x0":\textcolor{blue}{$v_1$},"x1":\textcolor{blue}{$v_2$},...,"x\textcolor{blue}{\{$b-1$\}}":\textcolor{blue}{$v_b$}}\}

\textbf{User}

Satisfied clauses: \textcolor{blue}{$c_1$}. Remaining queries: \textcolor{blue}{\{$q-1$\}}.

\end{tcolorbox}
\caption{The prompt structure used for the \tmaxsat{} task. Blue text indicates dynamic variables specific to the problem instance or generated queries.}
\vspace{-5pt}
\label{fig:prompt_dataset_sat}
\end{figure}

\clearpage

\section{Evaluation Setup Details}
\label{app:evaluation_setup_details}

In this section, we provide additional evaluation details used throughout the paper.
In Appendix~\ref{app:generation_params}, we describe the decoding parameters used for token generation.
In Appendix~\ref{app:evaluation_setup_details:reasoning}, we provide details on how we evaluate reasoning-capable models.
In Appendix~\ref{app:evaluation_setup_details:output}, we specify the required model output format and our error-handling protocol.
In Appendix~\ref{app:evaluation_setup_details:n_runs}, we report the number of runs per task instance and model used to obtain robust, statistically meaningful results.

\subsection{Generation parameters}
\label{app:generation_params}
Unless otherwise specified, we decode queries with temperature $T=0.7$ and nucleus sampling top-$p=0.95$ for all models that expose these controls (both local inference and API models). For models/APIs that do not support a parameter (e.g., \texttt{gpt-5} family's temperature), we use the provider default and keep the remaining decoding parameters identical when possible. All other decoding settings (e.g., max output tokens, top-$k$, repetition penalties) are left at their framework/provider defaults.

\subsection{Thinking Budget for Reasoning} \label{app:evaluation_setup_details:reasoning}
With the exception of \texttt{Qwen2.5-7B-Instruct} and \texttt{Qwen3-4B-Instruct}, all evaluated models support reasoning capabilities. For the \texttt{gpt-5} family, we set the \texttt{reasoning\_effort} parameter to \texttt{minimal} to prevent them from reasoning. We also evaluate \texttt{gpt-5-nano} and \texttt{gpt-5-mini} with \texttt{reasoning\_effort} set to \texttt{medium}, denoting these variations as \texttt{gpt-5-nano-medium} and \texttt{gpt-5-mini-medium}, respectively. For \texttt{Qwen3-8B}, we examine one variation with reasoning disabled and another, denoted as \texttt{Qwen3-8B-medium}, with reasoning limited to 2048 tokens per message. Following the official Qwen documentation \cite{yang2025qwen3} for the latter, we terminate the model's reasoning generation after 2048 tokens and append ``\textit{Considering the limited time by the user, I have to give the solution based on the thinking directly now}'' to the reasoning output. Finally, for \texttt{gemini-2.5-flash-lite}, we utilize the default \texttt{reasoning\_effort} setting of \texttt{none}.

\subsection{Model Output}
\label{app:evaluation_setup_details:output}
We use structured outputs for all models. Supported by \texttt{vLLM}~\cite{kwon2023efficient} for local inference and by the OpenAI and Gemini APIs, this method enforces the JSON structure defined in our prompts (see Figure~\ref{fig:prompt_dataset_hill}). We include a ``reason'' or ``think'' field in the response schema. This enables models to execute a chain of thought before generating a query, even if their internal reasoning features are disabled or unavailable.

Despite format enforcement, models may still rarely make semantic errors. For example, in the \ttree{} task, a model might request a node lacking a previously queried neighbor. In such cases, we first retry the generation silently, allowing up to 20 retries per episode. If errors persist, we provide an informative error message for up to 5 additional attempts. We find silent retries more effective than feedback, as models often persist in the invalid query when explicitly shown their errors. Invalid queries do not count toward the budget; however, the episode ends if the error count exceeds these thresholds.

\subsection{Number of runs}
\label{app:evaluation_setup_details:n_runs}

To ensure statistical robustness and account for variance, we evaluate all models and baselines using multiple independent runs. Table~\ref{tab:absolute_p1_runs} details the specific sample sizes used for the main results reported in Table~\ref{tab:absolute_p1}.

For the complementary analyses, we adopted the following sample sizes: For the scaling analysis (Figure~\ref{fig:scaling_law_curves}), we performed 100 runs for \texttt{Qwen2.5-7B-Instruct} and 50 runs for \texttt{gpt-5-mini} for each instance-budget pair, while the \texttt{explore-exploit} baseline was evaluated over 300 runs for \tsearch, 200 for \ttree, and 100 for \tmaxsat. Similarly, for both the summary method evaluation (Table~\ref{tab:all_tasks_n_48_summary}) and the difficulty variation experiments (Figure~\ref{fig:task_difficulty}), we conducted 200 baseline runs and 50 \texttt{Qwen2.5-7B-Instruct} runs per setting.

Regarding the parallel method (\S\ref{subsec:parllel}), we simulated performance combinatorially using the available pool of $r$ independent runs. We evaluated all $\binom{r}{p}$ subsets of size $p$, defined the subset reward as the maximum reward achieved by any run within that subset, and reported the average reward in Table~\ref{tab:all_tasks_n_48}. This same combinatorial pooling strategy was applied to the difficulty variation experiments.

\begin{table*}[t]
\centering
\caption{
Number of times we run each model on each instance and budget.
}
\label{tab:absolute_p1_runs}
\begin{adjustbox}{width=0.8\linewidth}
\begin{tabular}{l|cc||ccc||cc}
\toprule
 & \multicolumn{2}{c||}{\tsearch} & \multicolumn{3}{c||}{\ttree} & \multicolumn{2}{c}{\tmaxsat} \\
Model & $N=36$ & $N=48$ & $N=36$ & $N=48$ & $N=60$ & $N=36$ & $N=48$ \\
\midrule
\midrule
{\texttt{Qwen2.5-7B-Instruct}} & 50 & 50 & 100 & 100 & 100 & 60 & 60 \\
{\texttt{Qwen3-4B-Instruct}} & 50 & 50 & 100 & 100 & 100 & 60 & 60 \\
{\texttt{Qwen3-8B}} & 50 & 50 & 100 & 100 & 100 & 60 & 60 \\
{\texttt{gemini-2.5-flash-lite}} & 50 & 50 & 100 & 100 & 100 & 60 & 60 \\
{\texttt{gpt-5-nano}} & 50 & 50 & 100 & 100 & 100 & 60 & 60 \\
{\texttt{gpt-5-mini}} & 50 & 50 & 100 & 100 & 100 & 60 & 60 \\
{\texttt{gpt-5}} & 50 & 50 & 50 & 50 & 50 & 40 & 40 \\ \midrule
{\texttt{Qwen3-8B-medium}} & 50 & 50 & 50 & 50 & 50 & 40 & 40 \\
{\texttt{gpt-5-nano-medium}} & 50 & 50 & 50 & 50 & 50 & 40 & 40 \\
{\texttt{gpt-5-mini-medium}} & 50 & 50 & 50 & 50 & 50 & 40 & 40 \\ \midrule
{\texttt{explore-exploit baseline} } & 500 & 500 & 200 & 200 & 200 & 200 & 200 \\
\bottomrule
\end{tabular}
\end{adjustbox}
\end{table*}

\section{Complementary Results}
\label{app:experiment_full}

In this section, we present additional experimental results that support and extend the findings discussed in the main text. We begin by analyzing the specific exploration trajectories of the models to provide further interpretability regarding their greedy behavior (Appendix~\ref{app:interpretability}). Subsequently, we evaluate performance across different interaction budgets (Appendix~\ref{app:experiment_full:diff-budgets}) and validate the robustness of our conclusions using a set of additional problem instances (Appendix~\ref{app:experiment_full:add-instances}). We also verify the consistency of our results by scaling our evaluation to a larger, diverse dataset consisting of 50 randomly generated instances for each task (Appendix~\ref{app:fifty_instance}). Finally, we present the comprehensive results of our robustness analysis across varying levels of task difficulty (Appendix~\ref{app:task_scale_difficulty}).

In this section, we present additional experimental results that support and extend the findings discussed in the main text. We begin by analyzing the specific exploration trajectories of the models to provide further interpretability regarding their greedy behavior (\S\ref{app:interpretability}). Subsequently, we evaluate performance across different interaction budgets (\S\ref{app:experiment_full:diff-budgets}) and validate the robustness of our conclusions using a set of additional problem instances (\S\ref{app:experiment_full:add-instances}). 

\subsection{Interpreting Models' Exploration Patterns}
\label{app:interpretability}
We analyze how the model allocates its budget in Figure~\ref{fig:interp_supp} for the instance described in \S\ref{subsec:eval}. These results complement Figure~\ref{fig:interp} and illustrate the greedy behavior of the model. With \tsearch{}, the model spends a small fraction of its budget on exploration before switching to consistent exploitation. With \ttree{}, the model persists along the branch selected during the initial queries.

\begin{figure*}[t]
  \centering
  \begin{subfigure}[t]{0.5\textwidth}
    \centering
    \includegraphics[width=\linewidth]{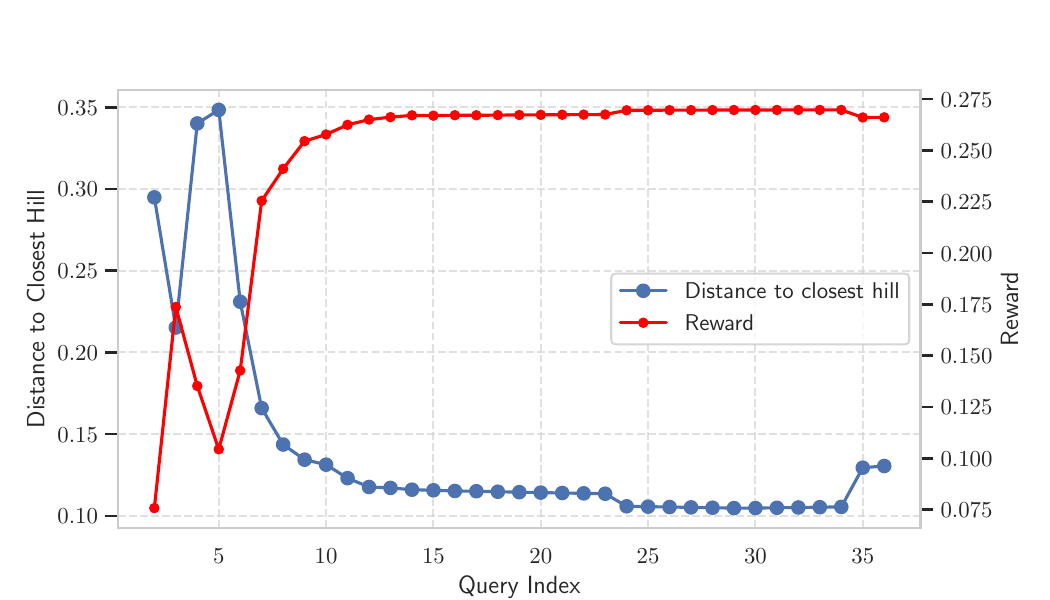}
    \caption{\tsearch}
  \end{subfigure}\hfill
  \begin{subfigure}[t]{0.5\textwidth}
    \centering
    \includegraphics[width=\linewidth]{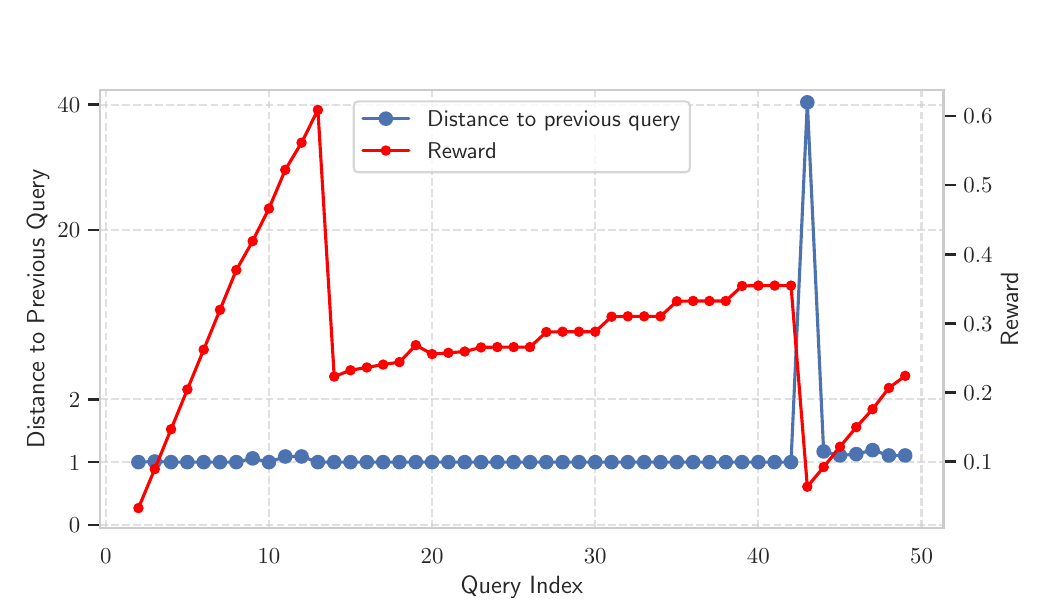}
    \caption{\ttree}
  \end{subfigure}\hfill
  
  \caption{
    Query patterns of \texttt{Qwen2.5-7B-Instruct} on \tsearch{} and \ttree{}. 
    \textbf{Left (\tsearch, 36 queries):} We report the average distance to the closest hill and the average reward at each step across episodes. The model starts exploiting in fewer than 10 steps and results in a suboptimal final reward, as shown in Table~\ref{tab:absolute_p1}. \textbf{Right (\ttree, 48 queries):} We report the average distance to the previously queried node and the average reward at each query. The distance remains almost constant at 1 until the model reaches the end of the trap gateway branches at height 42. The reward rises sharply during the first 13 queries (good gateway branches) and drops after these branches end, though it continues to increase slowly along the trap branch. Together with Figure~\ref{fig:interp}, this indicates that the model exploits early and reaches the end of any branch it enters.
  }
  \label{fig:interp_supp}
\end{figure*}

We also include more episodes similar to \Cref{fig:interp} in Figure~\ref{fig:interp:all:hill}, \ref{fig:interp:all:tree}, and \ref{fig:interp:all:cnf}.

\begin{figure*}[p] 
  \centering

  \begin{subfigure}[t]{0.33\textwidth}
    \centering
    \includegraphics[width=\linewidth]{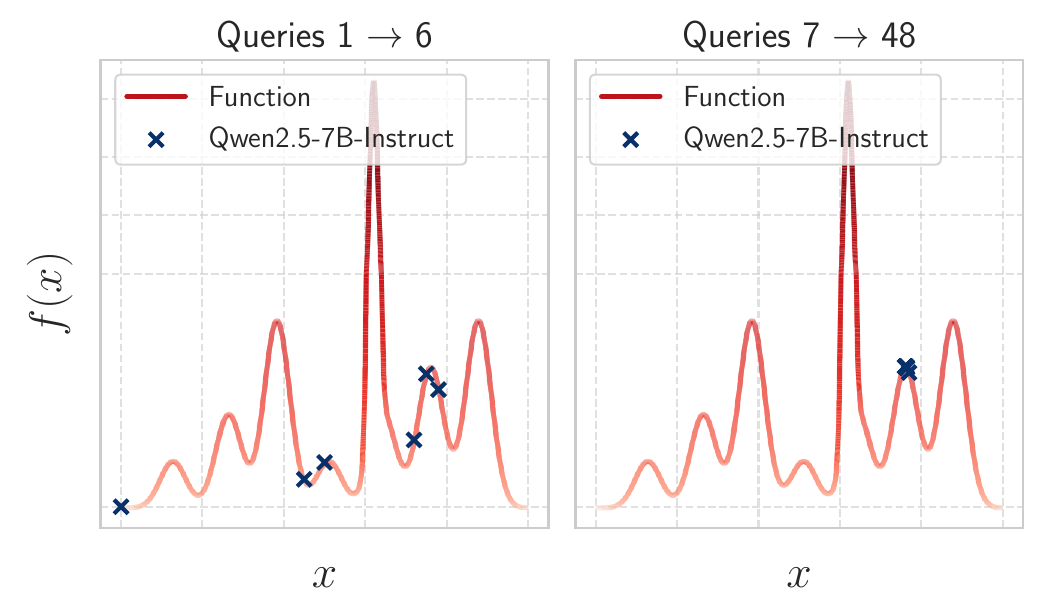}
  \end{subfigure}
  \hfill
  \begin{subfigure}[t]{0.33\textwidth}
    \centering
    \includegraphics[width=\linewidth]{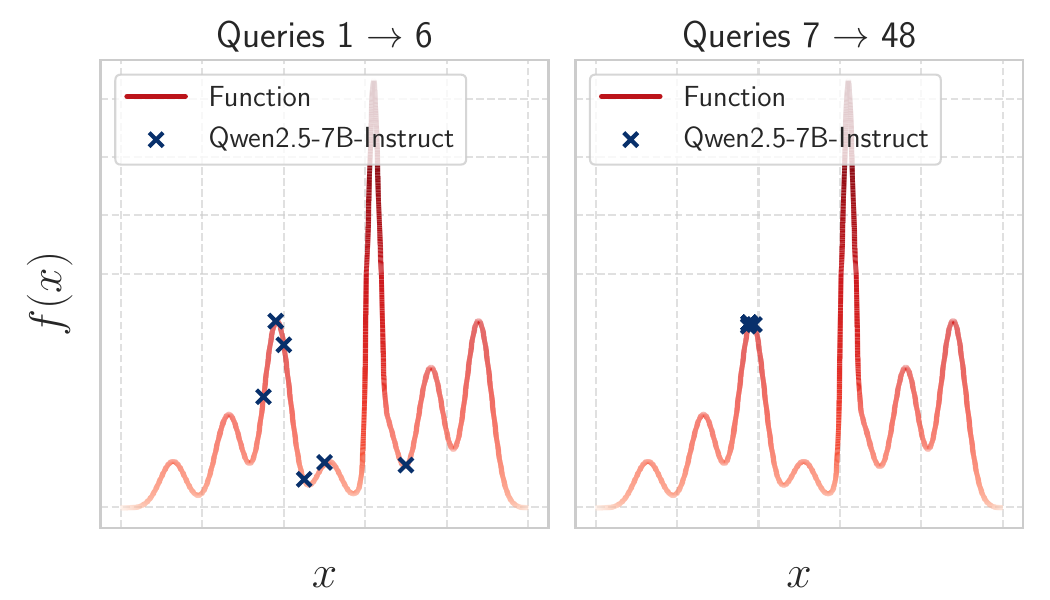}
  \end{subfigure}
  \hfill
  \begin{subfigure}[t]{0.33\textwidth}
    \centering
    \includegraphics[width=\linewidth]{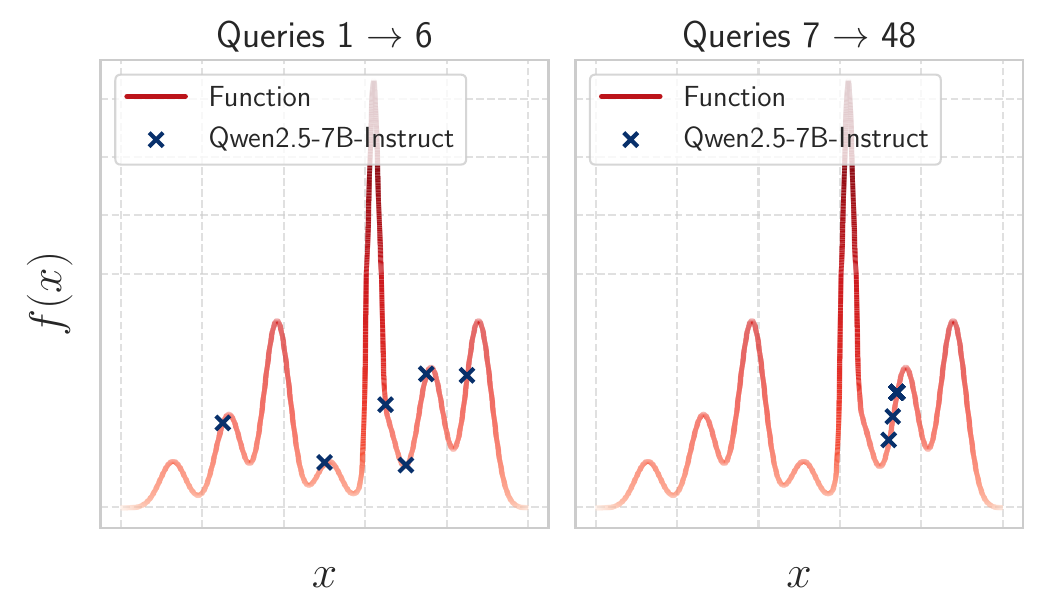}
  \end{subfigure}
  \hfill
  \begin{subfigure}[t]{0.33\textwidth}
    \centering
    \includegraphics[width=\linewidth]{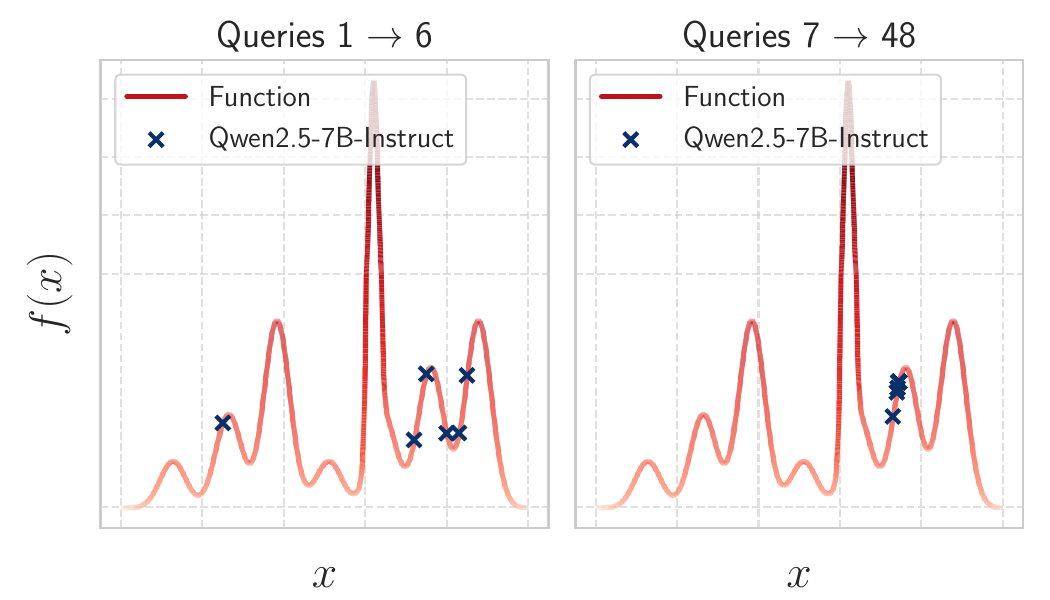}
  \end{subfigure}
  \hfill
  \begin{subfigure}[t]{0.33\textwidth}
    \centering
    \includegraphics[width=\linewidth]{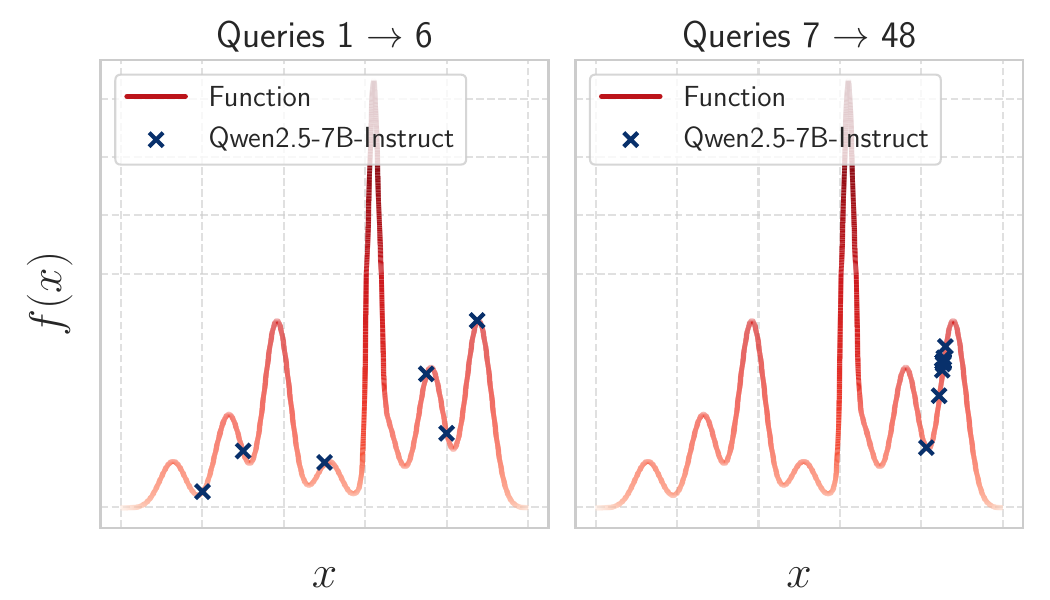}
  \end{subfigure}
  \hfill
  \begin{subfigure}[t]{0.33\textwidth}
    \centering
    \includegraphics[width=\linewidth]{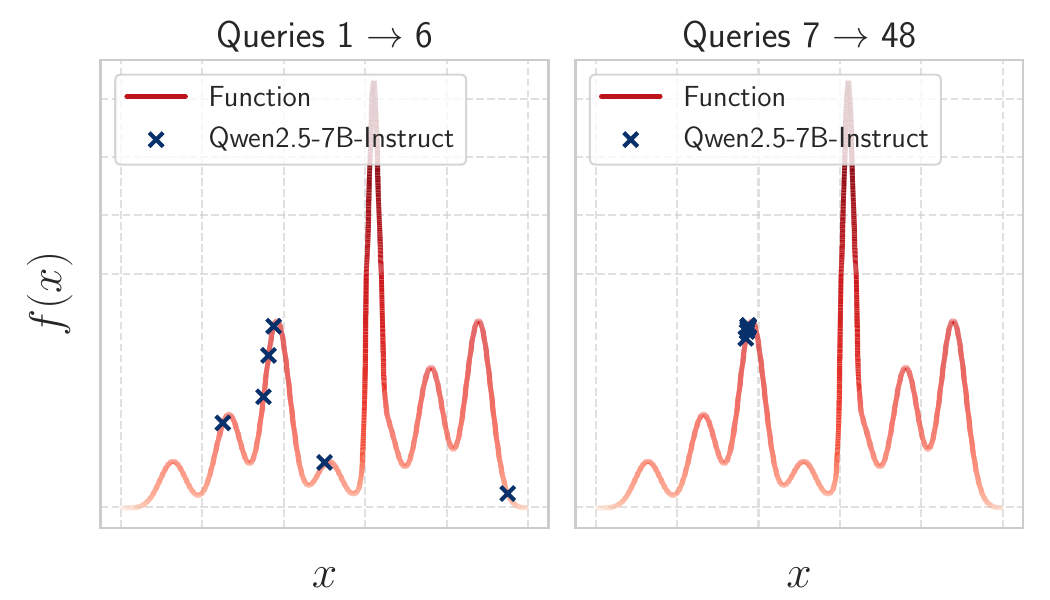}
  \end{subfigure}
  \hfill
  \begin{subfigure}[t]{0.33\textwidth}
    \centering
    \includegraphics[width=\linewidth]{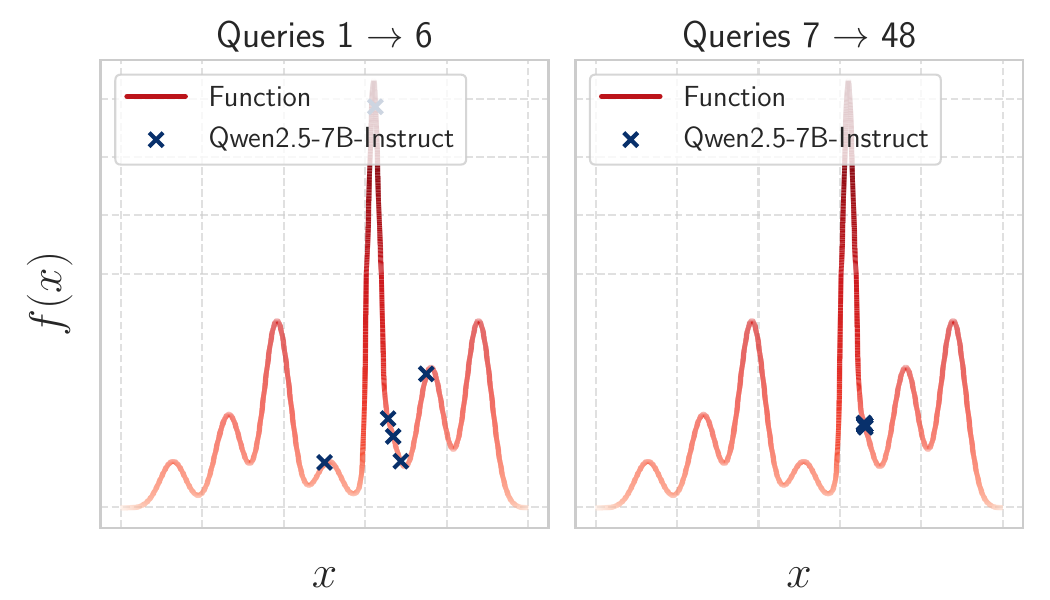}
  \end{subfigure}
  \hfill
  \begin{subfigure}[t]{0.33\textwidth}
    \centering
    \includegraphics[width=\linewidth]{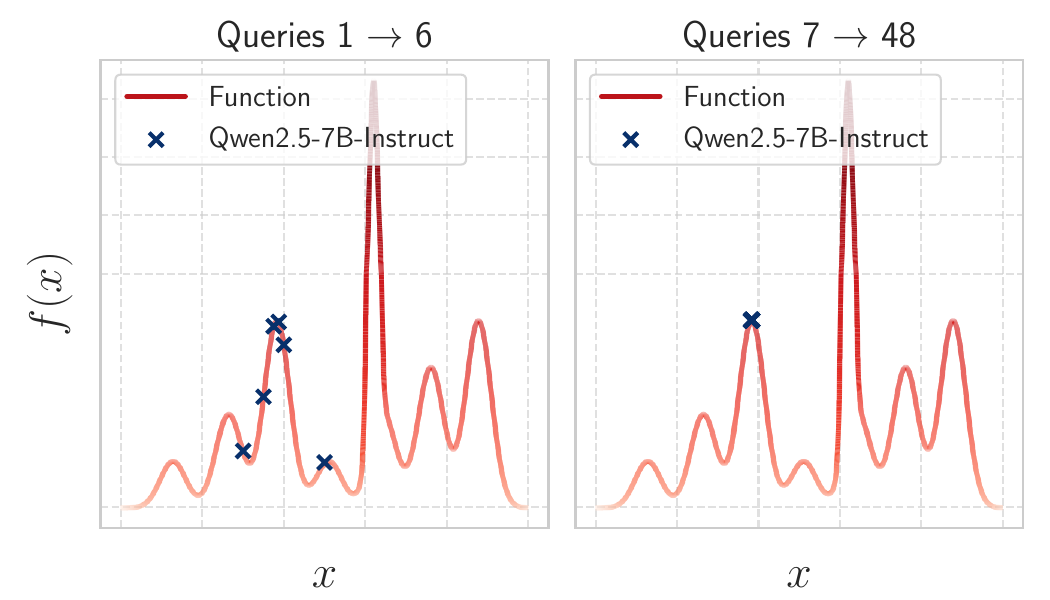}
  \end{subfigure}
  \hfill
  \begin{subfigure}[t]{0.33\textwidth}
    \centering
    \includegraphics[width=\linewidth]{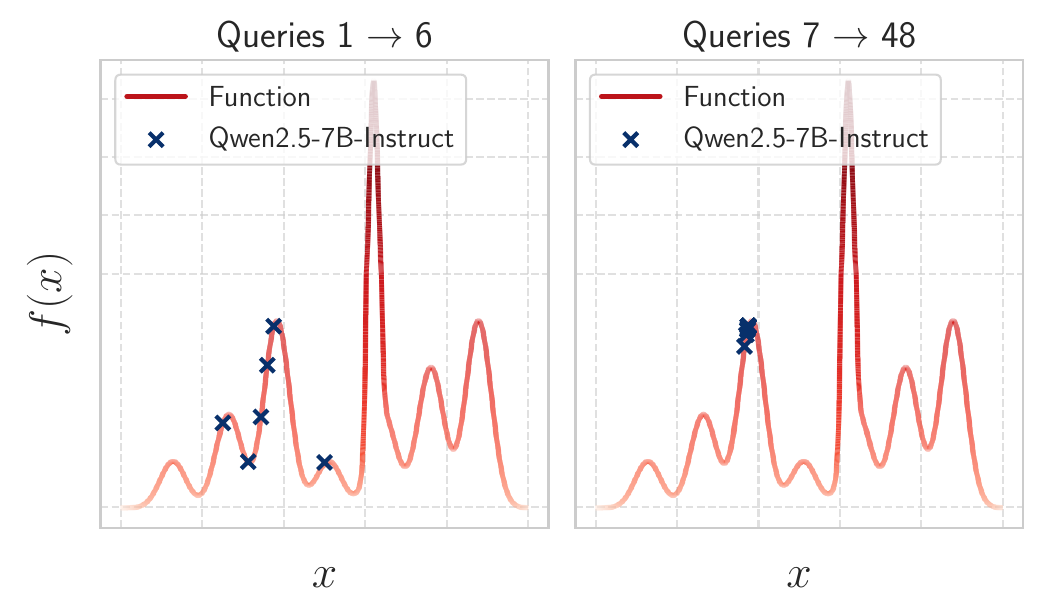}
  \end{subfigure}
  \hfill
  \begin{subfigure}[t]{0.33\textwidth}
    \centering
    \includegraphics[width=\linewidth]{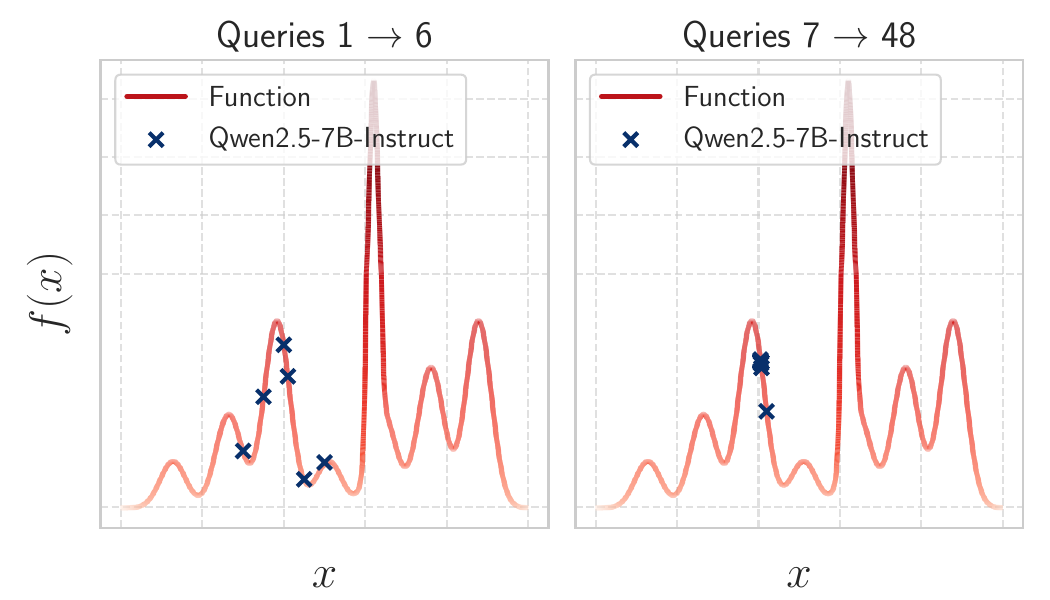}
  \end{subfigure}
  \hfill
  \begin{subfigure}[t]{0.33\textwidth}
    \centering
    \includegraphics[width=\linewidth]{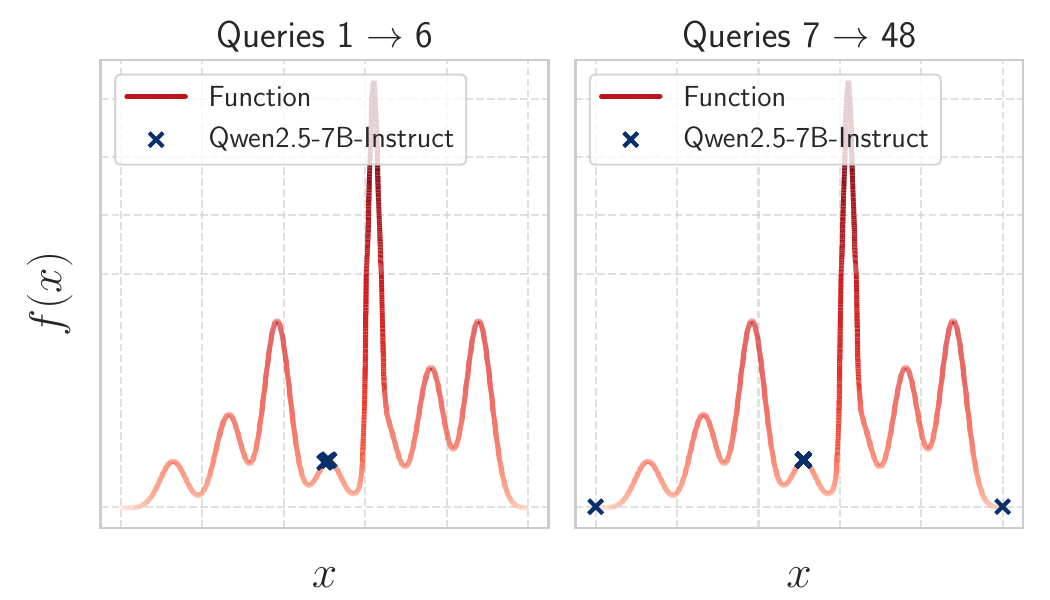}
  \end{subfigure}
  \hfill
  \begin{subfigure}[t]{0.33\textwidth}
    \centering
    \includegraphics[width=\linewidth]{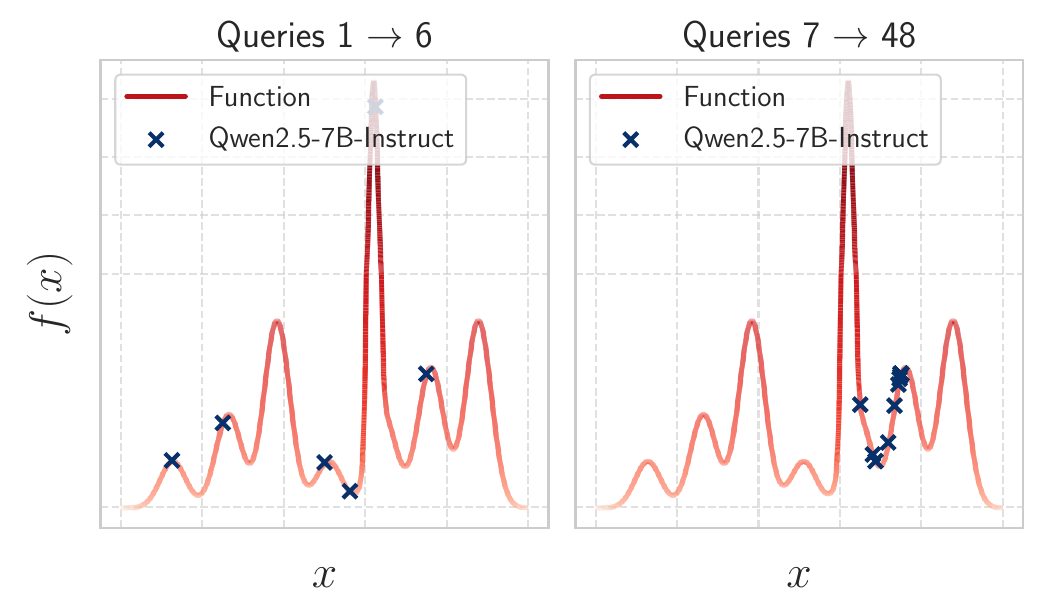}
  \end{subfigure}
  \hfill
  \begin{subfigure}[t]{0.33\textwidth}
    \centering
    \includegraphics[width=\linewidth]{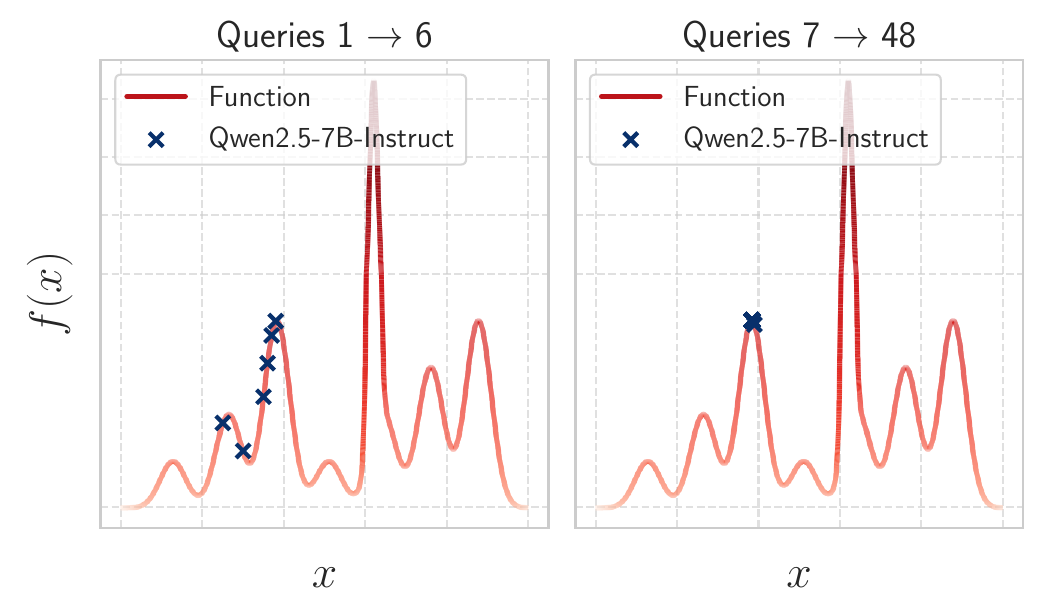}
  \end{subfigure}
  \hfill
  \begin{subfigure}[t]{0.33\textwidth}
    \centering
    \includegraphics[width=\linewidth]{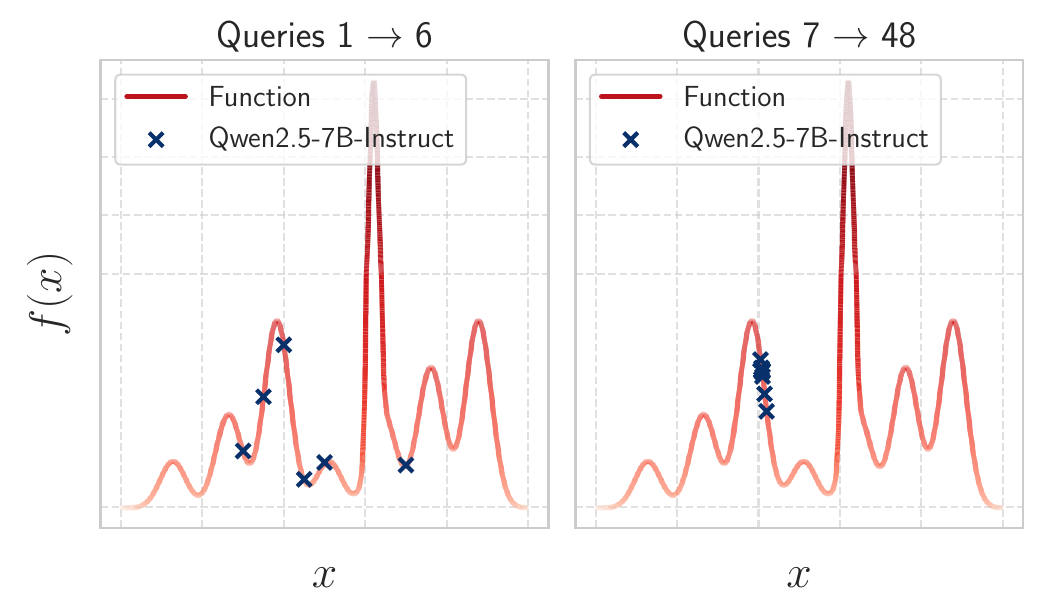}
  \end{subfigure}
  \hfill
  \begin{subfigure}[t]{0.33\textwidth}
    \centering
    \includegraphics[width=\linewidth]{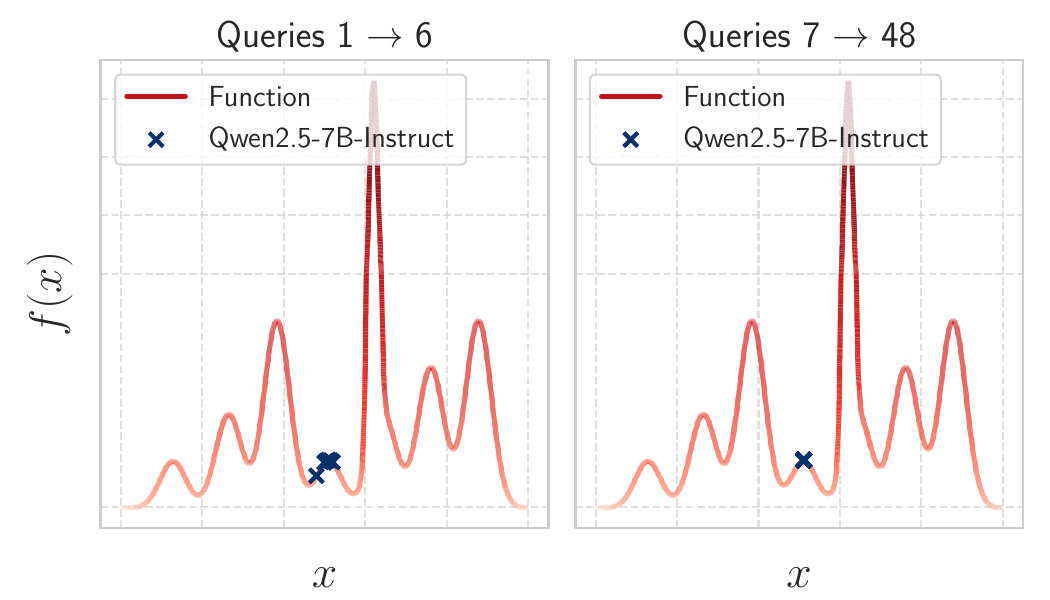}
  \end{subfigure}
  \hfill
  \begin{subfigure}[t]{0.33\textwidth}
    \centering
    \includegraphics[width=\linewidth]{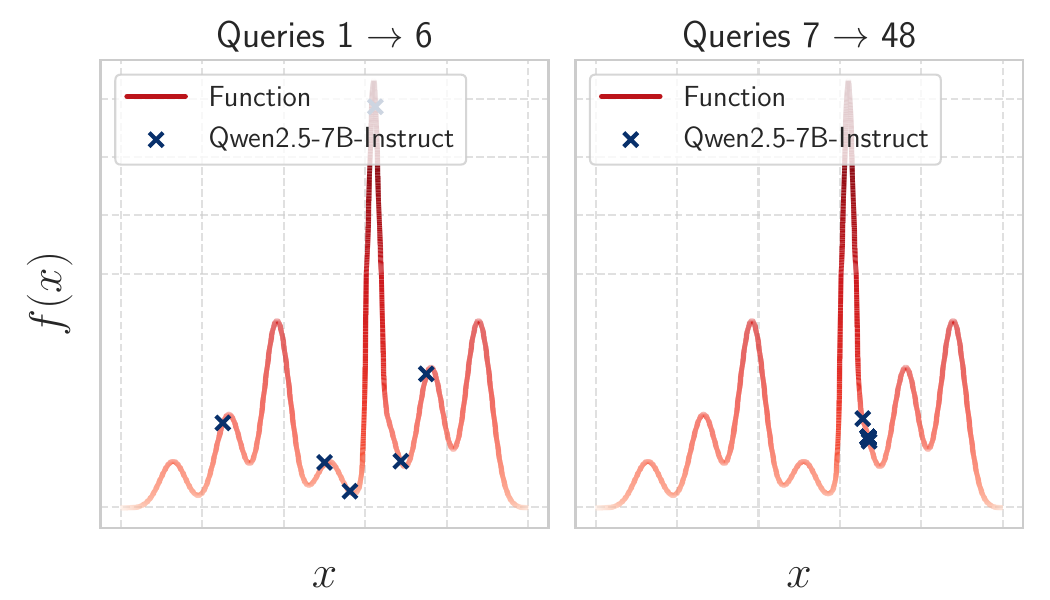}
  \end{subfigure}
  \hfill
  \begin{subfigure}[t]{0.33\textwidth}
    \centering
    \includegraphics[width=\linewidth]{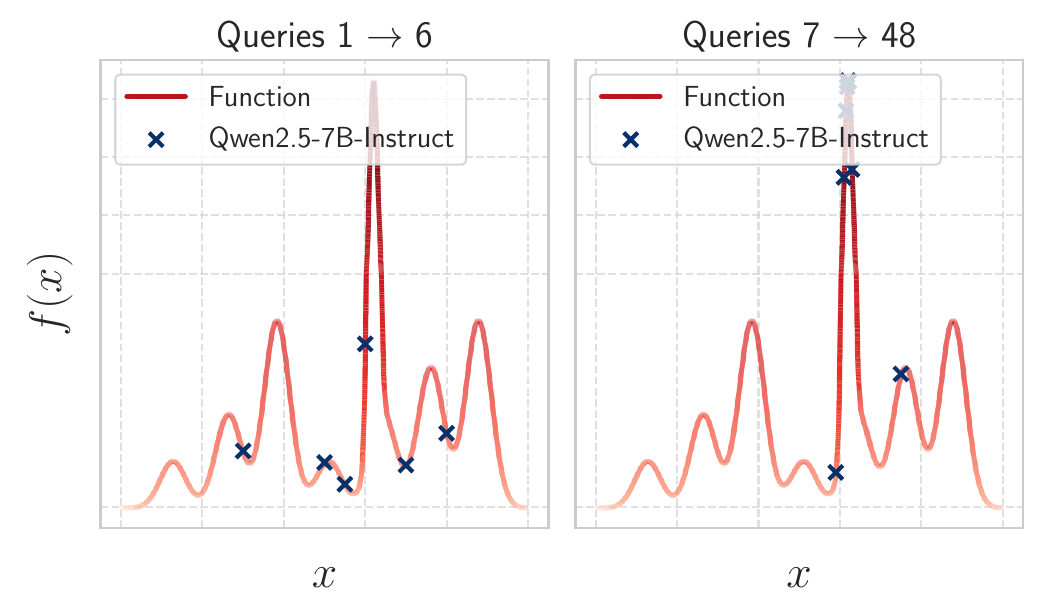}
  \end{subfigure}
  \hfill
  \begin{subfigure}[t]{0.33\textwidth}
    \centering
    \includegraphics[width=\linewidth]{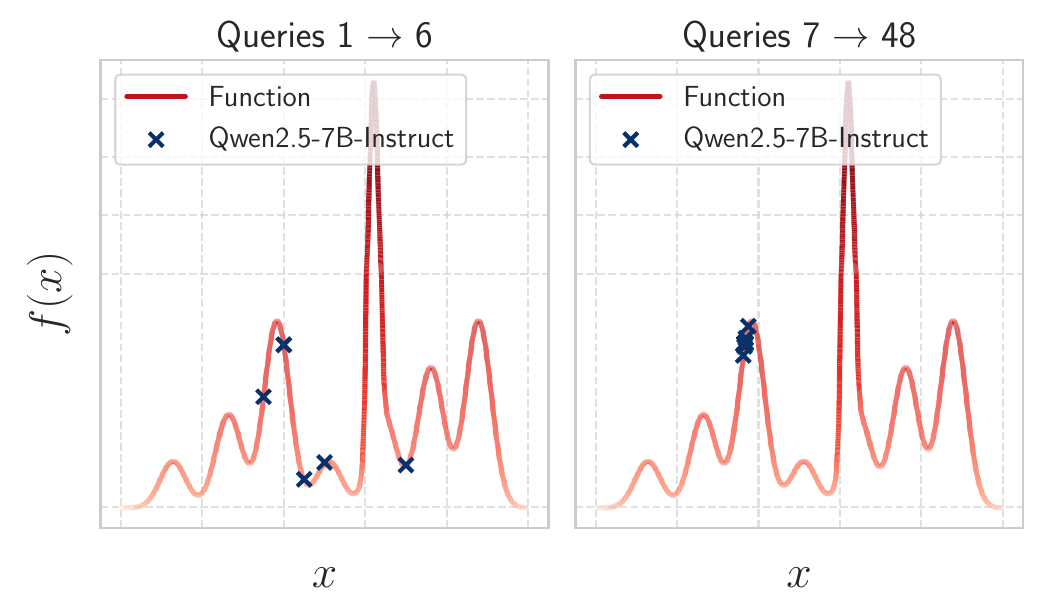}
  \end{subfigure}
  \hfill

  \caption{Visualization of 18 different episodes of \texttt{Qwen2.5-7B-Instruct} on \tsearch{} with budget $N=48$. Queries are shown as crosses on the hidden function. Early steps ($1$--$6$) explore the space, while later steps ($7$--$48$) cluster near local maxima.}
  \label{fig:interp:all:hill}
\end{figure*}
\begin{figure*}[p] 
  \centering

  \begin{subfigure}[t]{0.33\textwidth}
    \centering
    \includegraphics[width=\linewidth]{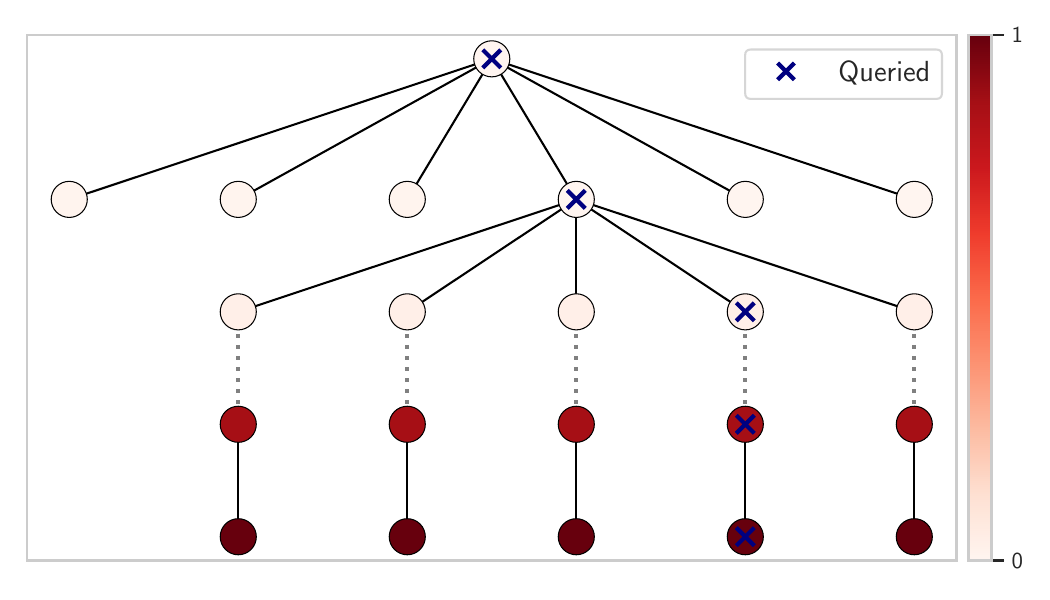}
  \end{subfigure}
  \hfill
  \begin{subfigure}[t]{0.33\textwidth}
    \centering
    \includegraphics[width=\linewidth]{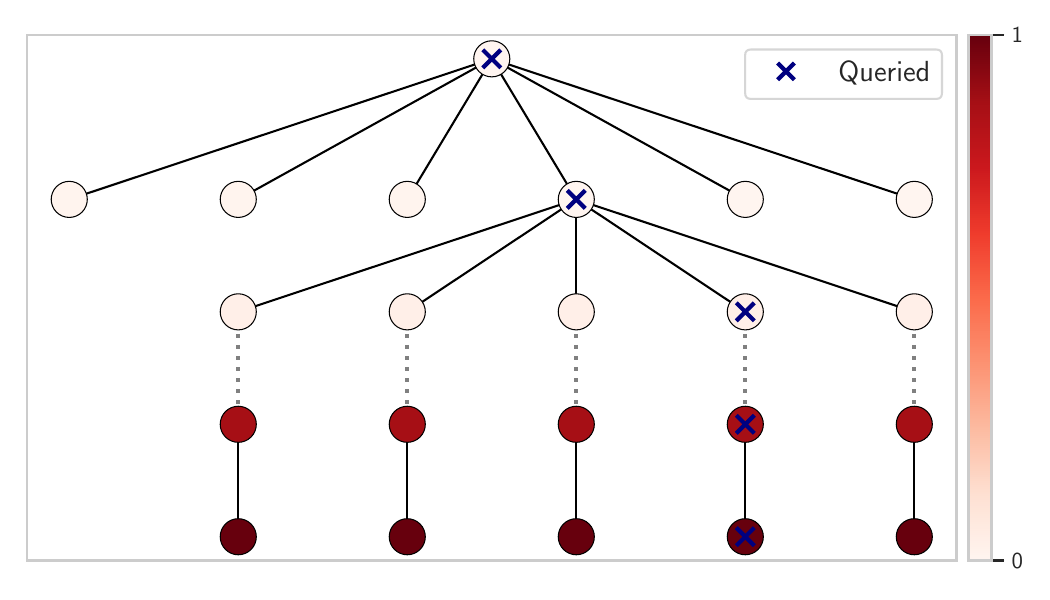}
  \end{subfigure}
  \hfill
  \begin{subfigure}[t]{0.33\textwidth}
    \centering
    \includegraphics[width=\linewidth]{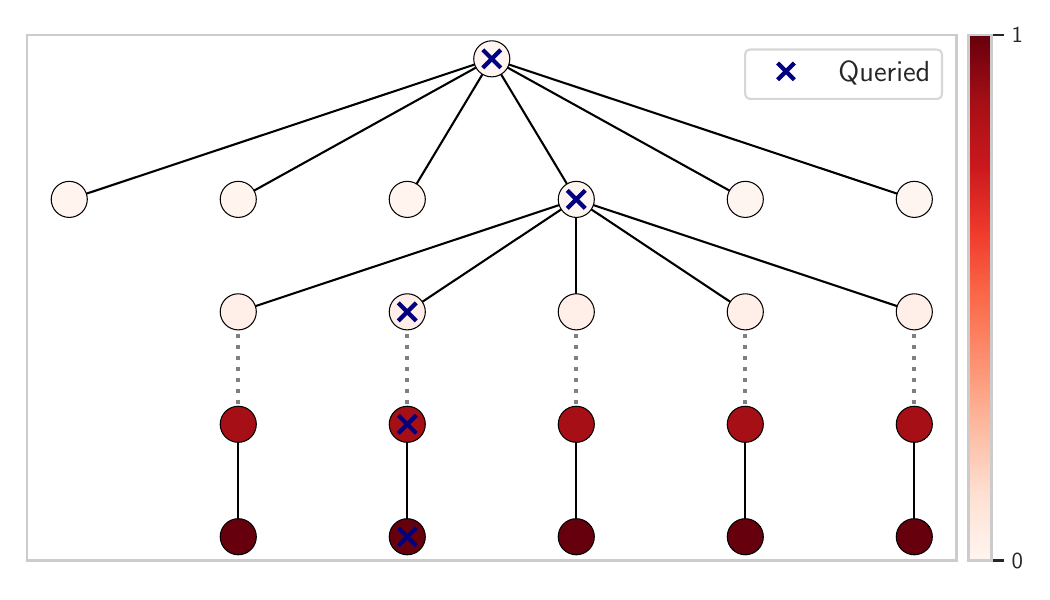}
  \end{subfigure}
  \hfill
  \begin{subfigure}[t]{0.33\textwidth}
    \centering
    \includegraphics[width=\linewidth]{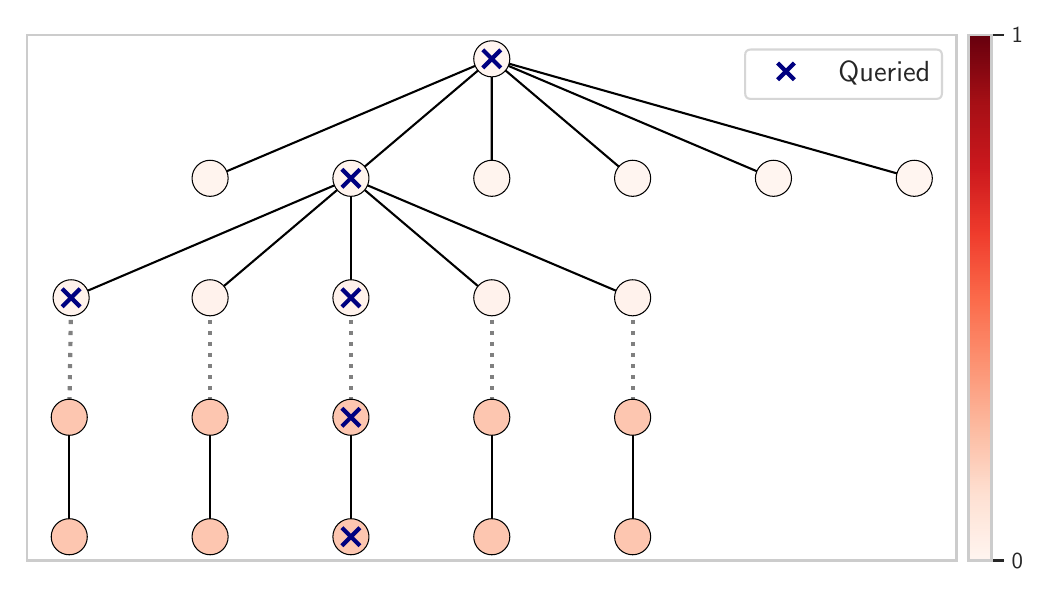}
  \end{subfigure}
  \hfill
  \begin{subfigure}[t]{0.33\textwidth}
    \centering
    \includegraphics[width=\linewidth]{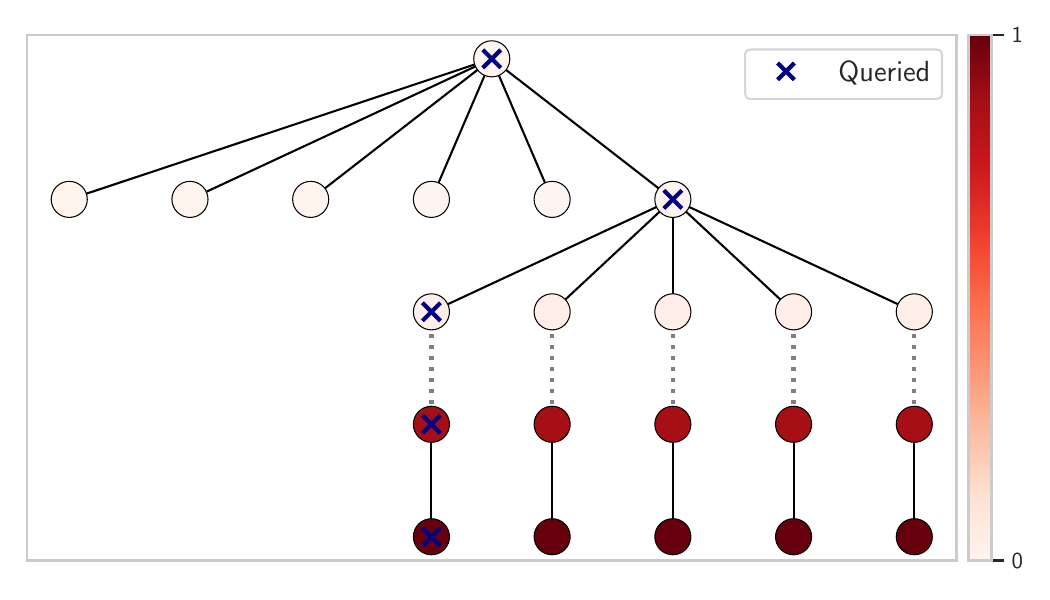}
  \end{subfigure}
  \hfill
  \begin{subfigure}[t]{0.33\textwidth}
    \centering
    \includegraphics[width=\linewidth]{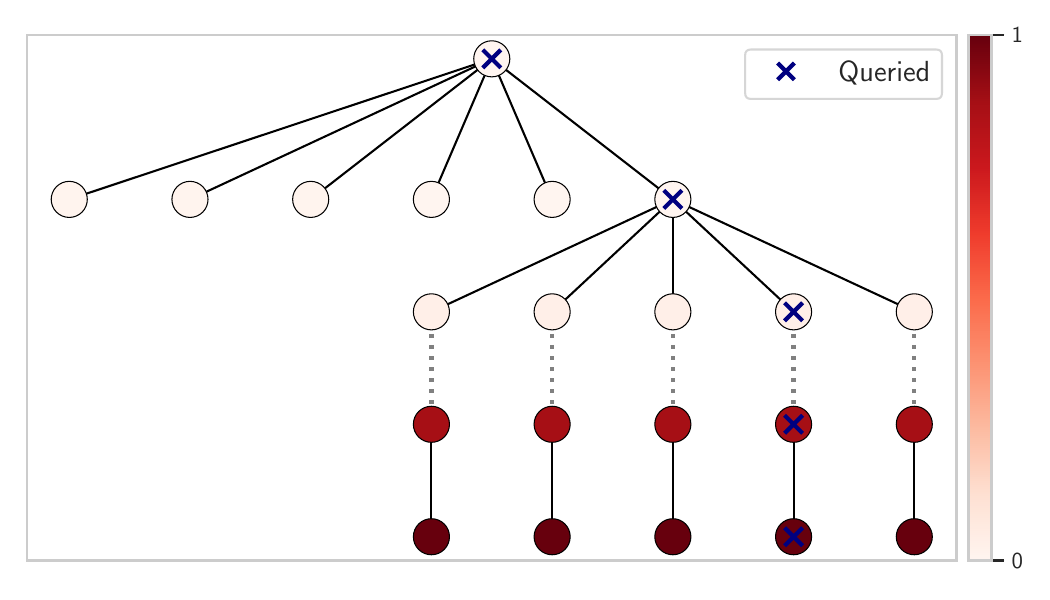}
  \end{subfigure}
  \hfill
  \begin{subfigure}[t]{0.33\textwidth}
    \centering
    \includegraphics[width=\linewidth]{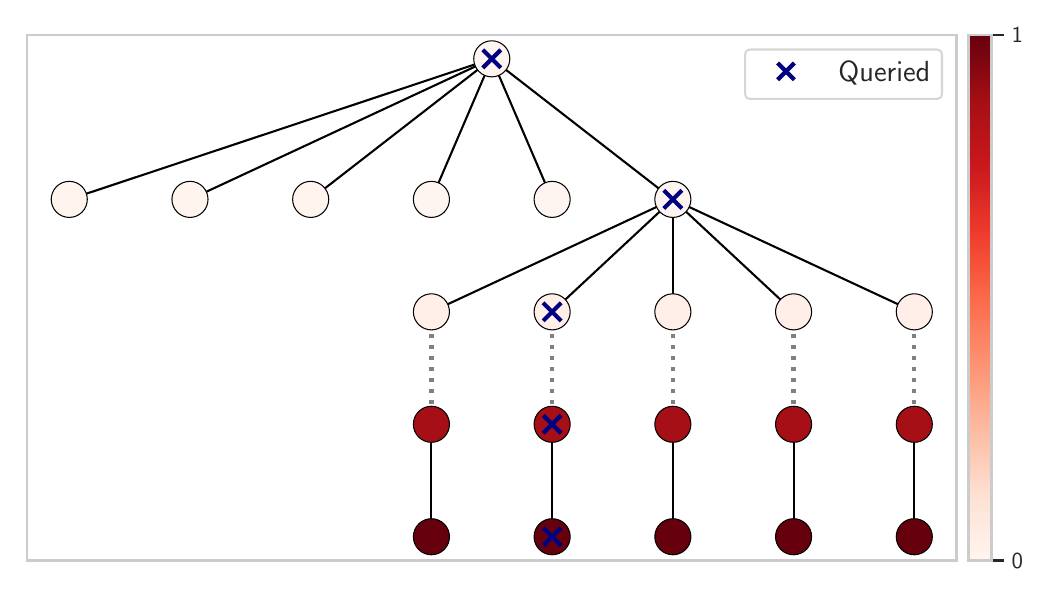}
  \end{subfigure}
  \hfill
  \begin{subfigure}[t]{0.33\textwidth}
    \centering
    \includegraphics[width=\linewidth]{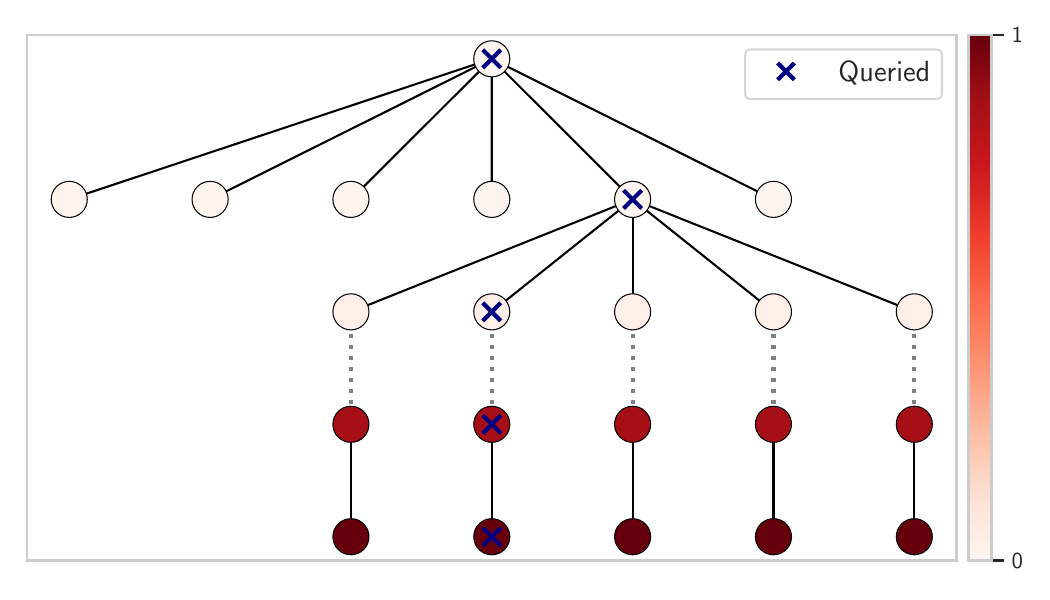}
  \end{subfigure}
  \hfill
  \begin{subfigure}[t]{0.33\textwidth}
    \centering
    \includegraphics[width=\linewidth]{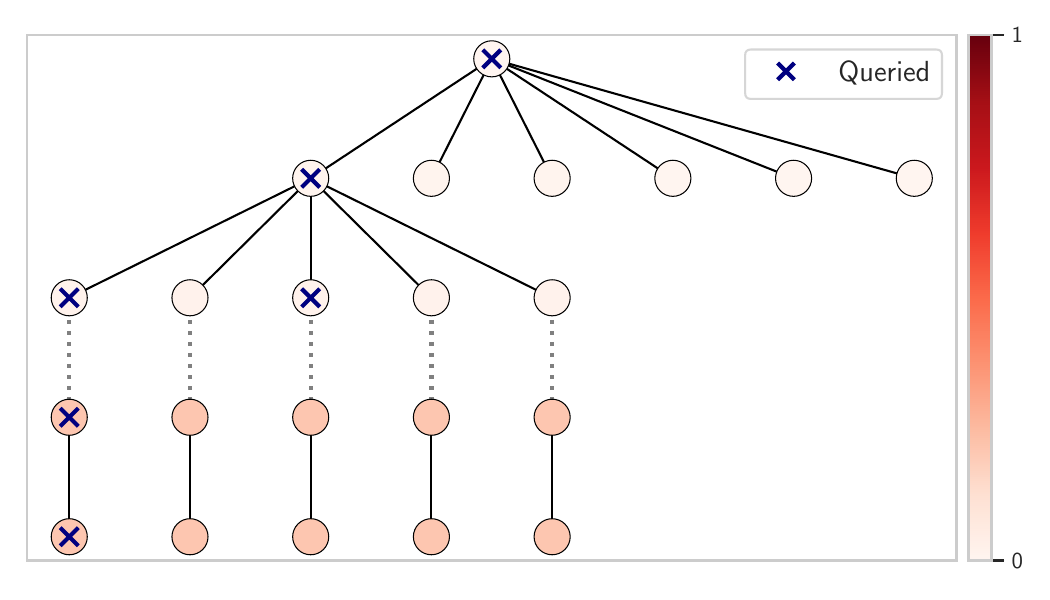}
  \end{subfigure}
  \hfill
  \begin{subfigure}[t]{0.33\textwidth}
    \centering
    \includegraphics[width=\linewidth]{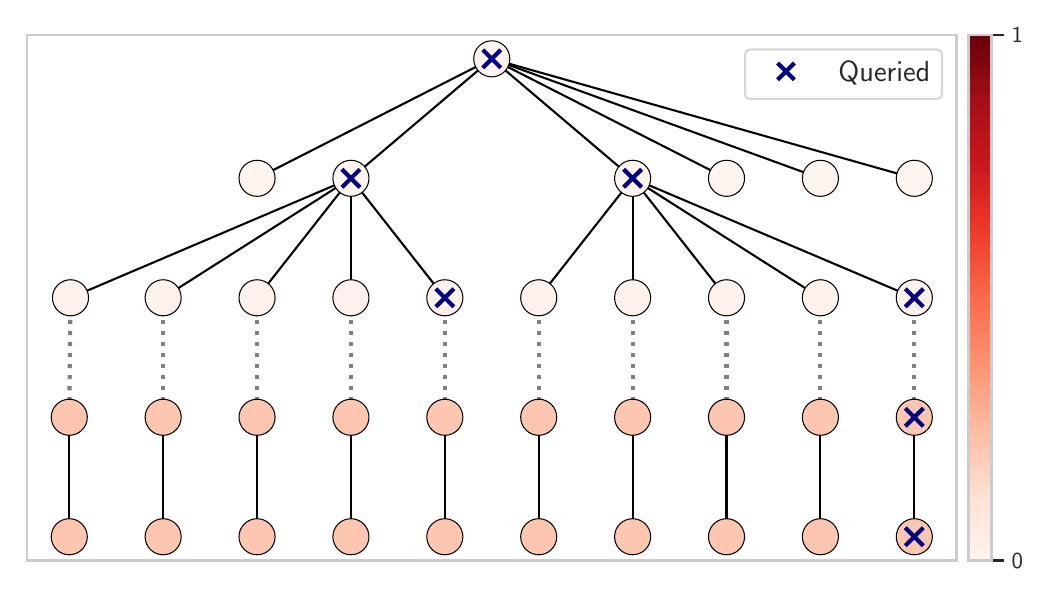}
  \end{subfigure}
  \hfill
  \begin{subfigure}[t]{0.33\textwidth}
    \centering
    \includegraphics[width=\linewidth]{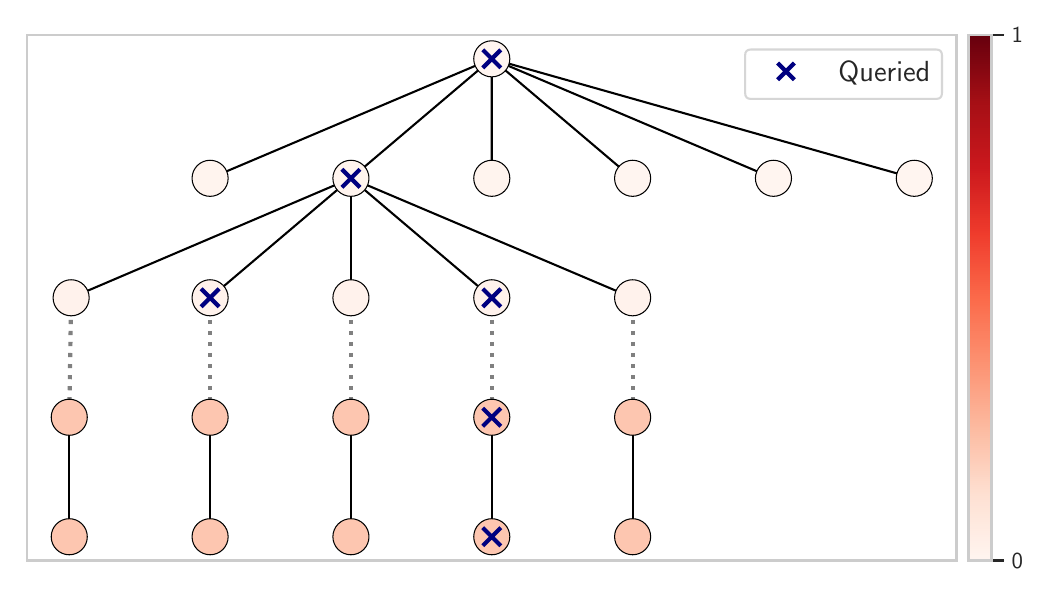}
  \end{subfigure}
  \hfill
  \begin{subfigure}[t]{0.33\textwidth}
    \centering
    \includegraphics[width=\linewidth]{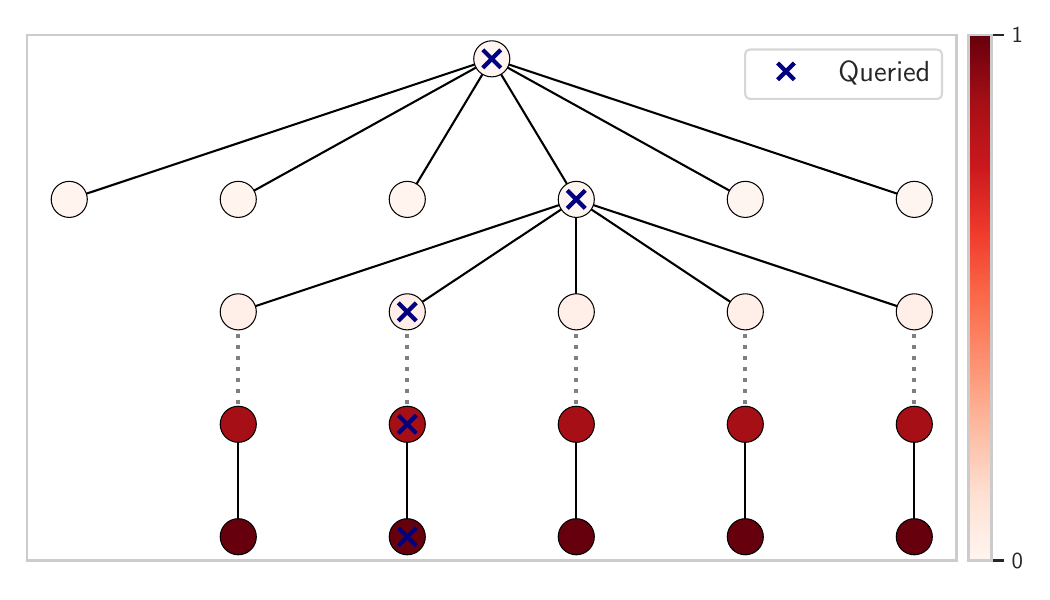}
  \end{subfigure}
  \hfill
  \begin{subfigure}[t]{0.33\textwidth}
    \centering
    \includegraphics[width=\linewidth]{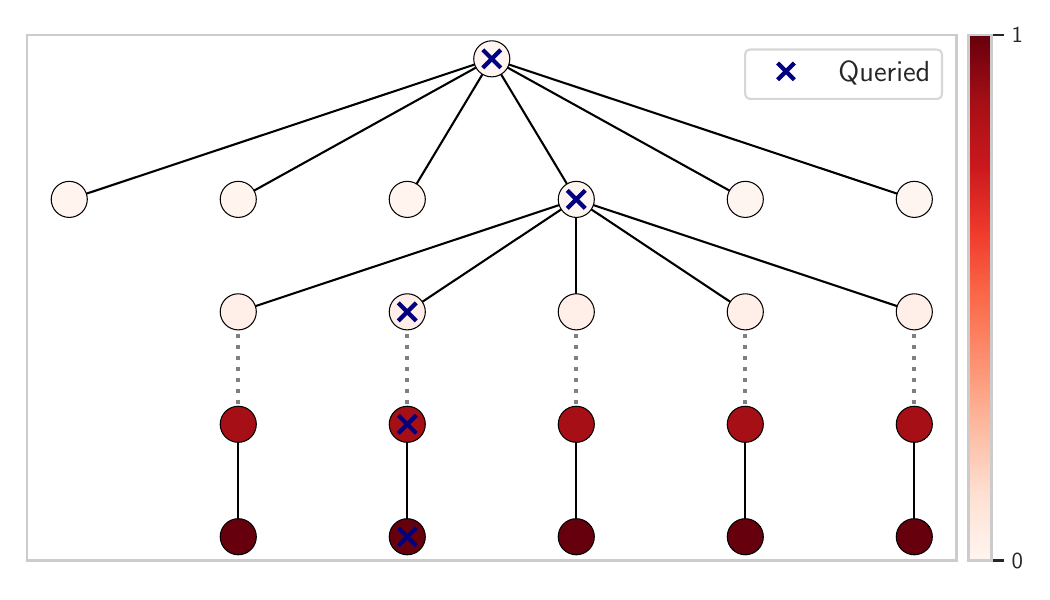}
  \end{subfigure}
  \hfill
  \begin{subfigure}[t]{0.33\textwidth}
    \centering
    \includegraphics[width=\linewidth]{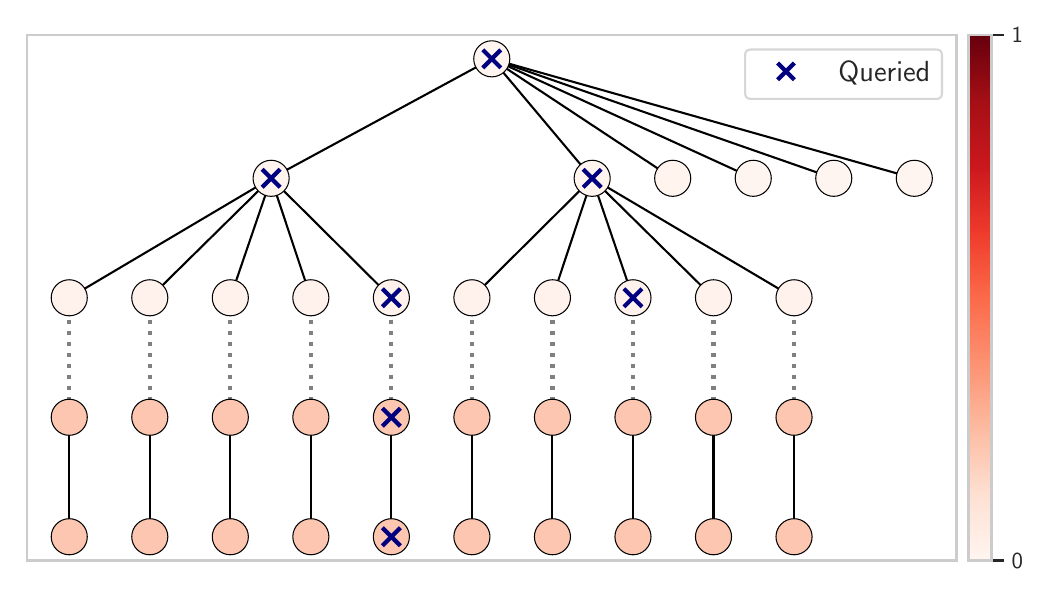}
  \end{subfigure}
  \hfill
  \begin{subfigure}[t]{0.33\textwidth}
    \centering
    \includegraphics[width=\linewidth]{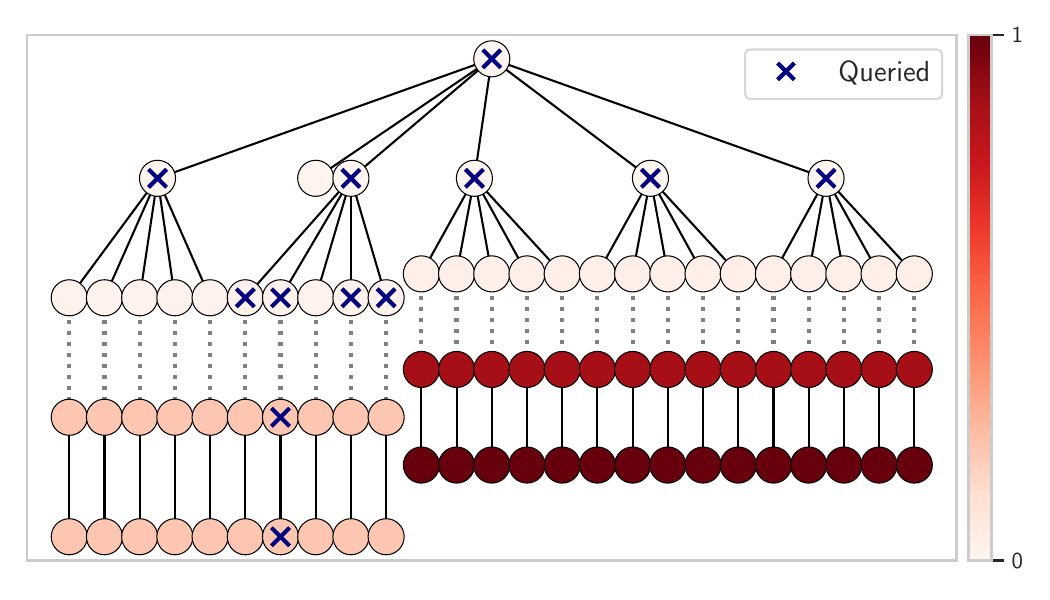}
  \end{subfigure}
  \hfill
  \begin{subfigure}[t]{0.33\textwidth}
    \centering
    \includegraphics[width=\linewidth]{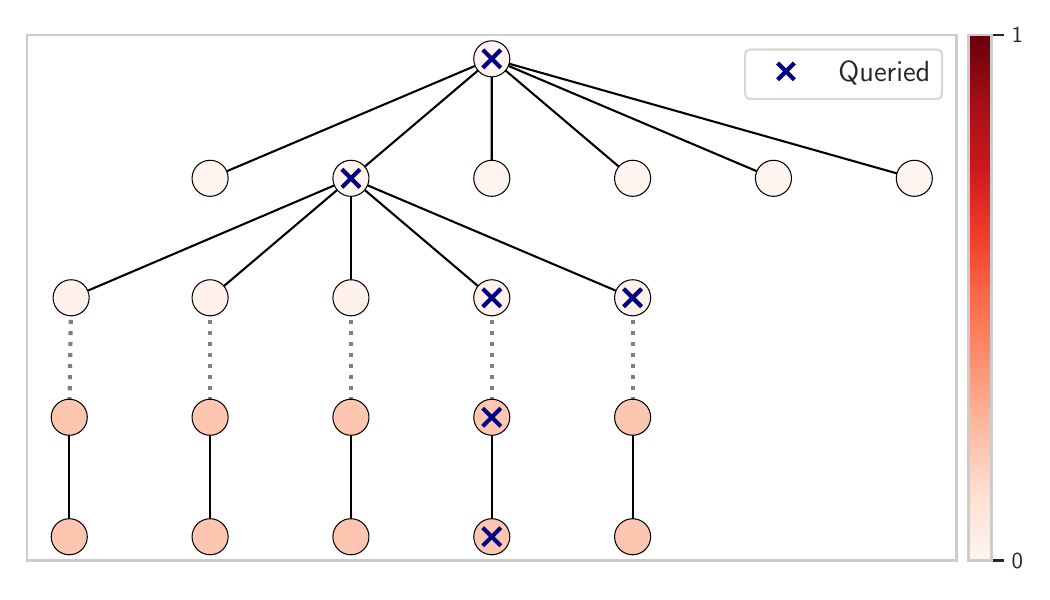}
  \end{subfigure}
  \hfill
  \begin{subfigure}[t]{0.33\textwidth}
    \centering
    \includegraphics[width=\linewidth]{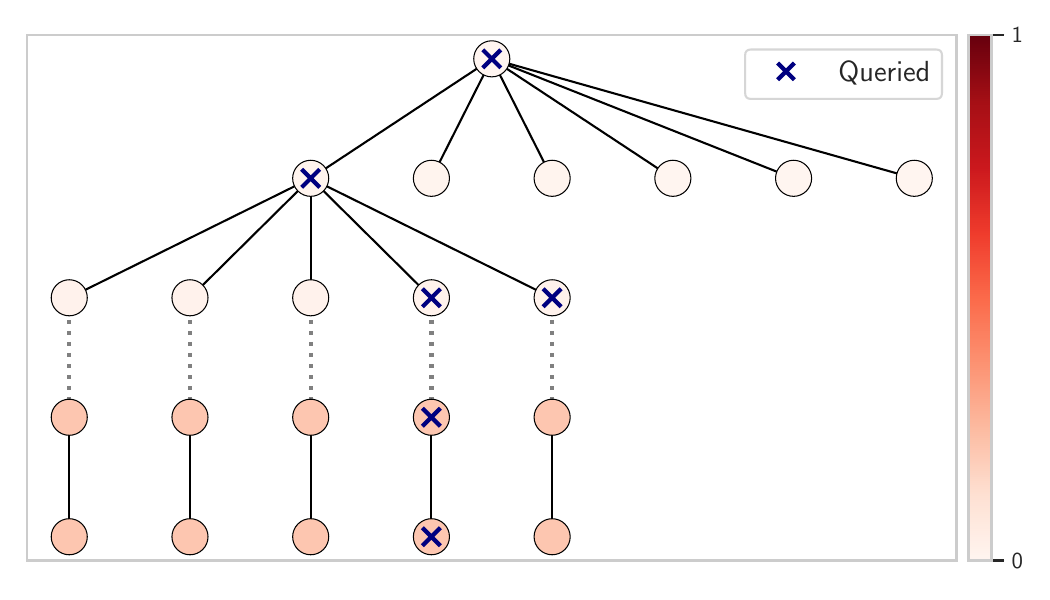}
  \end{subfigure}
  \hfill
  \begin{subfigure}[t]{0.33\textwidth}
    \centering
    \includegraphics[width=\linewidth]{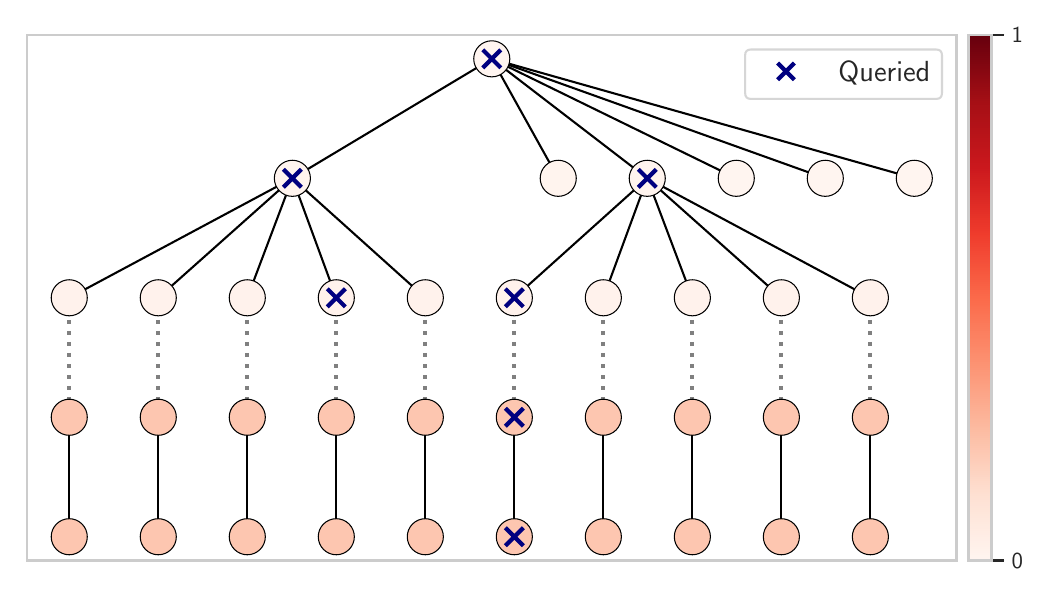}
  \end{subfigure}
  \hfill

  \caption{Visualization of 18 different episodes of \texttt{Qwen2.5-7B-Instruct} on \ttree{} with budget $N=48$. Crosses indicate queried nodes. Darker node color indicate higher reward. Other unqueried nodes are omitted. In almost all episodes, the model greedily descends any branch it enters.}
  \label{fig:interp:all:tree}
\end{figure*}
\begin{figure*}[p] 
  \centering

  \begin{subfigure}[t]{0.33\textwidth}
    \centering
    \includegraphics[width=\linewidth]{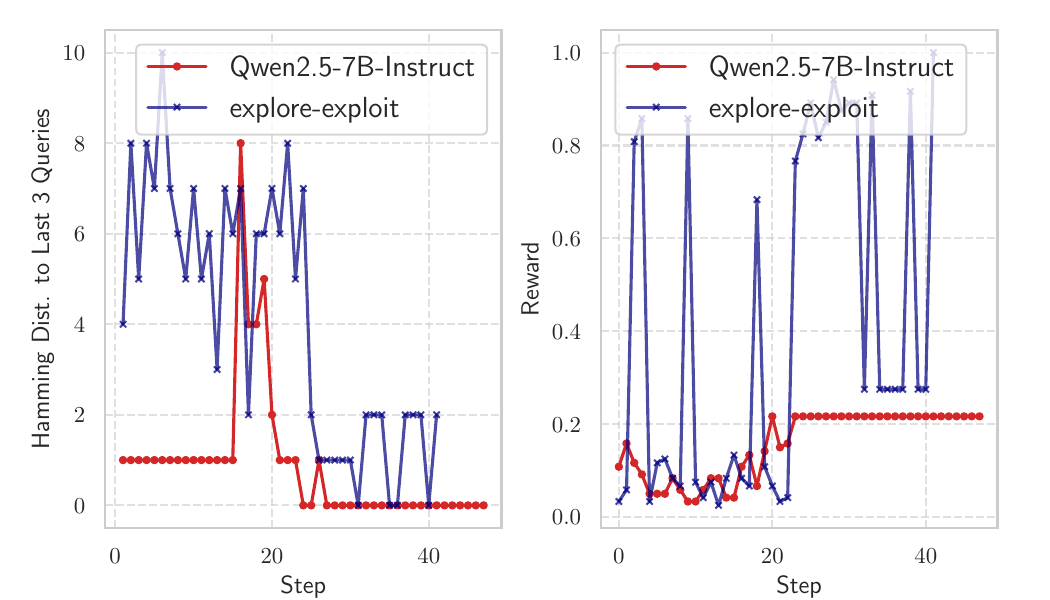}
  \end{subfigure}
  \hfill
  \begin{subfigure}[t]{0.33\textwidth}
    \centering
    \includegraphics[width=\linewidth]{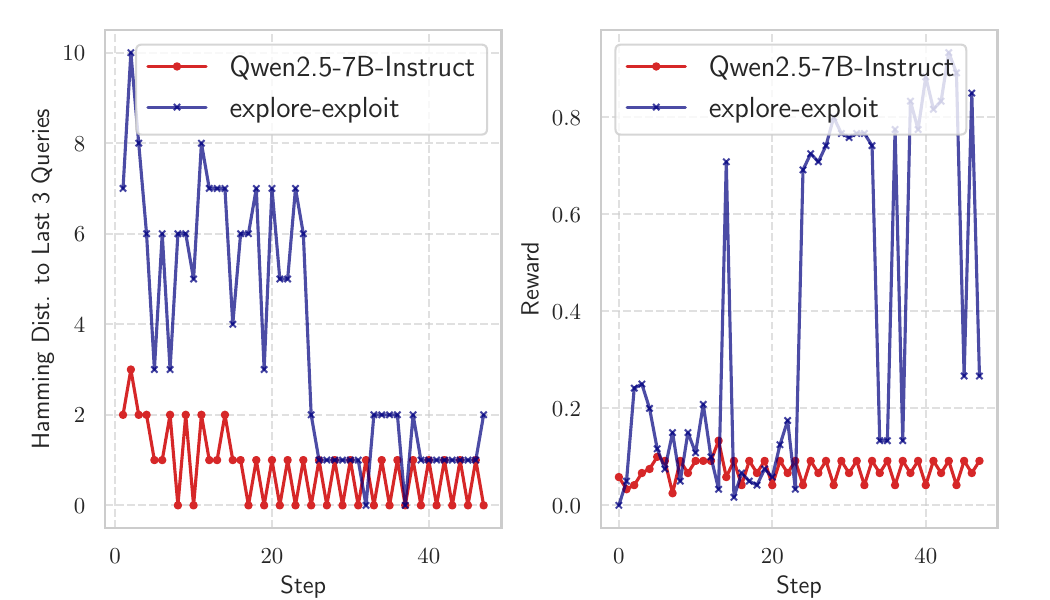}
  \end{subfigure}
  \hfill
  \begin{subfigure}[t]{0.33\textwidth}
    \centering
    \includegraphics[width=\linewidth]{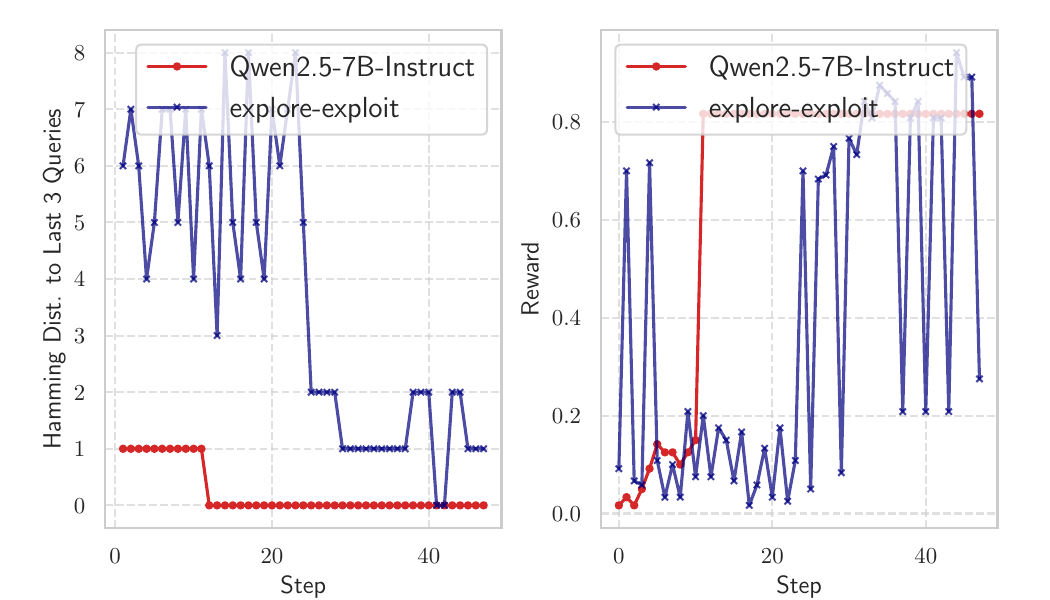}
  \end{subfigure}
  \hfill
  \begin{subfigure}[t]{0.33\textwidth}
    \centering
    \includegraphics[width=\linewidth]{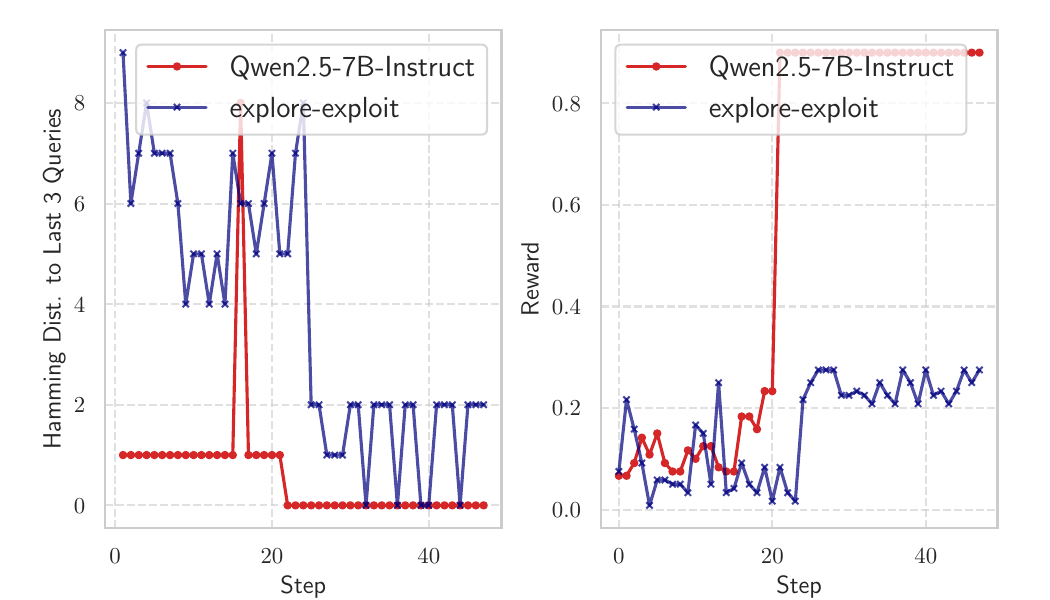}
  \end{subfigure}
  \hfill
  \begin{subfigure}[t]{0.33\textwidth}
    \centering
    \includegraphics[width=\linewidth]{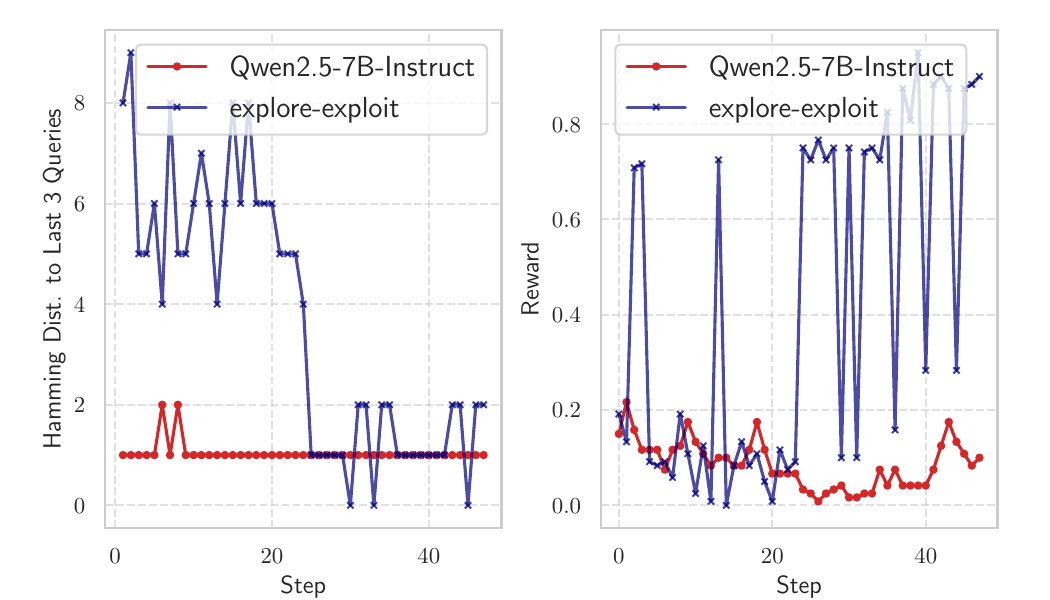}
  \end{subfigure}
  \hfill
  \begin{subfigure}[t]{0.33\textwidth}
    \centering
    \includegraphics[width=\linewidth]{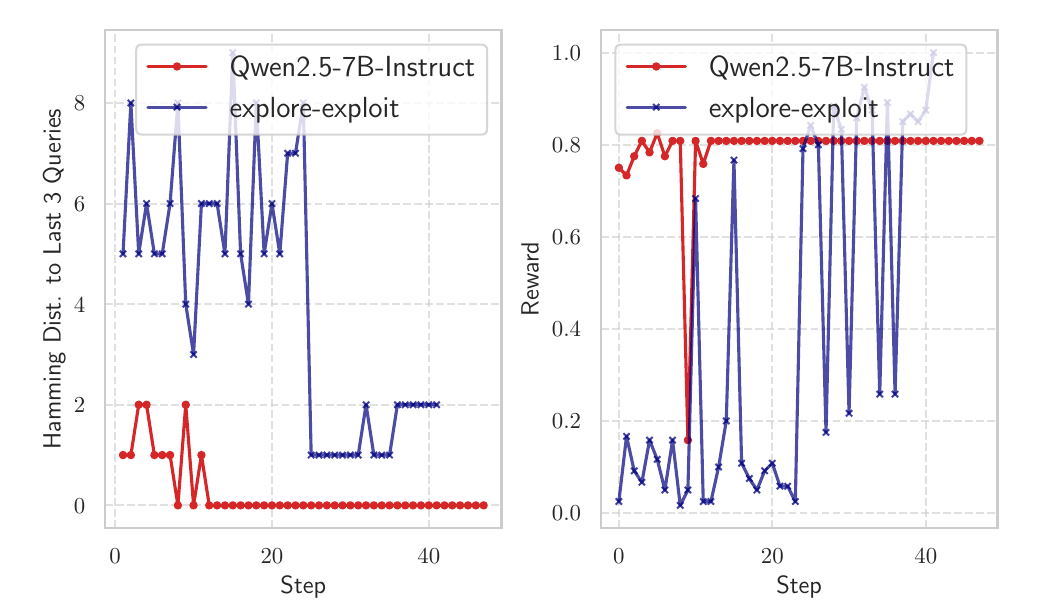}
  \end{subfigure}
  \hfill
  \begin{subfigure}[t]{0.33\textwidth}
    \centering
    \includegraphics[width=\linewidth]{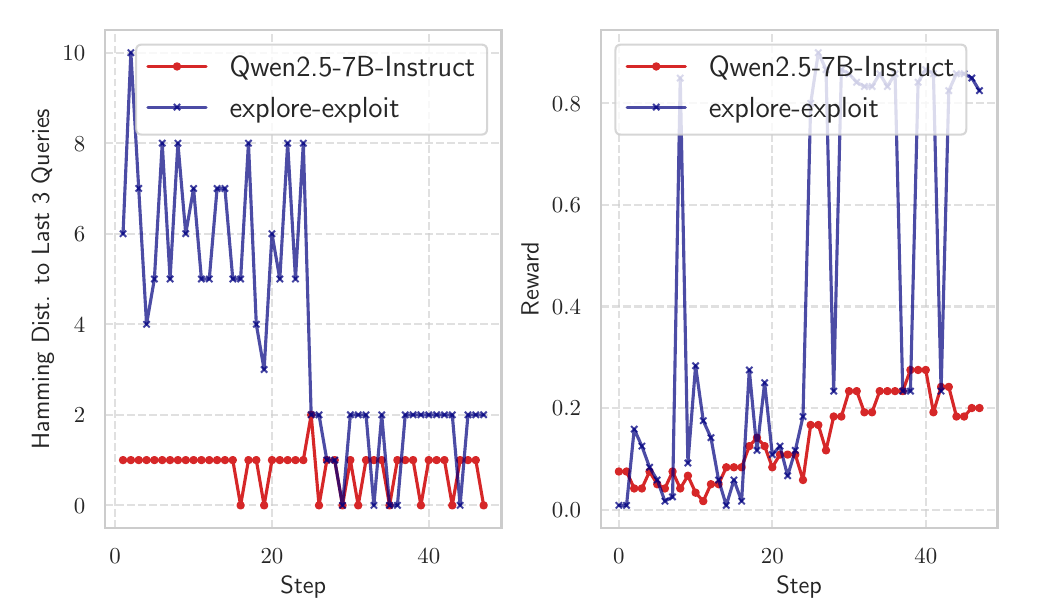}
  \end{subfigure}
  \hfill
  \begin{subfigure}[t]{0.33\textwidth}
    \centering
    \includegraphics[width=\linewidth]{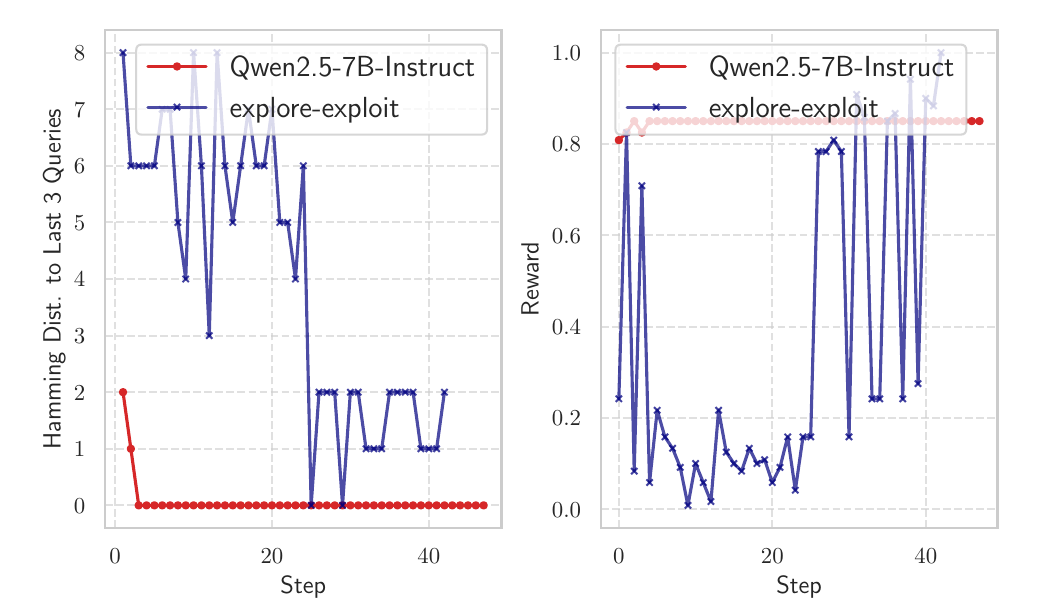}
  \end{subfigure}
  \hfill
  \begin{subfigure}[t]{0.33\textwidth}
    \centering
    \includegraphics[width=\linewidth]{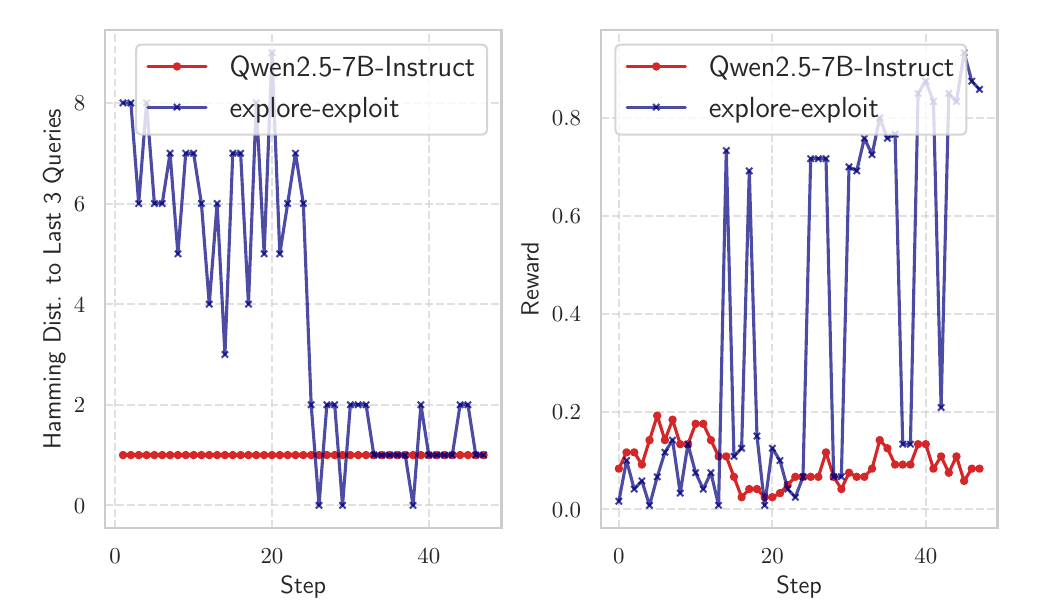}
  \end{subfigure}
  \hfill
  \begin{subfigure}[t]{0.33\textwidth}
    \centering
    \includegraphics[width=\linewidth]{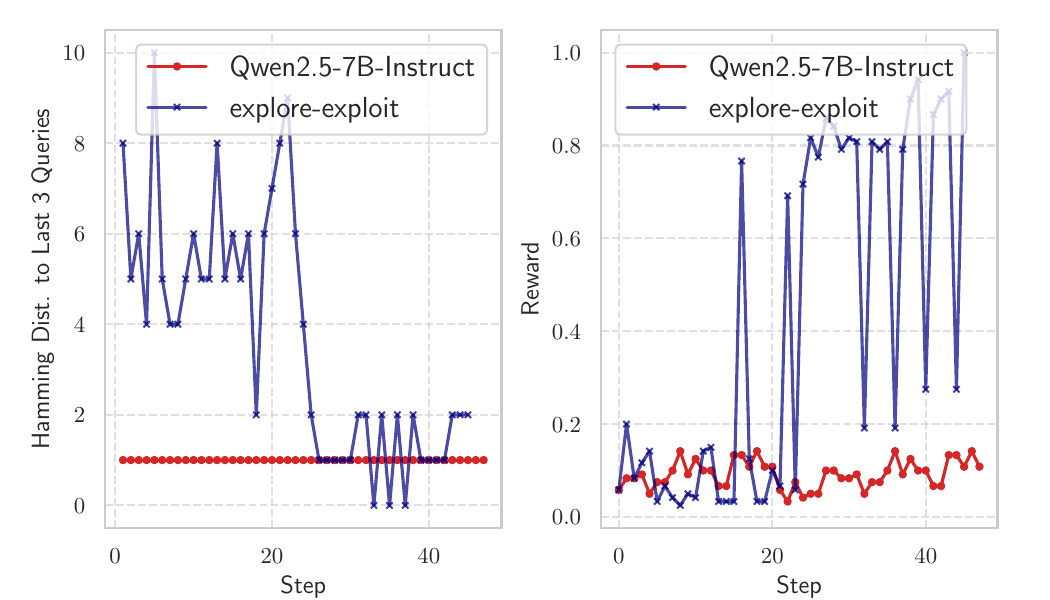}
  \end{subfigure}
  \hfill
  \begin{subfigure}[t]{0.33\textwidth}
    \centering
    \includegraphics[width=\linewidth]{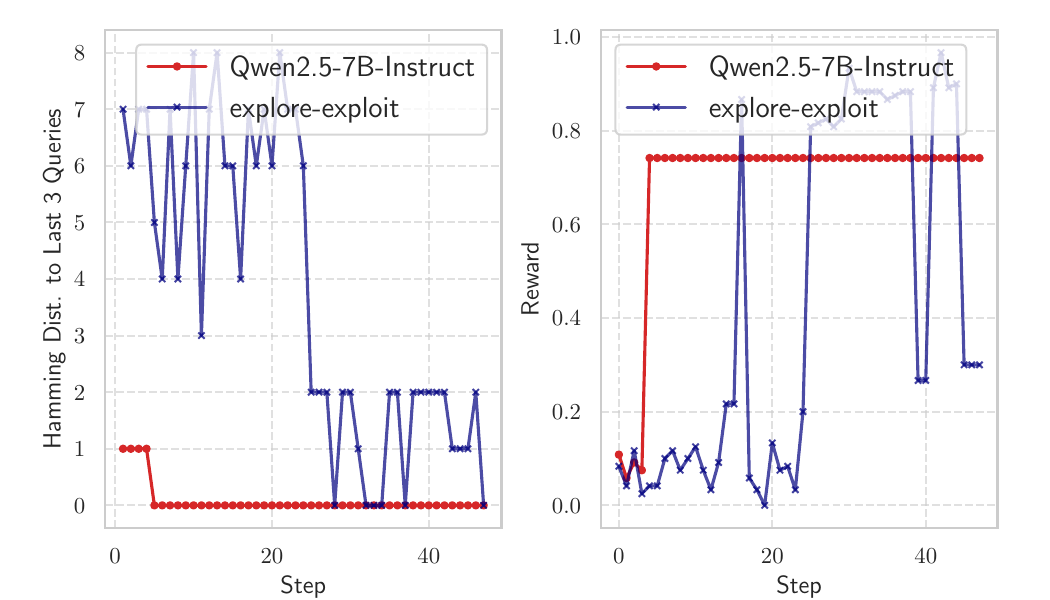}
  \end{subfigure}
  \hfill
  \begin{subfigure}[t]{0.33\textwidth}
    \centering
    \includegraphics[width=\linewidth]{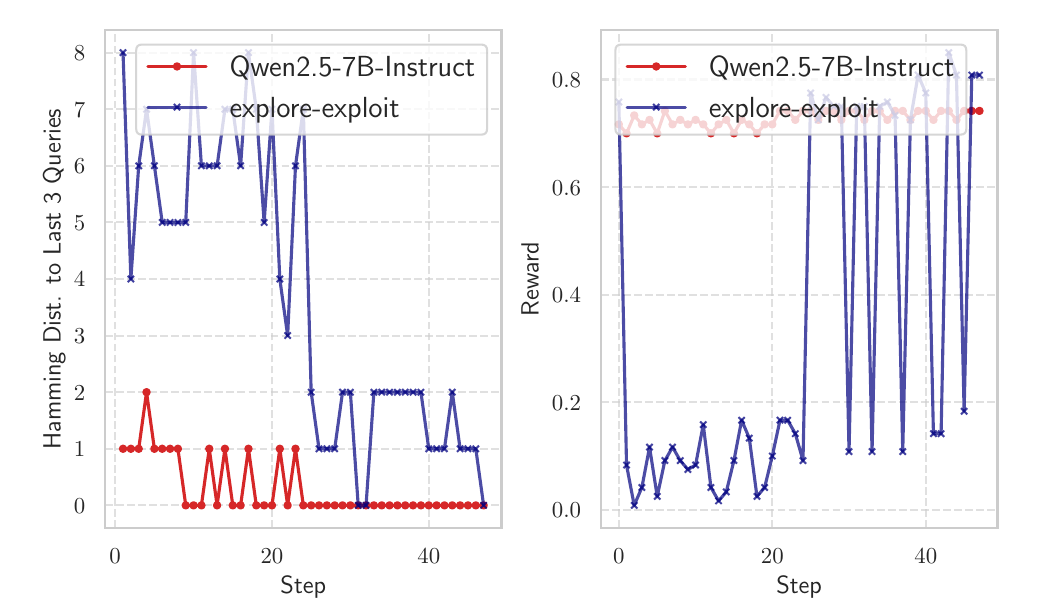}
  \end{subfigure}
  \hfill
  \begin{subfigure}[t]{0.33\textwidth}
    \centering
    \includegraphics[width=\linewidth]{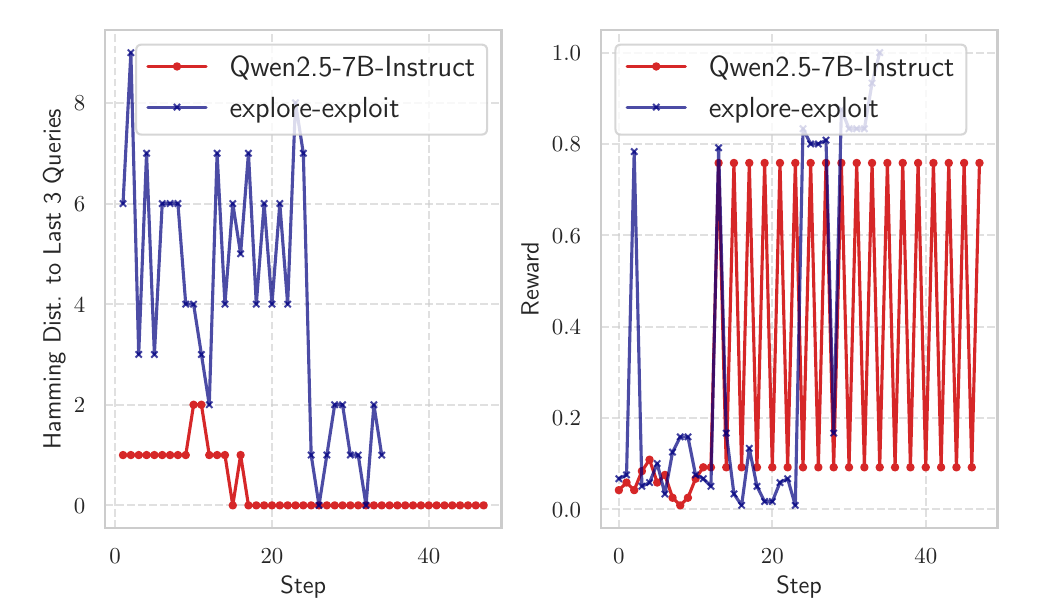}
  \end{subfigure}
  \hfill
  \begin{subfigure}[t]{0.33\textwidth}
    \centering
    \includegraphics[width=\linewidth]{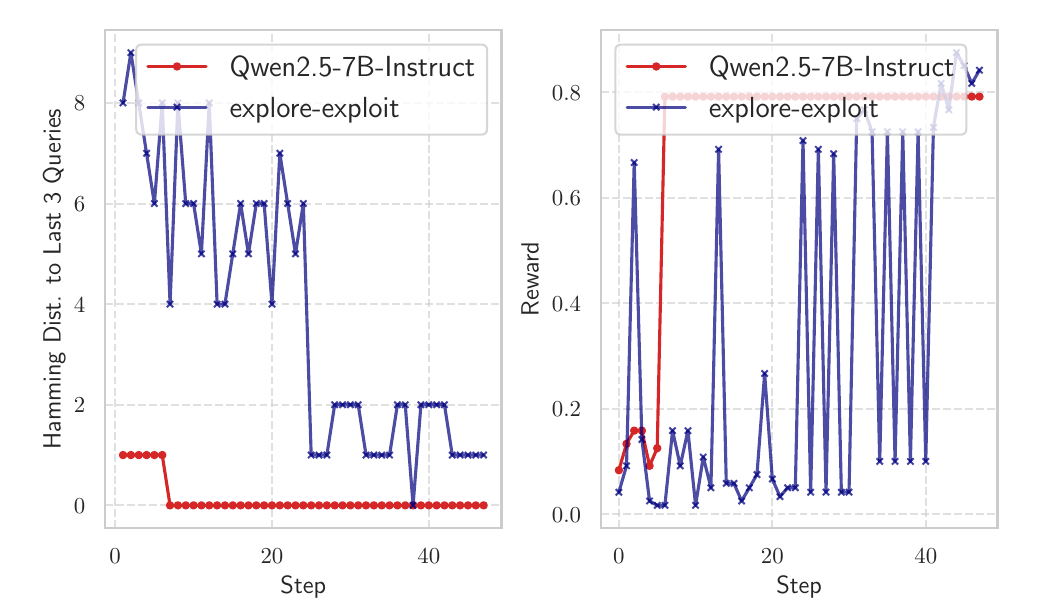}
  \end{subfigure}
  \hfill
  \begin{subfigure}[t]{0.33\textwidth}
    \centering
    \includegraphics[width=\linewidth]{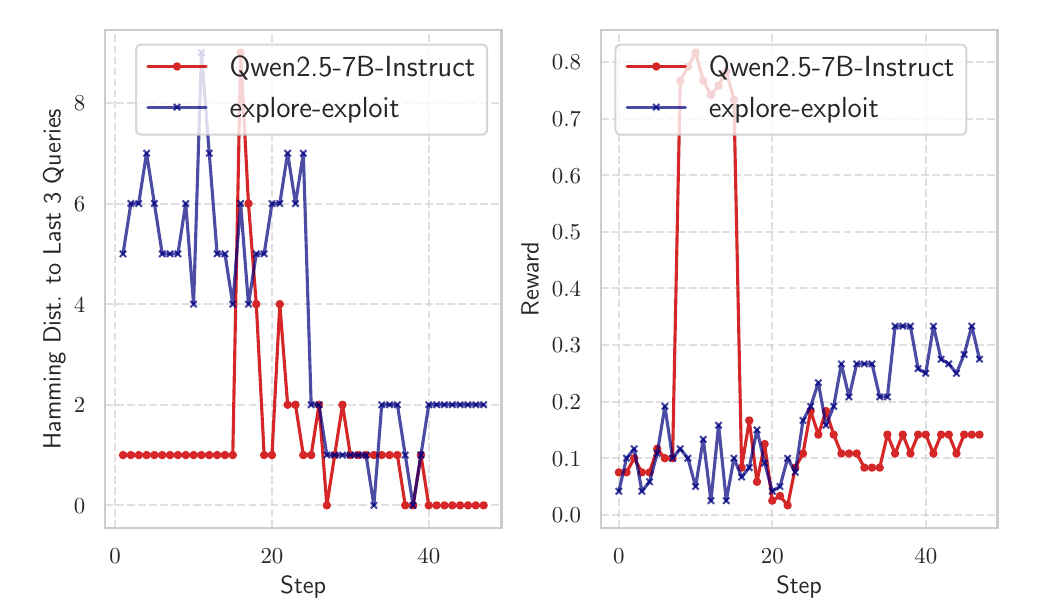}
  \end{subfigure}
  \hfill
  \begin{subfigure}[t]{0.33\textwidth}
    \centering
    \includegraphics[width=\linewidth]{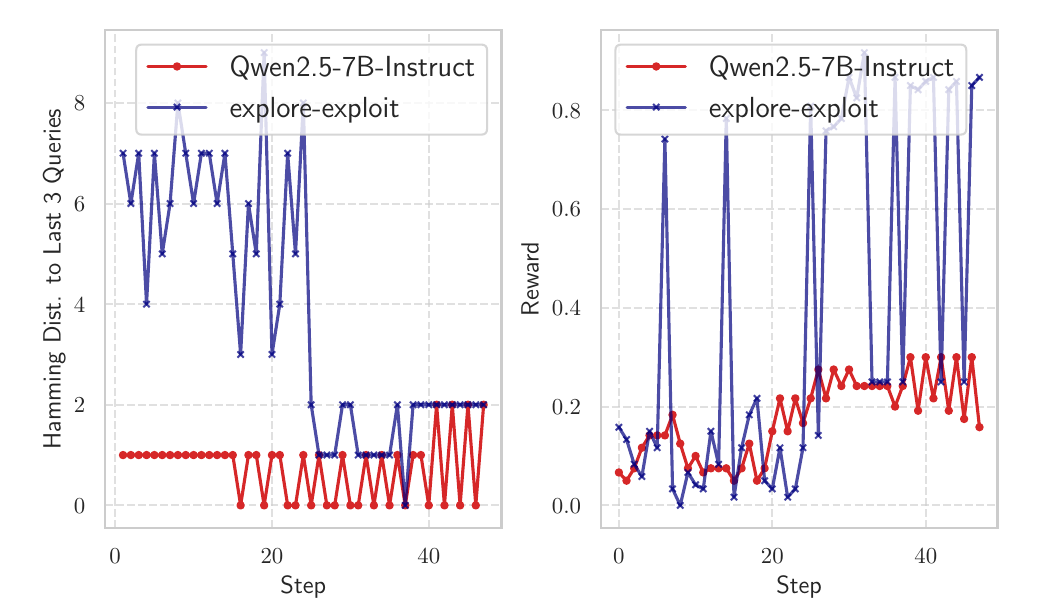}
  \end{subfigure}
  \hfill
  \begin{subfigure}[t]{0.33\textwidth}
    \centering
    \includegraphics[width=\linewidth]{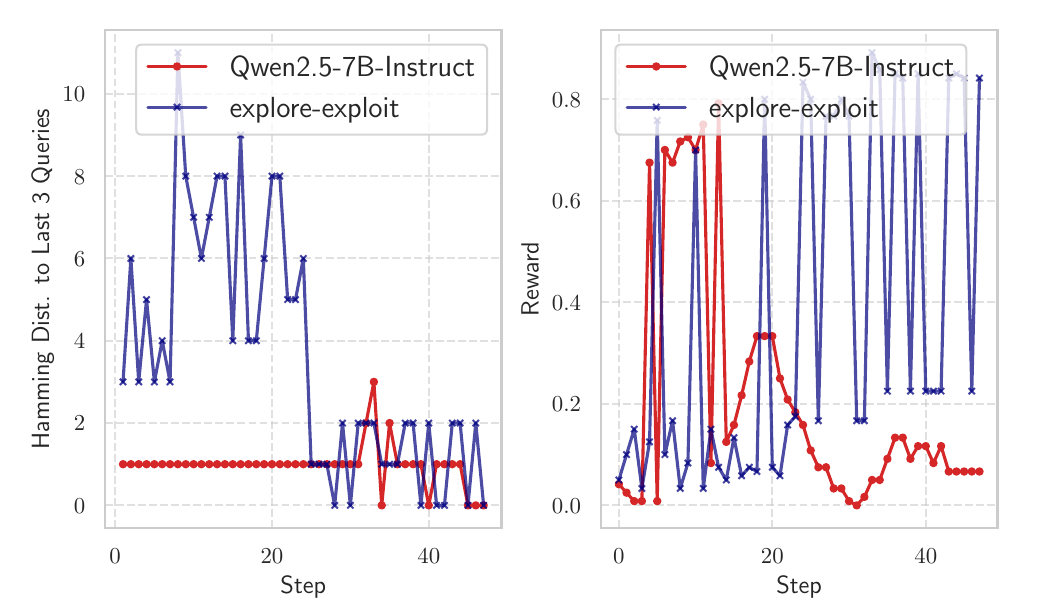}
  \end{subfigure}
  \hfill
  \begin{subfigure}[t]{0.33\textwidth}
    \centering
    \includegraphics[width=\linewidth]{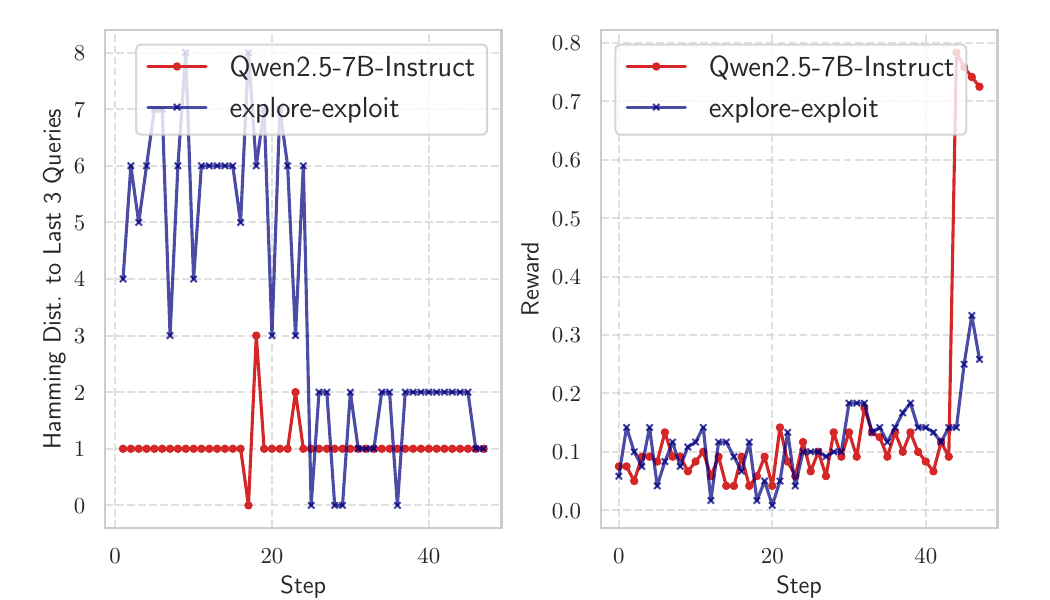}
  \end{subfigure}
  \hfill

  \caption{Visualization of 18 different episodes of \texttt{Qwen2.5-7B-Instruct} and the \texttt{explore-exploit} baseline on \tmaxsat{} with budget $N=48$. The LM often makes local changes with a single flip. In some episodes, the model refuses to make further queries when it finds a large reward, and keeps querying the same assignment. The baseline often finds a good assignment during the first half of its queries and applies local search for the rest of its queries, slightly increasing its reward.}
  \label{fig:interp:all:cnf}
\end{figure*}

\subsection{Evaluation With Different Budgets}
\label{app:experiment_full:diff-budgets}
We extend the evaluation from \S\ref{subsec:parllel} to different budgets. Table~\ref{tab:all_tasks_n_36} presents results for all tasks with $N=36$, and Table~\ref{tab:all_tasks_n_60} presents results for the \ttree{} task with $N=60$. These configurations cover the complete set of budgets listed in Table~\ref{tab:absolute_p1}. We use the same number of episodes that can be found in Table~\ref{tab:absolute_p1_runs}.

\begin{table*}[t]
\caption{Comparing parallel execution with various values of $p$ under total budget of $N=36$ for all tasks, on the same instance as Table~\ref{tab:all_tasks_n_48}.}
\label{tab:all_tasks_n_36}
\begin{adjustbox}{width=\linewidth}
\begin{tabular}{l|rrrr||rrrr||rrrr}
\toprule
 & \multicolumn{4}{c||}{\tsearch} & \multicolumn{4}{c||}{\ttree} & \multicolumn{4}{c}{\tmaxsat} \\
Model & $p=1$ & $p=2$ & $p=3$ & $p=4$ & $p=1$ & $p=2$ & $p=3$ & $p=4$ & $p=1$ & $p=2$ & $p=3$ & $p=4$ \\
\midrule
\midrule
{\small\texttt{Qwen2.5-7B-Instruct}} & 0.34 & 0.35 \updelta{3} & 0.61 \updelta{78} & \textbf{0.67 \updelta{95}} & 0.71 & \textbf{0.88 \updelta{25}} & 0.85 \updelta{21} & 0.71 \downdelta{0} & 0.38 & 0.48 \updelta{25} & \textbf{0.55 \updelta{43}} & 0.50 \updelta{31} \\
{\small\texttt{Qwen3-4B-Instruct}} & 0.60 & 0.77 \updelta{29} & 0.86 \updelta{44} & \textbf{0.89 \updelta{48}} & 0.75 & 0.79 \updelta{5} & \textbf{0.91 \updelta{21}} & 0.67 \downdelta{11} & 0.41 & 0.60 \updelta{45} & 0.60 \updelta{46} & \textbf{0.71 \updelta{72}} \\
{\small\texttt{Qwen3-8B}} & 0.52 & 0.94 \updelta{81} & \textbf{0.97 \updelta{87}} & 0.94 \updelta{80} & 0.64 & 0.77 \updelta{21} & \textbf{0.90 \updelta{40}} & 0.65 \updelta{2} & 0.52 & 0.63 \updelta{22} & 0.64 \updelta{23} & \textbf{0.67 \updelta{28}} \\
{\small\texttt{gemini-2.5-flash-lite}} & 0.83 & 0.96 \updelta{16} & \textbf{0.98 \updelta{18}} & 0.98 \updelta{18} & 0.64 & 0.77 \updelta{20} & \textbf{0.80 \updelta{26}} & 0.68 \updelta{6} & 0.54 & 0.62 \updelta{15} & 0.64 \updelta{19} & \textbf{0.67 \updelta{23}} \\
{\small\texttt{gpt-5-nano}} & 0.49 & 0.67 \updelta{37} & 0.71 \updelta{46} & \textbf{0.85 \updelta{74}} & 0.69 & 0.85 \updelta{23} & \textbf{0.85 \updelta{23}} & 0.70 \updelta{1} & 0.56 & 0.61 \updelta{8} & 0.54 \downdelta{3} & \textbf{0.68 \updelta{22}} \\
{\small\texttt{gpt-5-mini}} & 0.69 & 0.88 \updelta{28} & 0.84 \updelta{23} & \textbf{0.97 \updelta{42}} & 0.67 & 0.81 \updelta{21} & \textbf{0.90 \updelta{35}} & 0.68 \updelta{2} & 0.61 & 0.65 \updelta{7} & 0.65 \updelta{8} & \textbf{0.77 \updelta{26}} \\
{\small\texttt{gpt-5}} & 0.41 & \textbf{0.67 \updelta{65}} & 0.48 \updelta{18} & 0.64 \updelta{59} & 0.58 & 0.73 \updelta{25} & \textbf{0.93 \updelta{59}} & 0.65 \updelta{11} & 0.68 & 0.57 \downdelta{16} & \textbf{0.70 \updelta{3}} & 0.63 \downdelta{8} \\
\midrule
{\small\texttt{Qwen3-8B-medium}} & 0.74 & 0.86 \updelta{16} & \textbf{0.90 \updelta{22}} & 0.90 \updelta{22} & 0.72 & 0.87 \updelta{21} & \textbf{0.93 \updelta{29}} & 0.70 \downdelta{3} & 0.48 & 0.57 \updelta{21} & 0.52 \updelta{10} & \textbf{0.58 \updelta{21}} \\
{\small\texttt{gemini-2.5-flash}} & 0.58 & 0.68 \updelta{16} & 0.53 \downdelta{10} & \textbf{0.94 \updelta{61}} & 0.64 & \textbf{0.85 \updelta{33}} & 0.82 \updelta{28} & 0.71 \updelta{10} &  &  &  &  \\
{\small\texttt{gpt-5-nano-medium}} & 0.33 & 0.31 \downdelta{5} & 0.23 \downdelta{31} & \textbf{0.61 \updelta{87}} & 0.65 & 0.80 \updelta{23} & \textbf{0.89 \updelta{36}} & 0.69 \updelta{5} & 0.54 & \textbf{0.70 \updelta{30}} & 0.63 \updelta{17} & 0.45 \downdelta{16} \\
{\small\texttt{gpt-5-mini-medium}} & 0.57 & 0.96 \updelta{68} & 0.94 \updelta{64} & \textbf{0.98 \updelta{71}} & 0.53 & 0.61 \updelta{14} & \textbf{0.72 \updelta{36}} & 0.60 \updelta{13} & 0.59 & \textbf{0.69 \updelta{16}} & 0.55 \downdelta{8} & 0.56 \downdelta{6} \\
\midrule
{\small\texttt{explore-exploit}} & \multicolumn{4}{c||}{0.88} & \multicolumn{4}{c||}{0.94} & \multicolumn{4}{c}{0.77} \\
\bottomrule
\end{tabular}
\end{adjustbox}
\end{table*}

\begin{table}[t]
\centering
\begin{minipage}{0.8\linewidth}
\centering
\caption{Comparing parallel execution with various values of $p$ under the same total budget ($N=60$) for task \ttree{}, on the same instance as Table~\ref{tab:all_tasks_n_48}.}
\label{tab:all_tasks_n_60}
\begin{adjustbox}{width=0.7\linewidth}
\begin{tabular}{l|rrrr}
\toprule
 & \multicolumn{4}{c}{\ttree} \\
Model & $p=1$ & $p=2$ & $p=3$ & $p=4$ \\
\midrule
\midrule
{\small\texttt{Qwen2.5-7B-Instruct}} & 0.72 & 0.77 \updelta{7} & 0.88 \updelta{23} & \textbf{0.95 \updelta{31}} \\
{\small\texttt{Qwen3-4B-Instruct}} & 0.95 & 0.86 \downdelta{9} & 0.92 \downdelta{3} & \textbf{0.97 \updelta{2}} \\
{\small\texttt{Qwen3-8B}} & 0.68 & 0.87 \updelta{27} & 0.92 \updelta{35} & \textbf{0.94 \updelta{37}} \\
{\small\texttt{gemini-2.5-flash-lite}} & 0.76 & 0.81 \updelta{6} & 0.77 \updelta{1} & \textbf{0.89 \updelta{17}} \\
{\small\texttt{gpt-5-nano}} & 0.79 & 0.85 \updelta{7} & 0.88 \updelta{11} & \textbf{0.95 \updelta{20}} \\
{\small\texttt{gpt-5-mini}} & 0.73 & 0.85 \updelta{17} & 0.95 \updelta{30} & \textbf{0.96 \updelta{31}} \\
{\small\texttt{gpt-5}} & 0.77 & 0.76 \downdelta{1} & 0.79 \updelta{2} & \textbf{0.95 \updelta{23}} \\
\midrule
{\small\texttt{Qwen3-8B-medium}} & 0.85 & 0.91 \updelta{7} & 0.89 \updelta{4} & \textbf{0.97 \updelta{14}} \\
{\small\texttt{gemini-2.5-flash}} & 0.68 & 0.85 \updelta{24} & 0.91 \updelta{33} & \textbf{0.98 \updelta{43}} \\
{\small\texttt{gpt-5-nano-medium}} & 0.82 & 0.85 \updelta{4} & 0.86 \updelta{4} & \textbf{0.94 \updelta{15}} \\
{\small\texttt{gpt-5-mini-medium}} & 0.68 & 0.68 \updelta{0} & 0.86 \updelta{27} & \textbf{0.86 \updelta{27}} \\
\midrule
{\small\texttt{explore-exploit}} & \multicolumn{4}{c}{0.97} \\
\bottomrule
\end{tabular}
\end{adjustbox}
\end{minipage}
\end{table}

\subsection{Additional Instances}
\label{app:experiment_full:add-instances}
We validate the findings from \S\ref{subsec:eval} by repeating the experiments on additional problem instances. We use the same number of episodes that can be found in Table~\ref{tab:absolute_p1_runs}.

\paragraph{\tsearch{}}
We evaluate a second instance containing decoy hills centered at 
$[1.28, 2.65, 3.83, 5.11, 6.44, 7.6, 8.77]$, 
with widths 
$[0.1, 0.1, 0.1, 0.1, 0.1, 0.1, 0.1]$ 
and heights 
$[1, 2, 4, 1, 2, 3, 4]$. 
The needle hill is centered at $6.2$ with height $20$ and width $0.01$. 
Table~\ref{tab:hill_instance_3_merged} presents the results for this instance.

\begin{table*}[t]
\centering
\caption{Task \tsearch. Additional Instance 1. Results across $N$.}
\label{tab:hill_instance_3_merged}
\begin{adjustbox}{width=0.8\linewidth}
\begin{tabular}{l|rrrr|rrrr}
\toprule
 & \multicolumn{4}{c|}{$N=36$} & \multicolumn{4}{c}{$N=48$} \\
 & $p=1$ & $p=2$ & $p=3$ & $p=4$ & $p=1$ & $p=2$ & $p=3$ & $p=4$ \\
\midrule
\midrule
{\small\texttt{Qwen2.5-7B-Instruct}} & 0.34 & 0.35 \updelta{3} & 0.61 \updelta{78} & \textbf{0.67 \updelta{95}} & 0.30 & 0.46 \updelta{57} & 0.61 \updelta{107} & \textbf{0.65 \updelta{121}} \\
{\small\texttt{Qwen3-4B-Instruct}} & 0.60 & 0.77 \updelta{29} & 0.86 \updelta{44} & \textbf{0.89 \updelta{48}} & 0.59 & 0.83 \updelta{40} & \textbf{0.90 \updelta{51}} & 0.88 \updelta{48} \\
{\small\texttt{Qwen3-8B}} & 0.52 & 0.94 \updelta{81} & \textbf{0.97 \updelta{87}} & 0.94 \updelta{80} & 0.58 & 0.71 \updelta{24} & 0.95 \updelta{65} & \textbf{0.96 \updelta{66}} \\
{\small\texttt{gemini-2.5-flash-lite}} & 0.83 & 0.96 \updelta{16} & \textbf{0.98 \updelta{18}} & 0.98 \updelta{18} & 0.85 & 0.93 \updelta{10} & \textbf{0.99 \updelta{17}} & 0.99 \updelta{17} \\
{\small\texttt{gpt-5-nano}} & 0.49 & 0.67 \updelta{37} & 0.71 \updelta{46} & \textbf{0.85 \updelta{74}} & 0.49 & 0.59 \updelta{20} & \textbf{0.77 \updelta{58}} & 0.63 \updelta{29} \\
{\small\texttt{gpt-5-mini}} & 0.69 & 0.88 \updelta{28} & 0.84 \updelta{23} & \textbf{0.97 \updelta{42}} & 0.56 & 0.80 \updelta{42} & \textbf{0.97 \updelta{73}} & 0.83 \updelta{47} \\
{\small\texttt{gpt-5}} & 0.41 & \textbf{0.67 \updelta{65}} & 0.48 \updelta{18} & 0.64 \updelta{59} & 0.63 & 0.69 \updelta{9} & \textbf{0.78 \updelta{23}} & 0.54 \downdelta{14} \\
\midrule
{\small\texttt{Qwen3-8B-medium}} & 0.74 & 0.86 \updelta{16} & \textbf{0.90 \updelta{22}} & 0.90 \updelta{22} & 0.76 & 0.86 \updelta{13} & 0.89 \updelta{17} & \textbf{0.93 \updelta{22}} \\
{\small\texttt{gpt-5-nano-medium}} & 0.33 & 0.31 \downdelta{5} & 0.23 \downdelta{31} & \textbf{0.61 \updelta{87}} & 0.34 & 0.48 \updelta{43} & \textbf{0.57 \updelta{69}} & 0.30 \downdelta{13} \\
{\small\texttt{gpt-5-mini-medium}} & 0.57 & 0.96 \updelta{68} & 0.94 \updelta{64} & \textbf{0.98 \updelta{71}} & 0.60 & 0.88 \updelta{46} & \textbf{0.96 \updelta{59}} & 0.94 \updelta{55} \\
\midrule
{\small\texttt{explore-exploit}} & \multicolumn{4}{c|}{0.88} & \multicolumn{4}{c}{0.89} \\
\bottomrule
\end{tabular}
\end{adjustbox}
\end{table*}

\paragraph{\ttree}
We generate two additional instances following the procedure described in Appendix~\ref{app:subsec:task}, using the following parameters:
\begin{enumerate}
    \item $r_\trap = r_\good = 2$, $b = 3$, $d_\trap = 40$, and $d_\good = 14$. This yields a tree with 323 nodes and a maximum value of 53. Results are shown in Table~\ref{tab:graph_instance_1_merged}.
    \item $r_\trap = r_\good = 4$, $b = 4$, $d_\trap = 40$, and $d_\good = 16$. This yields a tree with 889 nodes and a maximum value of 61. Results are shown in Table~\ref{tab:graph_instance_2_merged}.
\end{enumerate}

\begin{table*}[t]
\caption{Task \ttree. Additional Instance 1. Results across $N$.}
\label{tab:graph_instance_1_merged}
\begin{adjustbox}{width=\linewidth}
\begin{tabular}{l|rrrr|rrrr|rrrr}
\toprule
 & \multicolumn{4}{c|}{$N=36$} & \multicolumn{4}{c|}{$N=48$} & \multicolumn{4}{c}{$N=60$} \\
 & $p=1$ & $p=2$ & $p=3$ & $p=4$ & $p=1$ & $p=2$ & $p=3$ & $p=4$ & $p=1$ & $p=2$ & $p=3$ & $p=4$ \\
\midrule
\midrule
{\small\texttt{Qwen2.5-7B-Instruct}} & 0.65 & \textbf{0.87 \updelta{34}} & 0.76 \updelta{17} & 0.61 \downdelta{6} & 0.65 & 0.83 \updelta{28} & \textbf{0.92 \updelta{42}} & 0.79 \updelta{22} & 0.69 & 0.85 \updelta{22} & 0.96 \updelta{38} & \textbf{0.96 \updelta{39}} \\
{\small\texttt{Qwen3-4B-Instruct}} & 0.71 & \textbf{0.85 \updelta{20}} & 0.76 \updelta{8} & 0.59 \downdelta{16} & 0.81 & 0.81 \updelta{0} & \textbf{0.91 \updelta{13}} & 0.83 \updelta{3} & 0.85 & 0.79 \downdelta{8} & 0.89 \updelta{5} & \textbf{0.91 \updelta{7}} \\
{\small\texttt{Qwen3-8B}} & 0.68 & \textbf{0.84 \updelta{23}} & 0.78 \updelta{13} & 0.59 \downdelta{14} & 0.71 & 0.81 \updelta{14} & \textbf{0.91 \updelta{28}} & 0.79 \updelta{11} & 0.72 & 0.80 \updelta{11} & 0.84 \updelta{17} & \textbf{0.97 \updelta{36}} \\
{\small\texttt{gemini-2.5-flash-lite}} & 0.62 & \textbf{0.80 \updelta{29}} & 0.72 \updelta{15} & 0.56 \downdelta{9} & 0.65 & 0.73 \updelta{12} & \textbf{0.90 \updelta{38}} & 0.76 \updelta{17} & 0.69 & 0.88 \updelta{28} & 0.88 \updelta{28} & \textbf{0.93 \updelta{35}} \\
{\small\texttt{gpt-5-nano}} & 0.70 & \textbf{0.80 \updelta{15}} & 0.77 \updelta{10} & 0.59 \downdelta{15} & 0.66 & 0.84 \updelta{27} & \textbf{0.89 \updelta{35}} & 0.82 \updelta{24} & 0.73 & 0.87 \updelta{19} & 0.90 \updelta{23} & \textbf{0.97 \updelta{33}} \\
{\small\texttt{gpt-5-mini}} & 0.60 & \textbf{0.80 \updelta{32}} & 0.78 \updelta{29} & 0.60 \downdelta{1} & 0.71 & 0.74 \updelta{4} & \textbf{0.89 \updelta{25}} & 0.80 \updelta{13} & 0.79 & 0.81 \updelta{4} & 0.90 \updelta{15} & \textbf{0.93 \updelta{19}} \\
{\small\texttt{gpt-5}} & \textbf{0.67} & 0.66 \downdelta{2} & 0.64 \downdelta{4} & 0.54 \downdelta{19} & 0.60 & 0.53 \downdelta{11} & \textbf{0.83 \updelta{39}} & 0.78 \updelta{30} & 0.64 & 0.77 \updelta{20} & 0.88 \updelta{37} & \textbf{0.90 \updelta{40}} \\
\midrule
{\small\texttt{Qwen3-8B-medium}} & 0.58 & \textbf{0.81 \updelta{38}} & 0.78 \updelta{33} & 0.60 \updelta{2} & 0.67 & 0.74 \updelta{11} & \textbf{0.81 \updelta{21}} & 0.80 \updelta{20} & 0.70 & 0.80 \updelta{15} & 0.92 \updelta{31} & \textbf{0.93 \updelta{33}} \\
{\small\texttt{gpt-5-nano-medium}} & 0.65 & \textbf{0.89 \updelta{37}} & 0.75 \updelta{16} & 0.57 \downdelta{13} & 0.63 & 0.78 \updelta{24} & \textbf{0.89 \updelta{42}} & 0.80 \updelta{28} & 0.78 & 0.71 \downdelta{9} & 0.90 \updelta{16} & \textbf{0.93 \updelta{19}} \\
{\small\texttt{gpt-5-mini-medium}} & 0.51 & \textbf{0.68 \updelta{33}} & 0.68 \updelta{32} & 0.50 \downdelta{3} & 0.51 & 0.71 \updelta{38} & \textbf{0.75 \updelta{47}} & 0.67 \updelta{31} & 0.69 & 0.58 \downdelta{16} & 0.71 \updelta{3} & \textbf{0.74 \updelta{7}} \\
\midrule
{\small\texttt{explore-exploit}} & \multicolumn{4}{c|}{0.93} & \multicolumn{4}{c|}{0.96} & \multicolumn{4}{c}{0.98} \\
\bottomrule
\end{tabular}
\end{adjustbox}
\end{table*}

\begin{table*}[t]
\caption{Task \ttree. Additional Instance 2. Results across $N$.}
\label{tab:graph_instance_2_merged}
\begin{adjustbox}{width=\linewidth}
\begin{tabular}{l|rrrr|rrrr|rrrr}
\toprule
 & \multicolumn{4}{c|}{$N=36$} & \multicolumn{4}{c|}{$N=48$} & \multicolumn{4}{c}{$N=60$} \\
 & $p=1$ & $p=2$ & $p=3$ & $p=4$ & $p=1$ & $p=2$ & $p=3$ & $p=4$ & $p=1$ & $p=2$ & $p=3$ & $p=4$ \\
\midrule
\midrule
{\small\texttt{Qwen2.5-7B-Instruct}} & 0.62 & \textbf{0.83 \updelta{34}} & 0.67 \updelta{7} & 0.48 \downdelta{22} & 0.67 & 0.85 \updelta{26} & \textbf{0.92 \updelta{37}} & 0.69 \updelta{2} & 0.65 & 0.84 \updelta{29} & \textbf{0.93 \updelta{42}} & 0.84 \updelta{29} \\
{\small\texttt{Qwen3-4B-Instruct}} & \textbf{0.82} & 0.80 \downdelta{3} & 0.63 \downdelta{23} & 0.53 \downdelta{36} & \textbf{0.89} & 0.81 \downdelta{8} & 0.87 \downdelta{2} & 0.72 \downdelta{19} & \textbf{0.93} & 0.85 \downdelta{8} & 0.91 \downdelta{2} & 0.88 \downdelta{5} \\
{\small\texttt{Qwen3-8B}} & 0.74 & \textbf{0.74 \updelta{0}} & 0.69 \downdelta{7} & 0.52 \downdelta{30} & 0.66 & 0.85 \updelta{30} & \textbf{0.86 \updelta{31}} & 0.71 \updelta{8} & 0.73 & 0.74 \updelta{1} & 0.89 \updelta{21} & \textbf{0.90 \updelta{23}} \\
{\small\texttt{gemini-2.5-flash-lite}} & 0.57 & \textbf{0.82 \updelta{43}} & 0.65 \updelta{14} & 0.49 \downdelta{14} & 0.63 & 0.80 \updelta{26} & \textbf{0.83 \updelta{32}} & 0.67 \updelta{6} & 0.67 & \textbf{0.86 \updelta{27}} & 0.82 \updelta{22} & 0.83 \updelta{24} \\
{\small\texttt{gpt-5-nano}} & 0.67 & \textbf{0.81 \updelta{22}} & 0.68 \updelta{3} & 0.52 \downdelta{22} & 0.63 & 0.76 \updelta{20} & \textbf{0.84 \updelta{33}} & 0.69 \updelta{9} & 0.78 & 0.79 \updelta{1} & 0.84 \updelta{8} & \textbf{0.84 \updelta{8}} \\
{\small\texttt{gpt-5-mini}} & 0.66 & \textbf{0.79 \updelta{19}} & 0.69 \updelta{4} & 0.51 \downdelta{23} & 0.65 & \textbf{0.87 \updelta{34}} & 0.86 \updelta{33} & 0.68 \updelta{5} & 0.75 & 0.82 \updelta{11} & 0.84 \updelta{12} & \textbf{0.86 \updelta{15}} \\
{\small\texttt{gpt-5}} & 0.61 & \textbf{0.75 \updelta{23}} & 0.53 \downdelta{13} & 0.51 \downdelta{16} & 0.57 & 0.67 \updelta{19} & \textbf{0.74 \updelta{31}} & 0.64 \updelta{13} & 0.74 & 0.71 \downdelta{4} & \textbf{0.89 \updelta{20}} & 0.84 \updelta{13} \\
\midrule
{\small\texttt{Qwen3-8B-medium}} & 0.52 & \textbf{0.77 \updelta{48}} & 0.70 \updelta{35} & 0.50 \downdelta{2} & 0.62 & 0.80 \updelta{29} & \textbf{0.89 \updelta{43}} & 0.69 \updelta{11} & 0.74 & 0.83 \updelta{12} & \textbf{0.94 \updelta{27}} & 0.89 \updelta{21} \\
{\small\texttt{gpt-5-nano-medium}} & 0.56 & \textbf{0.79 \updelta{40}} & 0.67 \updelta{20} & 0.51 \downdelta{9} & 0.62 & 0.86 \updelta{40} & \textbf{0.91 \updelta{47}} & 0.69 \updelta{13} & 0.71 & 0.77 \updelta{9} & \textbf{0.88 \updelta{25}} & 0.87 \updelta{23} \\
{\small\texttt{gpt-5-mini-medium}} & 0.49 & \textbf{0.55 \updelta{14}} & 0.46 \downdelta{5} & 0.44 \downdelta{9} & 0.41 & 0.65 \updelta{60} & \textbf{0.78 \updelta{91}} & 0.64 \updelta{57} & 0.76 & 0.67 \downdelta{12} & 0.62 \downdelta{19} & \textbf{0.85 \updelta{12}} \\
\midrule
{\small\texttt{explore-exploit}} & \multicolumn{4}{c|}{0.89} & \multicolumn{4}{c|}{0.98} & \multicolumn{4}{c}{0.99} \\
\bottomrule
\end{tabular}
\end{adjustbox}
\end{table*}

\paragraph{\tmaxsat}
We evaluate three further instances generated via the method in Appendix~\ref{app:subsec:task}:
\begin{enumerate}
    \item $n=15$, $m=135$, $k_\gold = 4$, $k_\other=2$, and $w_\gold = 90$. See Table~\ref{tab:max_sat_instance_1_merged} for results.
    \item $n=15$, $m=150$, $k_\gold = 4$, $k_\other=2$, and $w_\gold = 100$. See Table~\ref{tab:max_sat_instance_2_merged} for results.
    \item $n=15$, $m=165$, $k_\gold = 4$, $k_\other=2$, and $w_\gold = 110$. See Table~\ref{tab:max_sat_instance_3_merged} for results.
\end{enumerate}

\begin{table*}[t]
\centering
\caption{Task \tmaxsat. Additional Instance 1. Results across $N$.}
\label{tab:max_sat_instance_1_merged}
\begin{adjustbox}{width=0.8\linewidth}
\begin{tabular}{l|rrrr|rrrr}
\toprule
 & \multicolumn{4}{c|}{$N=36$} & \multicolumn{4}{c}{$N=48$} \\
 & $p=1$ & $p=2$ & $p=3$ & $p=4$ & $p=1$ & $p=2$ & $p=3$ & $p=4$ \\
\midrule
\midrule
{\small\texttt{Qwen2.5-7B-Instruct}} & 0.42 & \textbf{0.58 \updelta{40}} & 0.55 \updelta{33} & 0.56 \updelta{35} & 0.45 & 0.60 \updelta{33} & \textbf{0.62 \updelta{37}} & 0.61 \updelta{37} \\
{\small\texttt{Qwen3-4B-Instruct}} & 0.42 & 0.50 \updelta{19} & 0.58 \updelta{37} & \textbf{0.74 \updelta{75}} & 0.36 & 0.53 \updelta{47} & \textbf{0.69 \updelta{91}} & 0.68 \updelta{88} \\
{\small\texttt{Qwen3-8B}} & 0.60 & 0.61 \updelta{2} & 0.72 \updelta{21} & \textbf{0.73 \updelta{22}} & 0.53 & \textbf{0.73 \updelta{37}} & 0.69 \updelta{30} & 0.68 \updelta{28} \\
{\small\texttt{gemini-2.5-flash-lite}} & 0.52 & \textbf{0.66 \updelta{28}} & 0.63 \updelta{21} & 0.65 \updelta{26} & 0.58 & 0.66 \updelta{14} & 0.65 \updelta{11} & \textbf{0.70 \updelta{20}} \\
{\small\texttt{gpt-5-nano}} & 0.56 & 0.62 \updelta{10} & \textbf{0.63 \updelta{12}} & 0.60 \updelta{7} & 0.57 & 0.69 \updelta{22} & 0.68 \updelta{19} & \textbf{0.70 \updelta{23}} \\
{\small\texttt{gpt-5-mini}} & 0.61 & \textbf{0.72 \updelta{17}} & 0.65 \updelta{7} & 0.65 \updelta{5} & 0.72 & 0.73 \updelta{1} & \textbf{0.76 \updelta{6}} & 0.74 \updelta{2} \\
{\small\texttt{gpt-5}} & \textbf{0.83} & 0.75 \downdelta{9} & 0.69 \downdelta{17} & 0.64 \downdelta{23} & 0.76 & \textbf{0.90 \updelta{18}} & 0.82 \updelta{7} & 0.66 \downdelta{14} \\
\midrule
{\small\texttt{Qwen3-8B-medium}} & 0.48 & 0.66 \updelta{38} & \textbf{0.71 \updelta{49}} & 0.64 \updelta{34} & 0.53 & 0.62 \updelta{16} & \textbf{0.77 \updelta{45}} & 0.71 \updelta{33} \\
{\small\texttt{gpt-5-nano-medium}} & 0.47 & 0.59 \updelta{26} & 0.56 \updelta{19} & \textbf{0.65 \updelta{39}} & 0.52 & 0.58 \updelta{11} & \textbf{0.65 \updelta{25}} & 0.61 \updelta{17} \\
{\small\texttt{gpt-5-mini-medium}} & \textbf{0.64} & 0.59 \downdelta{8} & 0.55 \downdelta{14} & 0.63 \downdelta{3} & 0.70 & \textbf{0.81 \updelta{16}} & 0.80 \updelta{15} & 0.75 \updelta{8} \\
\midrule
{\small\texttt{explore-exploit}} & \multicolumn{4}{c|}{0.74} & \multicolumn{4}{c}{0.83} \\
\bottomrule
\end{tabular}
\end{adjustbox}
\end{table*}

\begin{table*}[t]
\centering
\caption{Task \tmaxsat. Additional Instance 2. Results across $N$.}
\label{tab:max_sat_instance_2_merged}
\begin{adjustbox}{width=0.8\linewidth}
\begin{tabular}{l|rrrr|rrrr}
\toprule
 & \multicolumn{4}{c|}{$N=36$} & \multicolumn{4}{c}{$N=48$} \\
 & $p=1$ & $p=2$ & $p=3$ & $p=4$ & $p=1$ & $p=2$ & $p=3$ & $p=4$ \\
\midrule
\midrule
{\small\texttt{Qwen2.5-7B-Instruct}} & 0.44 & 0.47 \updelta{7} & 0.56 \updelta{27} & \textbf{0.57 \updelta{31}} & 0.50 & 0.58 \updelta{17} & \textbf{0.67 \updelta{35}} & 0.66 \updelta{33} \\
{\small\texttt{Qwen3-4B-Instruct}} & 0.33 & 0.53 \updelta{61} & 0.57 \updelta{72} & \textbf{0.64 \updelta{94}} & 0.43 & 0.58 \updelta{34} & 0.63 \updelta{45} & \textbf{0.69 \updelta{58}} \\
{\small\texttt{Qwen3-8B}} & 0.56 & 0.64 \updelta{13} & 0.55 \downdelta{3} & \textbf{0.67 \updelta{18}} & 0.42 & 0.68 \updelta{61} & \textbf{0.71 \updelta{69}} & 0.70 \updelta{68} \\
{\small\texttt{gemini-2.5-flash-lite}} & 0.57 & 0.59 \updelta{3} & \textbf{0.67 \updelta{17}} & 0.66 \updelta{17} & 0.54 & 0.66 \updelta{21} & \textbf{0.75 \updelta{38}} & 0.67 \updelta{24} \\
{\small\texttt{gpt-5-nano}} & 0.53 & 0.61 \updelta{14} & \textbf{0.61 \updelta{16}} & 0.60 \updelta{14} & 0.51 & 0.59 \updelta{16} & 0.64 \updelta{27} & \textbf{0.72 \updelta{42}} \\
{\small\texttt{gpt-5-mini}} & 0.64 & 0.64 \updelta{0} & \textbf{0.79 \updelta{23}} & 0.73 \updelta{15} & 0.72 & 0.75 \updelta{3} & \textbf{0.80 \updelta{10}} & 0.71 \downdelta{2} \\
{\small\texttt{gpt-5}} & 0.65 & \textbf{0.73 \updelta{12}} & 0.72 \updelta{12} & 0.72 \updelta{11} & 0.73 & 0.77 \updelta{6} & \textbf{0.83 \updelta{14}} & 0.70 \downdelta{4} \\
\midrule
{\small\texttt{Qwen3-8B-medium}} & 0.53 & 0.57 \updelta{8} & 0.56 \updelta{6} & \textbf{0.64 \updelta{23}} & 0.48 & \textbf{0.76 \updelta{58}} & 0.68 \updelta{41} & 0.66 \updelta{36} \\
{\small\texttt{gpt-5-nano-medium}} & 0.49 & 0.48 \downdelta{3} & 0.36 \downdelta{28} & \textbf{0.65 \updelta{31}} & \textbf{0.59} & 0.56 \downdelta{5} & 0.56 \downdelta{4} & 0.47 \downdelta{21} \\
{\small\texttt{gpt-5-mini-medium}} & 0.69 & \textbf{0.71 \updelta{3}} & 0.65 \downdelta{6} & 0.59 \downdelta{14} & 0.71 & 0.58 \downdelta{19} & \textbf{0.74 \updelta{4}} & 0.59 \downdelta{17} \\
\midrule
{\small\texttt{explore-exploit}} & \multicolumn{4}{c|}{0.73} & \multicolumn{4}{c}{0.84} \\
\bottomrule
\end{tabular}
\end{adjustbox}
\end{table*}

\begin{table*}[t]
\centering
\caption{Task \tmaxsat. Additional Instance 3. Results across $N$.}
\label{tab:max_sat_instance_3_merged}
\begin{adjustbox}{width=0.8\linewidth}
\begin{tabular}{l|rrrr|rrrr}
\toprule
 & \multicolumn{4}{c|}{$N=36$} & \multicolumn{4}{c}{$N=48$} \\
 & $p=1$ & $p=2$ & $p=3$ & $p=4$ & $p=1$ & $p=2$ & $p=3$ & $p=4$ \\
\midrule
\midrule
{\small\texttt{Qwen2.5-7B-Instruct}} & 0.48 & 0.51 \updelta{6} & \textbf{0.61 \updelta{26}} & 0.61 \updelta{25} & 0.49 & 0.54 \updelta{9} & \textbf{0.63 \updelta{28}} & 0.61 \updelta{24} \\
{\small\texttt{Qwen3-4B-Instruct}} & 0.30 & 0.55 \updelta{80} & 0.56 \updelta{84} & \textbf{0.68 \updelta{123}} & 0.35 & 0.57 \updelta{61} & 0.60 \updelta{71} & \textbf{0.64 \updelta{83}} \\
{\small\texttt{Qwen3-8B}} & 0.45 & 0.67 \updelta{48} & 0.68 \updelta{50} & \textbf{0.70 \updelta{55}} & 0.52 & 0.70 \updelta{36} & \textbf{0.71 \updelta{38}} & 0.67 \updelta{30} \\
{\small\texttt{gemini-2.5-flash-lite}} & 0.60 & 0.62 \updelta{4} & 0.56 \downdelta{6} & \textbf{0.62 \updelta{5}} & 0.59 & 0.57 \downdelta{3} & \textbf{0.68 \updelta{16}} & 0.62 \updelta{4} \\
{\small\texttt{gpt-5-nano}} & 0.50 & 0.55 \updelta{11} & \textbf{0.62 \updelta{24}} & 0.59 \updelta{18} & 0.53 & 0.69 \updelta{29} & 0.66 \updelta{25} & \textbf{0.69 \updelta{30}} \\
{\small\texttt{gpt-5-mini}} & 0.67 & 0.68 \updelta{0} & 0.64 \downdelta{4} & \textbf{0.70 \updelta{4}} & 0.63 & 0.65 \updelta{3} & \textbf{0.76 \updelta{19}} & 0.72 \updelta{14} \\
{\small\texttt{gpt-5}} & 0.74 & \textbf{0.74 \updelta{0}} & 0.64 \downdelta{13} & 0.51 \downdelta{32} & 0.71 & 0.70 \downdelta{2} & \textbf{0.77 \updelta{8}} & 0.63 \downdelta{11} \\
\midrule
{\small\texttt{Qwen3-8B-medium}} & 0.51 & 0.58 \updelta{13} & 0.59 \updelta{15} & \textbf{0.68 \updelta{33}} & 0.56 & 0.67 \updelta{19} & \textbf{0.68 \updelta{21}} & 0.52 \downdelta{7} \\
{\small\texttt{gpt-5-nano-medium}} & 0.50 & \textbf{0.61 \updelta{23}} & 0.60 \updelta{21} & 0.59 \updelta{20} & 0.53 & 0.51 \downdelta{4} & 0.50 \downdelta{5} & \textbf{0.59 \updelta{13}} \\
{\small\texttt{gpt-5-mini-medium}} & 0.62 & 0.62 \downdelta{0} & 0.53 \downdelta{16} & \textbf{0.63 \updelta{1}} & \textbf{0.70} & 0.66 \downdelta{6} & 0.68 \downdelta{2} & 0.68 \downdelta{2} \\
\midrule
{\small\texttt{explore-exploit}} & \multicolumn{4}{c|}{0.76} & \multicolumn{4}{c}{0.81} \\
\bottomrule
\end{tabular}
\end{adjustbox}
\end{table*}

\subsection{Evaluating on 50 More Instances}
\label{app:fifty_instance}
We extend our analysis from specific cases to a larger dataset to check the consistency of our results. We evaluate \texttt{Qwen2.5-7B-Instruct} and present the outcomes in Table~\ref{tab:ds-full:all_tasks_n48}. The model generally underperforms compared to the \texttt{explore-exploit} baselines, although summarization and parallelization improve performance.

\begin{table}[t]
\centering
\begin{minipage}{0.8\linewidth}
\caption{Performance comparison on the general dataset with budget $N=48$. We report the average reward using summarization ($s$) and parallelization ($p$) strategies. Relative improvements over the base model are shown in parentheses.}
\label{tab:ds-full:all_tasks_n48}
\centering
\begin{adjustbox}{width=\linewidth}
\begin{tabular}{cc|c||c||c}
\toprule
\multicolumn{2}{c|}{Method} & \tsearch & \ttree & \tmaxsat \\
\midrule
\multicolumn{2}{c|}{\texttt{Qwen2.5-7B-Instruct}} & 0.23 & 0.66 & 0.42 \\
\midrule
\multirow{3}{*}{summary} & $s=2$ & 0.30 \updelta{26} & 0.76 \updelta{15} & 0.50 \updelta{20} \\
 & $s=3$ & 0.32 \updelta{38} & 0.78 \updelta{18} & 0.54 \updelta{29} \\
 & $s=4$ & \textbf{0.36 \updelta{55}} & \textbf{0.81 \updelta{22}} & \textbf{0.57 \updelta{35}} \\
\midrule
\multirow{3}{*}{parallel} & $p=2$ & 0.30 \updelta{29} & 0.78 \updelta{17} & 0.54 \updelta{28} \\
 & $p=3$ & 0.37 \updelta{57} & \textbf{0.86 \updelta{30}} & 0.57 \updelta{36} \\
 & $p=4$ & \textbf{0.37 \updelta{59}} & 0.84 \updelta{27} & \textbf{0.59 \updelta{41}} \\
\midrule
\multicolumn{2}{c|}{\texttt{explore-exploit baseline}} & 0.75 & 0.92 & 0.77 \\
\bottomrule
\end{tabular}
\end{adjustbox}
\end{minipage}
\end{table}

For \tsearch{}, we generate 50 instances using the procedure in Appendix~\ref{app:subsec:task}. We sample parameters uniformly at random, with $k \in \{2,3,4\}$ and $k' \in \{k + 1, k + 2\}$. We fix the remaining parameters: $\alpha_\dec = 0.01$, $\alpha_\need = 0.008$, $j_\dec = 0.1$, and $j_\need = 0.2$.

For \ttree{}, we generate 50 instances uniformly at random following Appendix~\ref{app:subsec:task}. We use the parameters $r_\trap \in [1,4]$, $r_\good \in [1,4]$, $b \in [3,5]$, $d_\trap \in [20,40]$, and $d_\good \in [10,16]$.

For \tmaxsat{}, we generate 50 instances as defined in Appendix~\ref{app:subsec:task}. We sample parameters uniformly: $n \in [12, 24]$, $k_\gold \in [3, 4]$, $k_\other \in [2,6]$, and $w_\gold \in [60, 160]$. We set $m = w_\gold + m_\other$, where $m_\other \in [30, 80]$.

For all tasks, we run the baseline for 200 episodes per instance. We run the model for 50 episodes per instance for each setting of $s$ and $p$.

\subsection{Full Results of Difficulty Variation}
\label{app:task_scale_difficulty}

We generate task instances with varying difficulty levels based on the parameters defined in Appendix~\ref{app:subsec:task}.

For \tsearch{}, we vary $k'$ from $2$ to $8$, with $k = k'-1$. We set the remaining parameters as follows: $\alpha_\dec = 0.01$, $\alpha_\need = 0.008$, $j_\dec = 0.1$, and $j_\need = 0.2$.

For \ttree{}, we vary $r_\good$ from $1$ to $7$. We fix the other parameters at $r_\trap = 4$, $b = 3$, $d_\trap = 40$, and $d_\good = 12$.

For \tmaxsat{}, we vary $k_\gold$ from $1$ to $7$. We fix the number of non-gold variables at $11$ (by setting $n = 11 + k_\gold$) and the number of non-gold clauses at $50$ (by setting $m = 50 + w_\gold$, where $w_\gold = 130 + 10k_\gold$). We also fix $k_\other = 2$.

Figure~\ref{fig:task_difficulty_n36_details} presents the complete evaluation results with budget $N=36$, covering both intervention methods: parallel ($p \in \{2,3,4\}$) and summary ($s \in \{2,3,4,6\}$).

\begin{figure*}[t]
  \centering
  \begin{subfigure}[t]{0.32\textwidth}
    \centering
    \includegraphics[width=\linewidth]{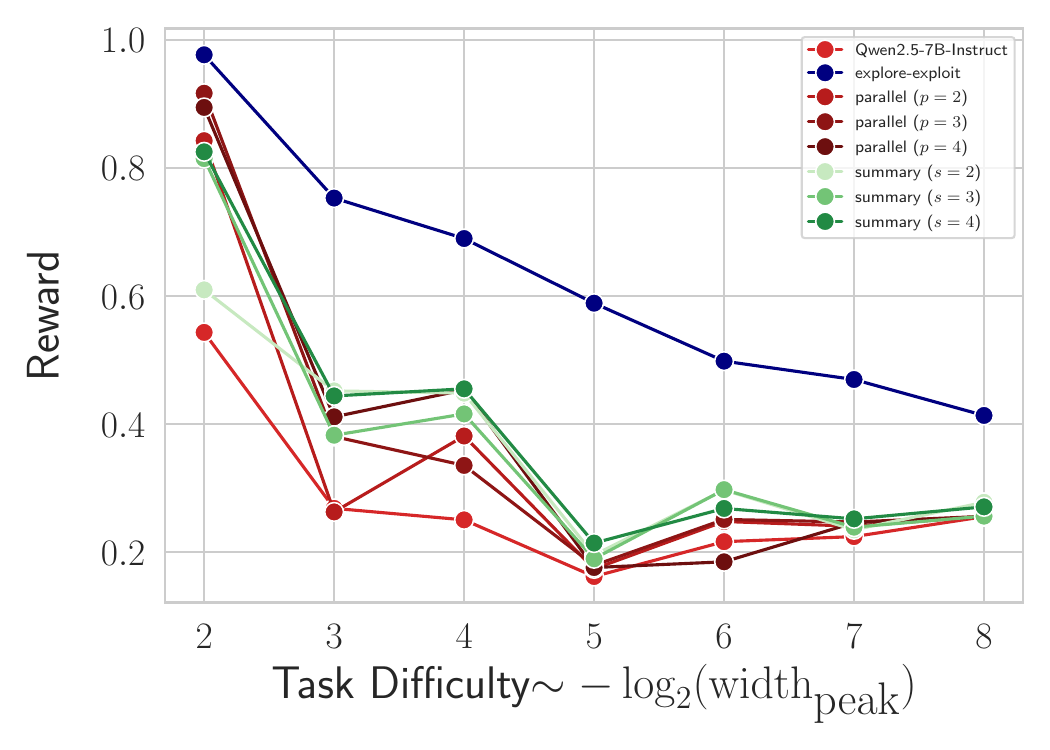}
    \caption{\tsearch}
  \end{subfigure}\hfill
  \begin{subfigure}[t]{0.32\textwidth}
    \centering
    \includegraphics[width=\linewidth]{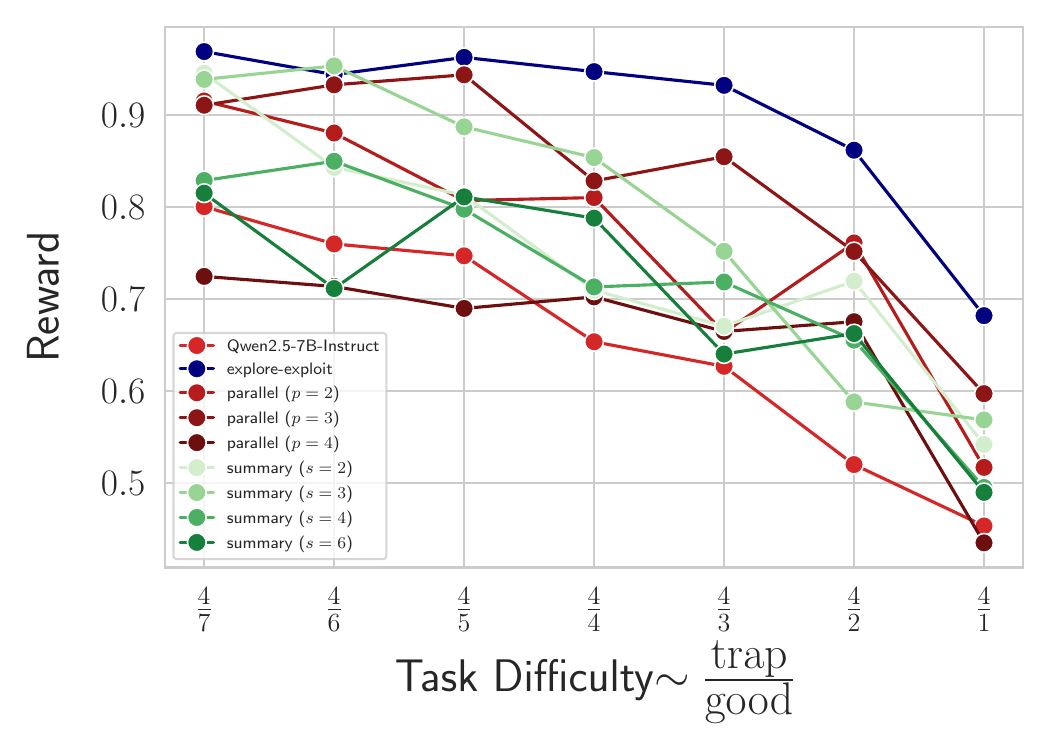}
    \caption{\ttree} 
  \end{subfigure}\hfill
  \begin{subfigure}[t]{0.32\textwidth}
    \centering
    \includegraphics[width=\linewidth]{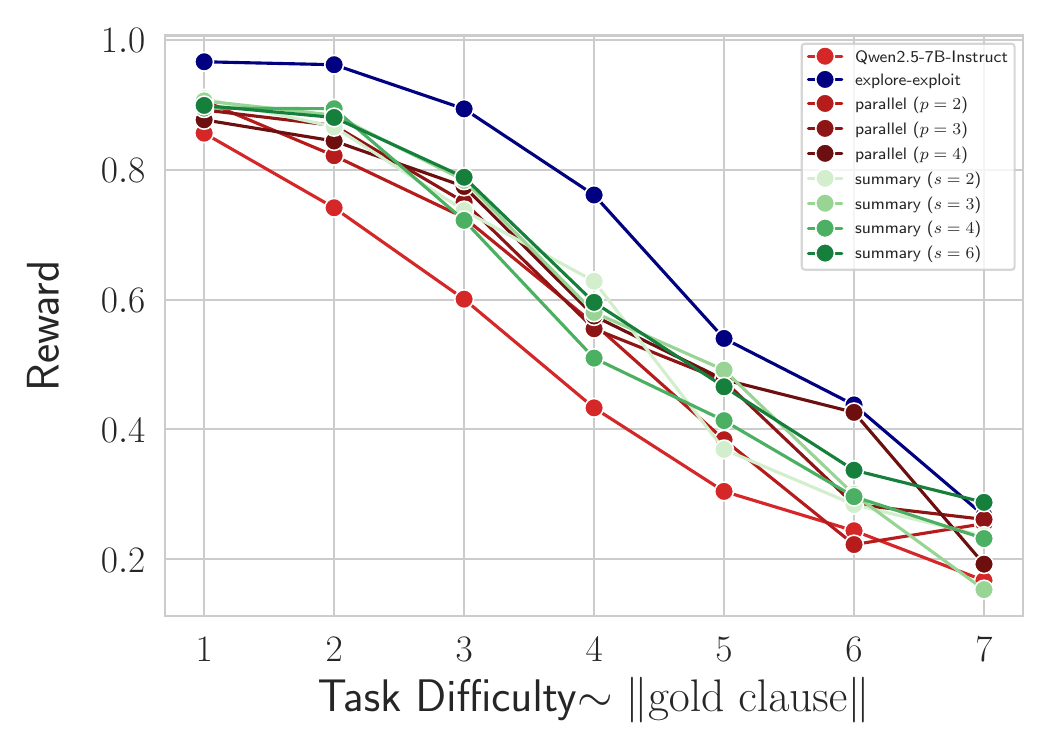}
    \caption{\tmaxsat}
  \end{subfigure}\hfill
  
  \caption{
Task difficulty variations for \tsearch{} (a), \ttree{} (b), and \tmaxsat{} (c).
We generate multiple task instances by varying key difficulty-controlling parameters---the peak width for \tsearch{}, the ratio of good to trap gateways for \ttree{}, and the size of the gold clause for \tmaxsat{}---with harder settings yielding lower baseline performance.
Across all tasks and difficulty levels, the parallel method (same total budget, split across independent threads) and the summary method consistently improve over single-thread execution of \texttt{Qwen2.5-7B-Instruct}, 
narrowing the gap to simple explore--exploit baselines.
Budget $N=36$ is considered for these episodes.
}
\label{fig:task_difficulty_n36_details}
\end{figure*}

\subsection{Standard Errors}
\label{app:subsec:standard-errors}
Table~\ref{tab:all_tasks_n_48_ci} reports the standard errors for the results in Table~\ref{tab:all_tasks_n_48}, calculated using bootstrap resampling. The non-overlapping intervals indicate a clear separation in performance between the parallel method and the single-thread baseline.
Similarly, Table~\ref{tab:all_tasks_n_48_summary_ci} details the standard errors for the summary method experiments (Table~\ref{tab:all_tasks_n_48_summary}), also derived via bootstrap resampling.

\begin{table*}[t]
\caption{Results of Table~\ref{tab:all_tasks_n_48} along with their standard errors in parentheses. Text color denotes performance relative to the single-thread execution ($p=1$): \textbf{\color{blue}blue} indicates an improvement with non-overlapping error intervals; \textbf{black} indicates an improvement where intervals overlap; and \textbf{\color{red}red} indicates performance below the single-thread execution. Bold values represent the best result for each model across $p$ values.}
\label{tab:all_tasks_n_48_ci}
\begin{adjustbox}{width=\linewidth}
\begin{tabular}{l|rrrr||rrrr||rrrr}
\toprule
 & \multicolumn{4}{c||}{\tsearch} & \multicolumn{4}{c||}{\ttree} & \multicolumn{4}{c}{\tmaxsat} \\
Model & $p=1$ & $p=2$ & $p=3$ & $p=4$ & $p=1$ & $p=2$ & $p=3$ & $p=4$ & $p=1$ & $p=2$ & $p=3$ & $p=4$ \\
\midrule
\midrule
{\small\texttt{Qwen2.5-7B-Instruct}} & {\color{black} 0.33} {\tiny \color{black} ($5$\%)} & {\color{black} 0.52} {\tiny \color{blue} ($7$\%)} & \textbf{{\color{black} 0.74} {\tiny \color{blue} ($7$\%)}} & {\color{black} 0.66} {\tiny \color{blue} ($9$\%)} & {\color{black} 0.71} {\tiny \color{black} ($3$\%)} & {\color{black} 0.83} {\tiny \color{blue} ($4$\%)} & {\color{black} 0.93} {\tiny \color{blue} ($2$\%)} & \textbf{{\color{black} 0.96} {\tiny \color{blue} ($2$\%)}} & {\color{black} 0.45} {\tiny \color{black} ($4$\%)} & {\color{black} 0.53} {\tiny \color{black} ($5$\%)} & {\color{black} 0.58} {\tiny \color{blue} ($6$\%)} & \textbf{{\color{black} 0.63} {\tiny \color{blue} ($6$\%)}} \\
{\small\texttt{Qwen3-4B-Instruct}} & {\color{black} 0.27} {\tiny \color{black} ($4$\%)} & \textbf{{\color{black} 0.52} {\tiny \color{blue} ($7$\%)}} & {\color{black} 0.45} {\tiny \color{blue} ($8$\%)} & {\color{black} 0.30} {\tiny \color{black} ($8$\%)} & {\color{black} 0.91} {\tiny \color{black} ($2$\%)} & {\color{black} 0.85} {\tiny \color{red} ($3$\%)} & {\color{black} 0.84} {\tiny \color{red} ($4$\%)} & \textbf{{\color{black} 0.91} {\tiny \color{black} ($3$\%)}} & {\color{black} 0.39} {\tiny \color{black} ($4$\%)} & {\color{black} 0.58} {\tiny \color{blue} ($5$\%)} & \textbf{{\color{black} 0.66} {\tiny \color{blue} ($6$\%)}} & {\color{black} 0.57} {\tiny \color{blue} ($6$\%)} \\
{\small\texttt{Qwen3-8B}} & {\color{black} 0.59} {\tiny \color{black} ($6$\%)} & {\color{black} 0.71} {\tiny \color{blue} ($6$\%)} & {\color{black} 0.57} {\tiny \color{red} ($9$\%)} & \textbf{{\color{black} 0.72} {\tiny \color{blue} ($7$\%)}} & {\color{black} 0.68} {\tiny \color{black} ($3$\%)} & {\color{black} 0.76} {\tiny \color{blue} ($4$\%)} & {\color{black} 0.88} {\tiny \color{blue} ($3$\%)} & \textbf{{\color{black} 0.93} {\tiny \color{blue} ($3$\%)}} & {\color{black} 0.48} {\tiny \color{black} ($4$\%)} & {\color{black} 0.70} {\tiny \color{blue} ($4$\%)} & {\color{black} 0.69} {\tiny \color{blue} ($4$\%)} & \textbf{{\color{black} 0.72} {\tiny \color{blue} ($4$\%)}} \\
{\small\texttt{gemini-2.5-flash-lite}} & {\color{black} 0.41} {\tiny \color{black} ($5$\%)} & \textbf{{\color{black} 0.58} {\tiny \color{blue} ($7$\%)}} & {\color{black} 0.57} {\tiny \color{blue} ($8$\%)} & {\color{black} 0.55} {\tiny \color{black} ($10$\%)} & {\color{black} 0.63} {\tiny \color{black} ($3$\%)} & {\color{black} 0.75} {\tiny \color{blue} ($4$\%)} & {\color{black} 0.80} {\tiny \color{blue} ($5$\%)} & \textbf{{\color{black} 0.85} {\tiny \color{blue} ($4$\%)}} & {\color{black} 0.62} {\tiny \color{black} ($4$\%)} & \textbf{{\color{black} 0.71} {\tiny \color{black} ($5$\%)}} & {\color{black} 0.70} {\tiny \color{black} ($5$\%)} & {\color{black} 0.68} {\tiny \color{black} ($6$\%)} \\
{\small\texttt{gpt-5-nano}} & {\color{black} 0.31} {\tiny \color{black} ($5$\%)} & {\color{black} 0.47} {\tiny \color{blue} ($7$\%)} & {\color{black} 0.55} {\tiny \color{blue} ($9$\%)} & \textbf{{\color{black} 0.74} {\tiny \color{blue} ($8$\%)}} & {\color{black} 0.69} {\tiny \color{black} ($3$\%)} & {\color{black} 0.83} {\tiny \color{blue} ($4$\%)} & {\color{black} 0.91} {\tiny \color{blue} ($3$\%)} & \textbf{{\color{black} 0.96} {\tiny \color{blue} ($2$\%)}} & {\color{black} 0.62} {\tiny \color{black} ($4$\%)} & {\color{black} 0.66} {\tiny \color{black} ($4$\%)} & {\color{black} 0.60} {\tiny \color{red} ($5$\%)} & \textbf{{\color{black} 0.72} {\tiny \color{blue} ($4$\%)}} \\
{\small\texttt{gpt-5-mini}} & {\color{black} 0.77} {\tiny \color{black} ($5$\%)} & {\color{black} 0.82} {\tiny \color{black} ($5$\%)} & {\color{black} 0.87} {\tiny \color{black} ($5$\%)} & \textbf{{\color{black} 0.91} {\tiny \color{blue} ($5$\%)}} & {\color{black} 0.64} {\tiny \color{black} ($3$\%)} & {\color{black} 0.84} {\tiny \color{blue} ($3$\%)} & {\color{black} 0.90} {\tiny \color{blue} ($3$\%)} & \textbf{{\color{black} 0.91} {\tiny \color{blue} ($3$\%)}} & {\color{black} 0.67} {\tiny \color{black} ($4$\%)} & {\color{black} 0.66} {\tiny \color{red} ($5$\%)} & \textbf{{\color{black} 0.82} {\tiny \color{blue} ($4$\%)}} & {\color{black} 0.67} {\tiny \color{black} ($6$\%)} \\
{\small\texttt{gpt-5}} & {\color{black} 0.72} {\tiny \color{black} ($5$\%)} & \textbf{{\color{black} 0.83} {\tiny \color{blue} ($5$\%)}} & {\color{black} 0.77} {\tiny \color{black} ($7$\%)} & {\color{black} 0.75} {\tiny \color{black} ($8$\%)} & {\color{black} 0.63} {\tiny \color{black} ($5$\%)} & {\color{black} 0.74} {\tiny \color{blue} ($6$\%)} & {\color{black} 0.78} {\tiny \color{blue} ($7$\%)} & \textbf{{\color{black} 0.94} {\tiny \color{blue} ($4$\%)}} & {\color{black} 0.78} {\tiny \color{black} ($5$\%)} & \textbf{{\color{black} 0.82} {\tiny \color{black} ($6$\%)}} & {\color{black} 0.74} {\tiny \color{red} ($7$\%)} & {\color{black} 0.76} {\tiny \color{red} ($7$\%)} \\
\midrule
{\small\texttt{Qwen3-8B-medium}} & {\color{black} 0.29} {\tiny \color{black} ($4$\%)} & {\color{black} 0.35} {\tiny \color{black} ($6$\%)} & \textbf{{\color{black} 0.38} {\tiny \color{black} ($7$\%)}} & {\color{black} 0.30} {\tiny \color{black} ($7$\%)} & {\color{black} 0.70} {\tiny \color{black} ($5$\%)} & {\color{black} 0.85} {\tiny \color{blue} ($5$\%)} & {\color{black} 0.94} {\tiny \color{blue} ($3$\%)} & \textbf{{\color{black} 0.94} {\tiny \color{blue} ($4$\%)}} & {\color{black} 0.54} {\tiny \color{black} ($5$\%)} & \textbf{{\color{black} 0.70} {\tiny \color{blue} ($6$\%)}} & {\color{black} 0.69} {\tiny \color{blue} ($7$\%)} & {\color{black} 0.64} {\tiny \color{black} ($8$\%)} \\
{\small\texttt{gpt-5-nano-medium}} & {\color{black} 0.48} {\tiny \color{black} ($6$\%)} & {\color{black} 0.75} {\tiny \color{blue} ($6$\%)} & {\color{black} 0.88} {\tiny \color{blue} ($5$\%)} & \textbf{{\color{black} 0.96} {\tiny \color{blue} ($3$\%)}} & {\color{black} 0.68} {\tiny \color{black} ($3$\%)} & {\color{black} 0.80} {\tiny \color{blue} ($4$\%)} & {\color{black} 0.89} {\tiny \color{blue} ($3$\%)} & \textbf{{\color{black} 0.95} {\tiny \color{blue} ($2$\%)}} & {\color{black} 0.54} {\tiny \color{black} ($5$\%)} & {\color{black} 0.59} {\tiny \color{black} ($6$\%)} & {\color{black} 0.61} {\tiny \color{black} ($6$\%)} & \textbf{{\color{black} 0.65} {\tiny \color{black} ($7$\%)}} \\
{\small\texttt{gpt-5-mini-medium}} & {\color{black} 0.58} {\tiny \color{black} ($6$\%)} & {\color{black} 0.84} {\tiny \color{blue} ($5$\%)} & \textbf{{\color{black} 0.86} {\tiny \color{blue} ($5$\%)}} & {\color{black} 0.79} {\tiny \color{blue} ($8$\%)} & {\color{black} 0.54} {\tiny \color{black} ($4$\%)} & {\color{black} 0.58} {\tiny \color{black} ($7$\%)} & \textbf{{\color{black} 0.67} {\tiny \color{blue} ($8$\%)}} & {\color{black} 0.66} {\tiny \color{black} ($9$\%)} & {\color{black} 0.66} {\tiny \color{black} ($5$\%)} & {\color{black} 0.65} {\tiny \color{red} ($7$\%)} & \textbf{{\color{black} 0.73} {\tiny \color{black} ($7$\%)}} & {\color{black} 0.71} {\tiny \color{black} ($7$\%)} \\
\midrule
{\small\texttt{explore-exploit}} & \multicolumn{4}{c||}{{\color{black} 0.97} {\tiny \color{black} ($0.2$\%)}} & \multicolumn{4}{c||}{{\color{black} 0.96} {\tiny \color{black} ($1$\%)}} & \multicolumn{4}{c}{{\color{black} 0.84} {\tiny \color{black} ($2$\%)}} \\
\bottomrule
\end{tabular}
\end{adjustbox}
\end{table*}
\begin{table}[t]
\centering
\begin{minipage}{0.8\linewidth}
\caption{Results of Table~\ref{tab:all_tasks_n_48_summary} along with their standard errors in parentheses. Text color denotes performance relative to no summary execution: \textbf{\color{blue}blue} indicates an improvement with non-overlapping error intervals; \textbf{black} indicates an improvement where intervals overlap; and \textbf{\color{red}red} indicates performance below the no summary execution.}
\label{tab:all_tasks_n_48_summary_ci}
\begin{adjustbox}{width=\linewidth}
\begin{tabular}{rc|c||c||c}
\toprule
\multicolumn{2}{c|}{Method}                            & \tsearch                                                   & \ttree                                                     & \tmaxsat                                                   \\ \midrule
\multicolumn{2}{c|}{\texttt{Qwen2.5-7B-Instruct}}      & {\color{black} 0.33} {\tiny \color{black} ($5$\%)}         & {\color{black} 0.71} {\tiny \color{black} ($3$\%)}         & {\color{black} 0.45} {\tiny \color{black} ($4$\%)}         \\ \midrule
\multirow{4}{*}{summary}             & $s=2$           & {\color{black} 0.45} {\tiny \color{blue} ($6$\%)}          & {\color{black} 0.78} {\tiny \color{black} ($4$\%)}         & {\color{black} 0.55} {\tiny \color{blue} ($4$\%)}          \\
                                     & $s=3$           & {\color{black} 0.43} {\tiny \color{black} ($5$\%)}         & {\color{black} 0.82} {\tiny \color{blue} ($4$\%)}          & {\color{black} 0.57} {\tiny \color{blue} ($4$\%)}          \\
                                     & $s=4$           & {\color{black} 0.52} {\tiny \color{blue} ($5$\%)}          & \textbf{{\color{black} 0.82} {\tiny \color{blue} ($4$\%)}} & \textbf{{\color{black} 0.66} {\tiny \color{blue} ($4$\%)}} \\
                                     & $s=6$           & \textbf{{\color{black} 0.62} {\tiny \color{blue} ($5$\%)}} & {\color{black} 0.80} {\tiny \color{blue} ($3$\%)}          & {\color{black} 0.60} {\tiny \color{blue} ($4$\%)}          \\ \hline
\multicolumn{2}{c|}{\texttt{explore-exploit baseline}} & {\color{black} 0.97} {\tiny \color{black} ($0.2$\%)}       & {\color{black} 0.96} {\tiny \color{black} ($1$\%)}         & {\color{black} 0.84} {\tiny \color{black} ($2$\%)}         \\ \bottomrule

\end{tabular}
\end{adjustbox}
\end{minipage}
\end{table}


\section{Details on Providing Models with Summary}
\label{app:summary}

We include our prompts for summary generation in Figures \ref{fig:prompt_summary_hill}, \ref{fig:prompt_summary_tree}, and \ref{fig:prompt_summary_sat}. These prompts rely on the query history and previously revealed context, providing no new information to the model.

\begin{figure}[ht]
\begin{tcolorbox}[
  enhanced, 
  colframe=red!75!black, 
  colback=gray!10, 
  coltitle=white, 
  colbacktitle=red!75!black, 
  width=\linewidth, 
  arc=2mm, 
  auto outer arc, 
  boxrule=0.5pt, 
  left=10pt, 
  right=10pt, 
  drop shadow={black!50!white},
  top=10pt, 
  bottom=10pt, 
  title=\textbf{Summarization prompt for \tsearch{} task}, 
  fontupper=\small,      
  fonttitle=\footnotesize\bfseries, 
  title code={\node[rounded corners, fill=blue!75!black, draw=none, text=white] at (frame.title) {\textbf{xxx}};}, 
  attach boxed title to top center={yshift=-2mm}, 
  boxed title style={sharp corners, size=small}, 
]

\#\#\# MISSION HAND-OFF \#\#\#

A previous agent spent \textcolor{blue}{$q$}  queries exploring the domain [0, 10] but was terminated. You are a NEW agent brought in to take over. You must find the global maximum.

**Data Collected by Previous Agent (Sorted by x):**

x=\textcolor{blue}{$x_1$}  $\to$ f(x)=\textcolor{blue}{$y_1$} 

x=\textcolor{blue}{$x_2$}  $\to$ f(x)=\textcolor{blue}{$y_2$} 

\textcolor{blue}{$\cdot$}

\textcolor{blue}{$\cdot$}

\textcolor{blue}{$\cdot$}

**Unexplored Gaps:**

* Interval [\textcolor{blue}{$l_1$} , \textcolor{blue}{$r_1$} ] (Gap size: \textcolor{blue}{$d_1$} ).

* Interval [\textcolor{blue}{$l_2$} , \textcolor{blue}{$r_2$} ] (Gap size: \textcolor{blue}{$d_2$} ).

\textcolor{blue}{$\cdot$}

\textcolor{blue}{$\cdot$}

\textcolor{blue}{$\cdot$}

**Instructions for the New Agent:**

1. Review the history. Did the previous agent get stuck in a local maximum?

2. Formulate a fresh plan to utilize your remaining budget.

You have \textcolor{blue}{$q'$} queries remaining. Review the Hand-off data above and output your next query.

\end{tcolorbox}
\caption{\textbf{The prompt used to summarize model interactions in the \tsearch{} task.} As the model can deduce this information from the query history and previously revealed context, this prompt provides nothing new. Blue text indicates dynamic variables specific to the problem instance or generated queries.}
\vspace{-5pt}
\label{fig:prompt_summary_hill}
\end{figure}
\begin{figure}[ht]
\begin{tcolorbox}[
  enhanced, 
  colframe=red!75!black, 
  colback=gray!10, 
  coltitle=white, 
  colbacktitle=red!75!black, 
  width=\linewidth, 
  arc=2mm, 
  auto outer arc, 
  boxrule=0.5pt, 
  left=10pt, 
  right=10pt, 
  drop shadow={black!50!white},
  top=10pt, 
  bottom=10pt, 
  title=\textbf{Summarization prompt for \ttree{} task}, 
  fontupper=\small,      
  fonttitle=\footnotesize\bfseries, 
  title code={\node[rounded corners, fill=blue!75!black, draw=none, text=white] at (frame.title) {\textbf{xxx}};}, 
  attach boxed title to top center={yshift=-2mm}, 
  boxed title style={sharp corners, size=small}, 
]
\#\#\# MISSION HAND-OFF \#\#\#

A previous agent spent \textcolor{blue}{$q$} queries exploring the graph but was terminated. You are a NEW agent brought in to take over. You must find the maximum value.

**Query History (in order):**

* Node \textcolor{blue}{$v_1$} $\to$ value \textcolor{blue}{$x_1$}

* Node \textcolor{blue}{$v_2$} $\to$ value \textcolor{blue}{$x_2$}

\textcolor{blue}{$\cdot$}

\textcolor{blue}{$\cdot$}

\textcolor{blue}{$\cdot$}

**Current Best Found:**

Node \textcolor{blue}{$v_3$} with Value \textcolor{blue}{$x_3$}.

**Next Nodes to Query (grouped by height):**
Each listed node is actionable (unknown, with a known neighbor). Node order is shuffled within each height.

* height 1: [\textcolor{blue}{$v_4$}, \textcolor{blue}{$v_5$}, \textcolor{blue}{$\cdots$}]

* height 2: [\textcolor{blue}{$v_6$}, \textcolor{blue}{$v_7$}, \textcolor{blue}{$\cdots$}]

\textcolor{blue}{$\cdot$}

\textcolor{blue}{$\cdot$}

\textcolor{blue}{$\cdot$}

**Instructions for the New Agent:**
Pick a node to query next. Do not blindly continue the last path; consider switching to a different height/frontier.
You have \textcolor{blue}{$q'$} queries remaining. Output your next query.
\end{tcolorbox}
\caption{\textbf{The prompt used to summarize model interactions in the \ttree{} task.} As the model can deduce this information from the query history and previously revealed context, this prompt provides nothing new. Blue text indicates dynamic variables specific to the problem instance or generated queries.}
\vspace{-5pt}
\label{fig:prompt_summary_tree}
\end{figure}
\begin{figure}[ht]
\begin{tcolorbox}[
  enhanced, 
  colframe=red!75!black, 
  colback=gray!10, 
  coltitle=white, 
  colbacktitle=red!75!black, 
  width=\linewidth, 
  arc=2mm, 
  auto outer arc, 
  boxrule=0.5pt, 
  left=10pt, 
  right=10pt, 
  drop shadow={black!50!white},
  top=10pt, 
  bottom=10pt, 
  title=\textbf{Summarization prompt for \tmaxsat{} task}, 
  fontupper=\small,      
  fonttitle=\footnotesize\bfseries, 
  title code={\node[rounded corners, fill=blue!75!black, draw=none, text=white] at (frame.title) {\textbf{xxx}};}, 
  attach boxed title to top center={yshift=-2mm}, 
  boxed title style={sharp corners, size=small}, 
]

\#\#\# MISSION HAND-OFF \#\#\#

A previous agent spent \textcolor{blue}{$t$} queries exploring the MAX-SAT black-box but was terminated. You are a NEW agent brought in to take over.

**Progress:** tried \textcolor{blue}{$t$} assignments. Best score so far: **\textcolor{blue}{$s^*$}**.

**Query History (in order):**

* step 1: score=\textcolor{blue}{$s_1$} | assignment=\textcolor{blue}{$bitstring_1$}

* step 2: score=\textcolor{blue}{$s_2$} | assignment=\textcolor{blue}{$bitstring_2$}

\textcolor{blue}{$\cdot$}

\textcolor{blue}{$\cdot$}

\textcolor{blue}{$\cdot$}

**Best Assignment Found (inner x0..xN-1 bitstring):**

\textcolor{blue}{$bitstring^*$}

**Coverage Summary:**

Counts below summarize how often each variable took value 0 or 1 across the queried assignments.

- Variables with fewest 0s observed: x\textcolor{blue}{${a_1}$} (was 0 in \textcolor{blue}{$c_{a_1}$}/\textcolor{blue}{$t$}), x\textcolor{blue}{${a_2}$} (was 0 in \textcolor{blue}{$c_{a_2}$}/\textcolor{blue}{$t$}), \textcolor{blue}{$\cdots$}

- Variables with fewest 1s observed: x\textcolor{blue}{${b_1}$} (was 1 in \textcolor{blue}{$c_{b_1}$}/\textcolor{blue}{$t$}), x\textcolor{blue}{${b_2}$} (was 1 in \textcolor{blue}{$c_{b_2}$}/\textcolor{blue}{$t$}), \textcolor{blue}{$\cdots$}

You have \textcolor{blue}{$q'$} queries remaining.

\end{tcolorbox}
\caption{\textbf{The prompt used to summarize model interactions in the \tmaxsat{} task.} As the model can deduce this information from the query history and previously revealed context, this prompt provides nothing new. Blue text indicates dynamic variables specific to the problem instance or generated queries.}
\vspace{-5pt}
\label{fig:prompt_summary_sat}
\end{figure}

\clearpage






\section{Baseline Details}

In this section, we provide implementation details for the explore–exploit baselines and study the sensitivity of their performance to parameter choices.

\label{app:baseline_details}
\subsection{Pseudocodes}
We present pseudocode for the \texttt{explore-exploit} baselines to ensure reproducibility. These algorithms share the same interface as the language models: they receive the query budget $N$, the task structure, and the query history as input, and output the next query.

\paragraph{\tsearch{}.} Algorithm~\ref{alg:baseline:hill} details the query selection strategy for \tsearch{}. The method partitions the budget into exploration and exploitation phases. During exploration, it uses a stratified sampling strategy. During exploitation, it defines a local window around the best point observed so far and samples uniformly within that region.

\begin{algorithm}[tb]
   \caption{\texttt{explore-exploit} Baseline for \tsearch{}}
   \label{alg:baseline:hill}
\begin{algorithmic}[1]
    \STATE {\bfseries Input:} Query budget $N$, Query history $\mathcal{H} = \{(x_i, y_i)\}_{i=1}^{t}$
    \STATE {\bfseries Output:} Next query point $x_{\text{next}}$
    \STATE {\bfseries Parameters:} Domain $\mathcal{D}=[L, R]$ where $L=0, R=10$, Exploration fraction $\alpha = 0.8$, Window fraction $\beta = 0.05$
    
    \STATE $t \gets |\mathcal{H}|$ \textit{\# Current query count}
    \STATE $N_{\text{explore}} \gets \lfloor \alpha N \rfloor$
    
    \IF{$t < N_{\text{explore}}$}
       \STATE \textit{\# Exploration: Stratified Sampling}
       \STATE $\Delta = (R-L) / N_{\text{explore}}$
       \STATE $l \gets L + t \cdot \Delta$
       \STATE $r \gets l + \Delta$
       \STATE Sample $x_{\text{next}} \sim \text{Uniform}(l, r)$
    \ELSE
       \STATE \textit{\# Exploitation: Local Search}
       \STATE $x^* \gets \operatorname*{argmax}_{x} \{y \mid (x, y) \in \mathcal{H}\}$ \textit{\# Best observed point}
       \STATE $w \gets (R-L) \cdot \beta$
       \STATE Sample $x_{\text{next}} \sim \text{Uniform}(x^* - w/2, x^* + w/2)$
    \ENDIF
    \STATE \textbf{return} $x_{\text{next}}$
\end{algorithmic}
\end{algorithm}

\paragraph{\ttree{}.} Algorithm~\ref{alg:baseline:tree} outlines the node selection process for the \ttree{} baseline. This method identifies frontier nodes (unvisited nodes with visited parents) and assigns a score to each candidate equal to the observed value of its parent. The algorithm then selects the next query via softmax sampling over these scores, controlled by a temperature parameter $\tau$.

\begin{algorithm}[tb]
   \caption{\texttt{explore-exploit} Baseline for \ttree{}}
   \label{alg:baseline:tree}
\begin{algorithmic}[1]
   \STATE {\bfseries Input:} Query budget $N$, Tree $T=(V, E)$ rooted at $r$, Query history $\mathcal{H}$
   \STATE {\bfseries Hyperparameter:} Temperature $\tau$
   \STATE {\bfseries Output:} Next node to query $u_{\text{next}}$
   
   \STATE \textit{\# Identify queried state}
   \STATE Let $V_{\text{obs}} \subseteq V$ be the set of nodes queried in $\mathcal{H}$
   \STATE Let $R: V_{\text{obs}} \to \mathbb{R}$ denote the observed values
   
   \STATE \textit{\# Identify and score candidates}
   \STATE Initialize candidate set $\mathcal{C} \leftarrow \emptyset$
   \FOR{each $u \in V \setminus V_{\text{obs}}$ such that its parent $p(u) \in V_{\text{obs}}$}
       \STATE \textit{\# Score candidate based on parent value}
       \STATE $S_u \leftarrow R(p(u))$
       \STATE $\mathcal{C} \leftarrow \mathcal{C} \cup \{u\}$
   \ENDFOR
   
   \STATE \textit{\# Weighted sampling}
   \STATE Define probability $P(u)$ for each $u \in \mathcal{C}$:
   \STATE \quad $P(u) \propto \exp(S_u / \tau)$
   
   \STATE Sample $u_{\text{next}} \sim P$
   \STATE \textbf{return} $u_{\text{next}}$
\end{algorithmic}
\end{algorithm}

\paragraph{\tmaxsat{}.} Algorithm~\ref{alg:baseline:maxsat} describes the variable assignment procedure for \tmaxsat{}. In the exploration phase, the algorithm generates variable assignments uniformly at random. In the exploitation phase, it identifies the highest-value assignment in the history and flips a single random bit to generate a local neighbor as the next query.

\begin{algorithm}[tb]
   \caption{\texttt{explore-exploit} Baseline for \tmaxsat{}}
   \label{alg:baseline:maxsat}
\begin{algorithmic}[1]
   \STATE {\bfseries Input:} Query budget $N$, Number of variables $n$, Exploration fraction $\alpha$
   \STATE {\bfseries Input:} Query history $\mathcal{H} = \{(x_i, y_i)\}_{i=1}^{t}$
   \STATE {\bfseries Output:} Next query assignment $x_{\text{next}}$

   \STATE $t \gets |\mathcal{H}|$ \textit{\# Current query count}
   \STATE $N_{\text{explore}} \gets \lfloor \alpha \cdot N \rfloor$

   \IF{$t < N_{\text{explore}}$}
       \STATE \textit{\# Exploration Phase: Random Sampling}
       \STATE Sample $x_{\text{next}} \sim \text{Uniform}(\{0, 1\}^n)$
   \ELSE
       \STATE \textit{\# Exploitation Phase: Local Search}
       \STATE $x^* \leftarrow \operatorname*{argmax}_{(x', y') \in \mathcal{H}} y'$ \textit{\# Best found assignment}

       \STATE Sample variable index $k \sim \text{Uniform}(0, n-1)$
       \STATE Generate $x_{\text{next}}$ by flipping the $k$-th bit of $x^*$
   \ENDIF

   \STATE \textbf{return} $x_{\text{next}}$
\end{algorithmic}
\end{algorithm}

\subsection{Parameters}
\label{app:baseline_param}
In this section, we evaluate the sensitivity of the \texttt{explore-exploit} baselines to different parameter settings across all tasks. We conduct 500 evaluation episodes for the baselines using the same problem instances described in \S\ref{subsec:eval}. These results are summarized in Figure~\ref{fig:baseline_param}.

\begin{figure*}[t]
  \centering
  \begin{subfigure}[t]{0.32\textwidth}
    \centering
    \includegraphics[width=\linewidth]{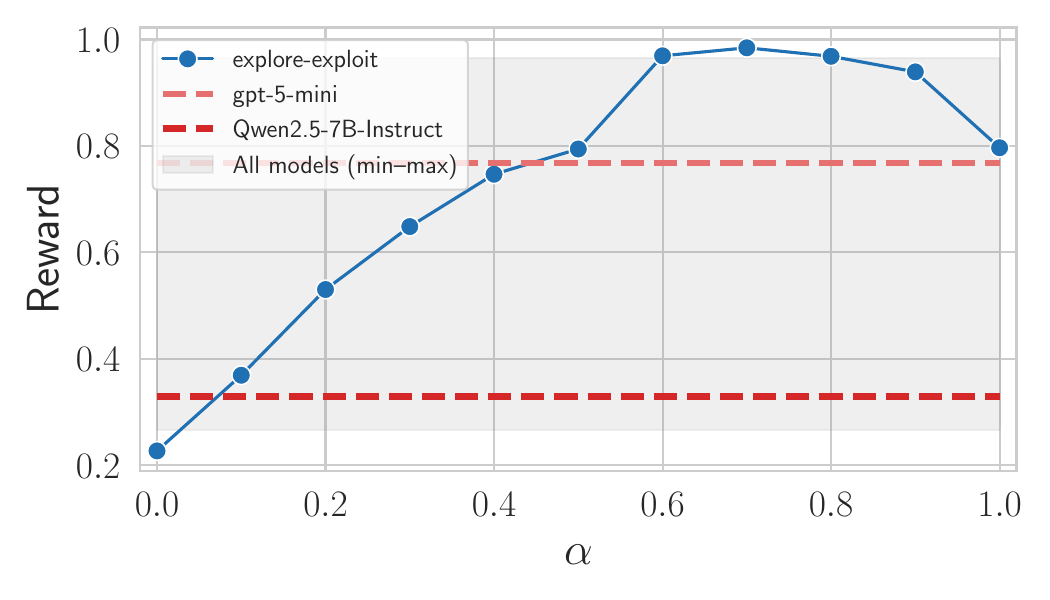}
    \caption{\tsearch}
  \end{subfigure}\hfill
  \begin{subfigure}[t]{0.32\textwidth}
    \centering
    \includegraphics[width=\linewidth]{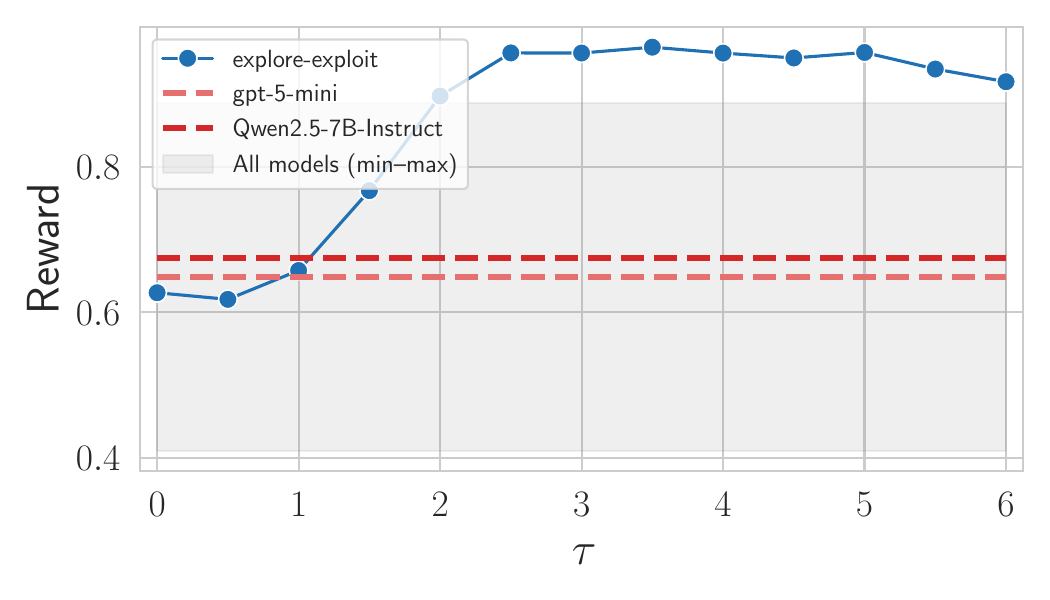}
    \caption{\ttree}
  \end{subfigure}\hfill
  \begin{subfigure}[t]{0.32\textwidth}
    \centering
    \includegraphics[width=\linewidth]{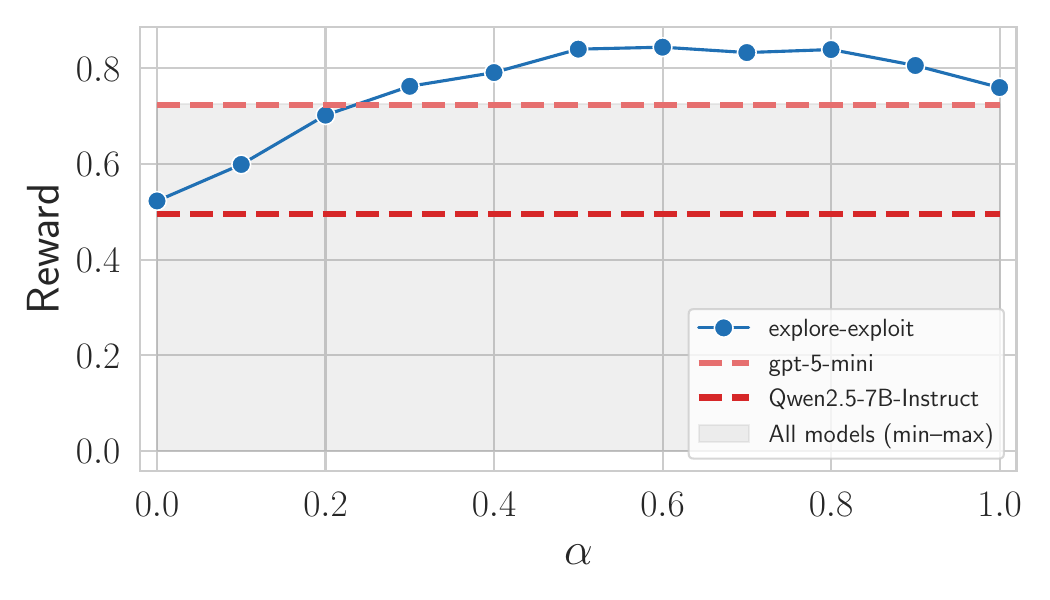}
    \caption{\tmaxsat}
  \end{subfigure}\hfill
    \caption{Performance of the \texttt{explore-exploit} baselines with a query budget of $N=48$ across all tasks under varying parameter settings. The baselines consistently outperform the language models across a wide range of parameter values.}
  \label{fig:baseline_param}
\end{figure*}

\paragraph{\tsearch{}.}
We vary the exploration fraction $\alpha \in \{0, 0.1, \dots, 1\}$, which represents the proportion of queries allocated to stratified random sampling. Throughout the paper, we set $\alpha=0.8$. Figure~\ref{fig:baseline_param}~(a) illustrates the reward as a function of $\alpha$ given a budget of $N=48$. The results indicate that settings with $0.5 \le \alpha \le 1$ outperform the LMs, and any $\alpha \ge 0.1$ yields results superior to the \texttt{Qwen2.5-7B-Instruct} model. This demonstrates that our baseline, despite its simplicity, achieves consistent performance across parameters and can be tuned to maximize results.

\paragraph{\ttree{}.}
We examine the effect of the soft-max temperature $\tau$, which controls the next-node selection. We vary $\tau \in \{0, 0.5, \dots, 6\}$, with $\tau=4$ used as the default in the paper. Figure~\ref{fig:baseline_param}~(b) plots the reward against $\tau$ for a budget of $N=48$. We observe that all variations, including the greedy strategy ($\tau = 0$), perform comparably to \texttt{Qwen2.5-7B-Instruct}, while values in the range $2 \le \tau \le 6$ surpass the performance of the best LM.

\paragraph{\tmaxsat{}.}
We vary the fraction $\alpha \in \{0, 0.1, \dots, 1\}$, representing the proportion of queries where we randomly sample an assignment. We utilized $\alpha=0.5$ for the main experiments. Figure~\ref{fig:baseline_param}~(c) displays the reward for each $\alpha$ with a budget of $N=48$. All variations achieve better performance than \texttt{Qwen2.5-7B-Instruct}, while settings where $\alpha \ge 0.3$ outperform all evaluated LMs.

\section{Theoretical Results}
\label{app:proofs}

We will first restate Theorem~\ref{thm:cxalpha} and prove it. Then for tasks \tmaxsat{} and \ttree{}, we provide theoretical proof that parallelization has no gain over an optimal single-thread strategy.

\begingroup
    \renewcommand{\thetheorem}{\ref{thm:cxalpha}}
    \begin{theorem}
    \thmcxalpha
    \end{theorem}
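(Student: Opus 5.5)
The plan is to change variables so that the parameters $c$ and $\alpha$ drop out of the shape of the condition, leaving a single monotone function of one variable. Set $y := q(x/p) = c\,(x/p)^\alpha$. Then $q(x) = c\,x^\alpha = p^\alpha y$, and \eqref{eq:p-success-condition} becomes
\[
1-(1-y)^p > p^\alpha y, \qquad\text{equivalently}\qquad h(y) := \frac{1-(1-y)^p}{y} > p^\alpha .
\]
For $x \in (0,1]$ we have $y \in (0, c\,p^{-\alpha}] \subseteq (0,1)$. The map $x \mapsto y$ is a strictly increasing continuous bijection, with inverse $x = p\,(y/c)^{1/\alpha}$. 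So it suffices to show that the set of $y\in(0,1)$ with $h(y)>p^\alpha$ is an initial interval $(0,y^*)$, and then transport it back to $x$.

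\textbf{Key step: $h$ is strictly decreasing on $(0,1]$.} Let $g(y) = 1-(1-y)^p$. Then $g(0)=0$ and $g''(y) = -p(p-1)(1-y)^{p-2} < 0$ on $(0,1)$ because $p>1$, so $g$ is strictly concave. For a strictly concave function vanishing at $0$, the secant slope $g(y)/y$ is strictly decreasing. Concretely, for $0<y_1<y_2$, write $y_1 = (y_1/y_2)\,y_2 + (1-y_1/y_2)\cdot 0$ and apply strict concavity to get $g(y_1) > (y_1/y_2)\,g(y_2)$.

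Next I will record the endpoint values. As $y\to 0^+$, $h(y)\to g'(0) = p$, and $p > p^\alpha$ because $p>1$ and $0<\alpha<1$. At the other end, $h(1) = 1 < p^\alpha$. Since $h$ is continuous and strictly decreasing, there is a unique $y^*\in(0,1)$ with $h(y^*) = p^\alpha$. Moreover $h(y)>p^\alpha$ exactly for $y<y^*$, and $h(y) \le p^\alpha$ for $y \ge y^*$.

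\textbf{Transporting back.} Define $x^* := p\,(y^*/c)^{1/\alpha} > 0$ and $v_p := \min(1, x^*)$, so that $v_p \in (0,1]$. By monotonicity of $x\mapsto y$, condition \eqref{eq:p-success-condition} holds for $x < x^*$ and fails for $x\ge x^*$.
\begin{itemize}
\item If $x^*\le 1$, then $v_p = x^*$ and the statement follows directly.
\item If $x^*>1$, then $v_p = 1$. The condition holds on all of $(0,1)$, and the only budget $x \ge v_p$ in the domain is $x=1$.
\end{itemize}

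The main obstacle I expect is the second case: there the condition actually holds at $x=1$ as well. The clean resolution is to read the budget domain as $x \in (0,1]$ with ``fails for $x\ge v_p$'' meaning fails for $x \in [v_p, 1]$ whenever $v_p<1$. The alternative is to take $v_p := x^*$ and note that failure for $x\ge x^*$ holds on the extended domain where $q$ is defined. I will state this convention explicitly.

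The monotonicity of $h$ is the only real content of the proof; everything else is continuity and the change of variables.
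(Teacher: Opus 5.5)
Your proposal is correct and follows essentially the same route as the paper: the substitution $y=q(x/p)$ reduces the condition to $1-(1-y)^p>p^\alpha y$, strict concavity of $1-(1-y)^p$ gives a single crossing $y^*$, and you map back via $p(y^*/c)^{1/\alpha}$. Your decreasing secant slope $g(y)/y$ is equivalent to the paper's argument via $h(0)=0$, $h'(0)=p-p^\alpha>0$, $h''<0$ for $h(y)=1-(1-y)^p-p^\alpha y$. The paper's proof passes silently over two points you handle explicitly: the endpoint check $g(1)=1<p^\alpha$, which guarantees the crossing lies in $(0,1)$, and the fact that $p(y^*/c)^{1/\alpha}$ can exceed $1$, so that the literal claim $v_p\in(0,1]$ needs a convention, since the paper simply sets $v_p=p(y^*_p/c)^{1/\alpha}$.
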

\endgroup

\begin{proof}[Proof of Theorem~\ref{thm:cxalpha}]
Substituting $q(x) = cx^\alpha$, we observe that $q(x) = p^\alpha q(x/p)$. By defining $y := q(x/p)$, the condition in (\ref{eq:p-success-condition}) can be rewritten in terms of the single sub-trace probability $y$:
\begin{equation*}
    1 - (1 - y)^p > p^\alpha y.
\end{equation*}
We define the function $h(y)$ as the difference between the parallel success probability (LHS) and the single success probability (RHS):
\begin{equation*}
    h(y) = 1 - (1 - y)^p - p^\alpha y.
\end{equation*}
We analyze the shape of $h(y)$ to determine where it is positive. First, we evaluate the function at the origin as $h(0) = 1 - (1)^p - 0 = 0$. Next, we examine the first derivative with respect to $y$: $h'(y) = p(1 - y)^{p-1} - p^\alpha$.
Evaluating the derivative at $y=0$:
\begin{equation*}
    h'(0) = p - p^\alpha.
\end{equation*}
Since $\alpha < 1$ and $p > 1$, we have $p > p^\alpha$. Consequently, $h'(0) > 0$. This implies that for small $y > 0$, $h(y)$ is positive, and parallelization is beneficial. Finally, we examine the second derivative to determine concavity:
\begin{equation*}
    h''(y) = -p(p-1)(1 - y)^{p-2}.
\end{equation*}
Since $p \ge 2$, $h''(y) < 0$ for all $y \in [0,1)$. This indicates that $h(y)$ is strictly concave. Because $h(y)$ starts at 0 with a positive slope ($h'(0) > 0$) and is strictly concave, the derivative $h'(y)$ is strictly decreasing. The function $h(y)$ will rise to a peak and then decrease. It follows that there exists a unique crossing point $y^*_p > 0$ such that $h(y) > 0$ for $y < y^*_p$ and $h(y) \le 0$ for $y \ge y^*_p$.

Mapping $y$ back to the budget $x$ via $y = c(x/p)^\alpha$, we define the threshold budget:
\begin{equation*}
    v_p = p \left( \frac{y^*_p}{c} \right)^{1/\alpha}.
\end{equation*}
Thus, parallelization is beneficial for $x < v_p$ and not beneficial for $x \ge v_p$.

For $p=2$, solving $h(y^*_2)=1 - (1 - y^*_2)^2 - 2^\alpha y^*_2=0$, we get $y^*_2 = 2 - 2^\alpha$. Then
\begin{equation*}
    v_2 = 2 \left( \frac{2 - 2^\alpha}{c} \right)^{1/\alpha}.
\end{equation*}
Using the Taylor expansion for $\alpha \approx 1$, we get $v_2 \approx 4\ln 2 \cdot \frac {1 - \alpha} {c}$.
\end{proof}


\paragraph{\ttree{}: Parallel exploration is a subset of single-thread exploration}
We demonstrate that any set of nodes explored by parallel threads is a valid search space for a single thread with an equivalent total budget.
Let there be $p$ parallel threads, where each thread $i$ has a budget of $N/p$.
Let $\mathcal{S}_i$ be the set of nodes explored by thread $i$.
By definition, each $\mathcal{S}_i$ forms a connected component that includes the root node.
The parallel method returns the best node found across all threads:
\[
r_{\text{parallel}}^* \;=\; \max_{i \in [p]} \; \max_{v \in \mathcal{S}_i} r(v).
\]
Consider a single-thread agent with a total budget $N$.
Let this agent explore the union of the sets visited by the parallel threads, defined as $\mathcal{S} = \bigcup_{i \in [p]} \mathcal{S}_i$.
The total size of this set satisfies $|\mathcal{S}| \leq \sum |\mathcal{S}_i| \leq N$.
Because every subset $\mathcal{S}_i$ is connected and shares the same root, their union $\mathcal{S}$ is also connected and rooted.
Therefore, a valid trajectory exists for a single thread to visit every node in $\mathcal{S}$ within budget $N$, and thus:
\[
\max_{v \in \mathcal{S}} r(v) \;\geq\; \max_{i \in [p]} \; \max_{v \in \mathcal{S}_i} r(v) \;=\; r_{\text{parallel}}^* .
\]
Thus, the parallel strategy does not achieve a better reward than the optimal single-thread strategy.

\paragraph{\tmaxsat{}: Queries of parallel threads can be asked in a single-thread interaction}
Consider $p$ parallel threads, each querying $N/p$ assignments.
Let $\mathcal{X}$ denote the set of all possible truth assignments.
For each thread $i \in [p]$, let $S_i \subseteq \mathcal{X}$ denote the set of queried assignments, with $|S_i| \le N/p$.
Thread $i$ achieves final reward $\max_{\mathbf{x} \in S_i} r(\mathbf{x})$.
We merge the solutions by taking the best across threads, achieving
\[
r_{\text{parallel}}^* \;=\; \max_{i \in [p]} \; \max_{\mathbf{x} \in S_i} r(\mathbf{x}).
\]
Now consider a single-thread interaction with budget $N$ that queries the union of all assignments queried by the parallel threads,
\[
\mathcal{S} \;=\; \bigcup_{i \in [p]} S_i,
\qquad \text{so that } |\mathcal{S}| \leq N.
\]
This single-thread episode achieves reward at least $\max_{\mathbf{x} \in \mathcal{S}} r(\mathbf{x})$, and therefore
\[
\max_{\mathbf{x} \in \mathcal{S}} r(\mathbf{x}) \;\geq\; r_{\text{parallel}}^* .
\]
Thus, selecting the best outcome among $p$ independent threads with total budget $N$ is upper-bounded by what a single-thread interaction with budget $N$ could achieve by querying the same set of assignments. This finishes the proof.




\end{document}